\documentclass[a4paper,11pt]{article}
\usepackage[utf8]{inputenc}
\usepackage[T1]{fontenc}    % use 8-bit T1 fonts
\usepackage{amsthm, amsfonts, amssymb, amsmath, enumitem}
\usepackage{mathtools, natbib}
\mathtoolsset{showonlyrefs}
\usepackage{graphicx}
\usepackage{subcaption}
\usepackage{booktabs} % for professional tables
\usepackage[colorlinks,citecolor=blue,urlcolor=blue]{hyperref}
\usepackage{geometry, xcolor}
\usepackage{fancyhdr}
\usepackage{microtype}
\usepackage{url}            % simple URL typesetting
\usepackage{nicefrac}       % compact symbols for 1/2, etc.
\usepackage{wrapfig}
\usepackage{algorithm}
\usepackage{algorithmic}
\usepackage{siunitx}
\usepackage{capt-of}
\usepackage{varwidth}
\usepackage{comment}
\usepackage{lineno}
\usepackage{amsbsy}
\usepackage{multicol,longtable,multirow,makecell}
\usepackage{bm}
\usepackage{textcomp}
\usepackage[capitalize,noabbrev]{cleveref}
\usepackage{psfrag,epsf}
\usepackage{caption}

\setlist[itemize]{topsep=0pt, itemsep=4pt, parsep=0pt, partopsep=0pt}

\theoremstyle{plain}
\newtheorem{theorem}{Theorem}[section]

\newtheorem{lemma}[theorem]{Lemma}
\newtheorem{corollary}[theorem]{Corollary}

\theoremstyle{definition}

\newtheorem{assumption}[theorem]{Assumption}

\theoremstyle{remark}
\newtheorem{remark}[theorem]{Remark}

\newcommand{\ignore}[1]{}

\numberwithin{equation}{section}

\def\IE{\mathbb{E}}
\def\R{\mathbb{R}}

\def\N{\mathbb{N}}
\def\bq{\mathbf{Q}}

\newcommand{\thr}{L_{s,n}}

\newcommand{\Abf}{\mathbf{A}}

\newcommand{\IP}{\mathbb{P}}
\newcommand{\jcal}{\mathcal{J}}

\newcommand{\cov}{\mathrm{Cov}}
\newcommand{\var}{\mathrm{Var}}

\newcommand{\ldtz}{\texttt{LD2Z}}
\newcommand{\pdtz}{\texttt{PD2Z}-$\nu$}

\definecolor{mygreen}{rgb}{0.09,0.62,0.45} 
\definecolor{myorange}{rgb}{0.83,0.37,0.10}
\definecolor{myblue}{rgb}{0.06,0.45,0.69}
\definecolor{myred}{rgb}{0.94,0.20,0.13}
\definecolor{mypurple}{RGB}{76,0,153}

\title{Sharp asymptotic theory for Q-learning with \ldtz\ learning rate and its generalization}

\author{
  Soham Bonnerjee$^{1}$ \and 
  Zhipeng Lou$^{2}$ \and 
  Wei Biao Wu$^{1}$ 
  \\[1em]
  \small $^{1}$Department of Statistics, University of Chicago \\
  \small $^{2}$Department of Mathematics, University of California, San Diego \\
}
\date{}  

\begin{document}
\maketitle
\begin{abstract}
Despite the sustained popularity of Q-learning as a practical tool for policy determination, a majority of relevant theoretical literature deals with either constant ($\eta_{t}\equiv \eta$) or polynomially decaying ($\eta_{t} = \eta t^{-\alpha}$) learning schedules. However, it is well known that these choices suffer from either persistent bias or prohibitively slow convergence. In contrast, the recently proposed linear decay to zero (\texttt{LD2Z}: $\eta_{t,n}=\eta(1-t/n)$) schedule has shown appreciable empirical performance, but its theoretical and statistical properties remain largely unexplored, especially in the Q-learning setting. We address this gap in the literature by first considering a general class of power-law decay to zero (\texttt{PD2Z}-$\nu$: $\eta_{t,n}=\eta(1-t/n)^{\nu}$). Proceeding step-by-step, we present a sharp non-asymptotic error bound for Q-learning with \texttt{PD2Z}-$\nu$ schedule, which then is used to derive a central limit theory for a new \textit{tail} Polyak-Ruppert averaging estimator. Finally, we also provide a novel time-uniform Gaussian approximation (also known as \textit{strong invariance principle}) for the partial sum process of Q-learning iterates, which facilitates bootstrap-based inference. All our theoretical results are complemented by extensive numerical experiments. Beyond being new theoretical and statistical contributions to the Q-learning literature, our results definitively establish that \texttt{LD2Z} and in general \texttt{PD2Z}-$\nu$ achieve a best-of-both-worlds property: they inherit the rapid decay from initialization (characteristic of constant step-sizes) while retaining the asymptotic convergence guarantees (characteristic of polynomially decaying schedules). This dual advantage explains the empirical success of \texttt{LD2Z} while providing practical guidelines for inference through our results. 
\end{abstract}

\section{Introduction}
With the advent of generative AI models and its continuing ascent towards ubiquity, the use of reinforcement learning (RL) to train multiple agents to undertake complex sequential decisions seamlessly, has occupied a central role in modern learning theory. In that regard, Q-learning \citep{watkins1989learning, watkins1992q, sutton-barto2018, chi2025statistical}, represents a classical, yet practically relevant model-free approach to estimate the optimal policy of a Markov decision process (MDP). Research on the statistical properties of the Q-learning algorithm has been extensive; in particular, treatment of asymptotic and non-asymptotic error bounds have ranged from techniques particular to synchronous Q-learning \citep{jaakkola1993convergence, tsitsiklis1994asynchronous, szepesvari1997asymptotic, shi2022pessimistic}, to the more modern lens of stochastic approximation (SA) algorithms \citep{chen2020finite, qu2020finite, chen2021lyapunov}. Specifically, these latter works cast the Q-learning algorithm as an SA targeting the Bellman equation, and thereby, more general tools can be employed to derive finer theoretical results on these algorithms. This direction also has been, arguably, adequately explored with central limit theory, and functional central-limit-theorems, appearing in \citep{xie2022statistical, Li2023statistical, li2023online, pandaonline}. A special case of Q-learning with a singleton action space, is the Temporal-difference (TD) learning, for which Berry-Esseen theorems and subsequent Gaussian approximations and bootstrap strategies have been discussed \citep{wu2024statistical,wu2025uncertainty,samsonov2025statistical}.

A very important, but often ignored aspect in these theoretical studies is the choice of step-sizes or learning rates. Indeed, it has become widely common in statistical inference literature to analyze either the constant learning rates  or the polynomially decaying learning rate. Such choices are not without their own advantages; the constant learning rate enjoys experimental evidence of a much faster convergence, however a proof similar to \citet{listochastic} shows that the Q-learning with constant learning rate will converge to a stationary distribution around the optimal $\bq^\star$; in other words, the asymptotic bias is non-negligible, and requires further jackknifing to ensure convergence. On the other hand, the polynomially decaying learning rate is theoretically attractive; the aforementioned results establishing Gaussian approximations and other inferential results extensively use a polynomially decaying learning rate. This choice has been guided by theory of stochastic gradient descent at least since \citep{ruppert1988efficient,polyak1992acceleration}, however its theoretical optimality often masks its excruciatingly slow convergence, as also observed by \citep{zhang2024constant}. These criticisms have been echoed by the broad stochastic optimization community, leading to a recent proposal of linearly decaying to zero (\ldtz) learning rate $\eta_{t,n}=\eta(1-t/n)$ \citep{bert19, touvron2023llama}. Despite a requirement of pre-specified number of schedules, this step-size choice achieves a balance between the rapid initial dissipation of initialization effects provided by a constant learning rate and the asymptotic convergence guarantees of a polynomially decaying learning rate. In this article, we establish a number of sharp asymptotic results for the Q-learning algorithm with this particular learning rate schedule. To the best of our knowledge, our results are the first-of-its-kind theory using this step-size for Q-learning; the theoretical results and subsequent numerical exercise definitively showcases the effectiveness and superiority of this learning rate over the ones usually employed in theoretical analyses.

\subsection{Main contributions}
The paper develops a comprehensive theoretical framework for Q-learning with power-law decay to zero (\pdtz) learning schedules. Our results advance the theoretical understanding of Q-learning and offer new insights into its statistical properties and practical performance. The main contributions are summarized below: 
\begin{itemize}
    \item \textbf{Non-asymptotic concentration inequality.} Under standard regularity conditions, we derive explicit non-asymptotic bounds on the $p$-th moments of the Q-learning iterates for any fixed $p \geq 2$. In particular, our $\mathcal{L}_2$ bounds can be summarized as follows.
    \begin{theorem}[Theorem \ref{lemma:mse_triangular}, Informal]
        If $\bq_n$ denotes the final Q-learning iterate with the \pdtz\ step-size, then it follows that 
        \[ \|\bq_n - \bq^\star \|_2 \lesssim \exp(-cn) |\bq_0 - \bq^\star| +  n^{-\frac{\nu}{2(\nu+1)}}, \]
        where $\bq^\star$ is the long term reward corresponding to the optimal policy $\pi^\star$.
    \end{theorem}
    These bounds serve as fundamental tools underpinning the empirical success of Q-learning with \pdtz\ schedules compared to their polynomially decaying counterparts (Section~\ref{Section_Concentration_Inequality}). In particular, the exponential decay from the initialization is empirically observed in Figure \ref{fig:1}, further validating our theory.
    \item \textbf{Distribution theory.} We propose a novel averaging scheme that aggregates a batch of the most recent Q-learning iterates, referred to as the~\textit{tail Polyak-Ruppert averaging estimator}, and establish its asymptotic normality (Section~\ref{se:clt}). This is, to the best of our knowledge, a novel contribution in stochastic approximation literature. For the \pdtz\ learning schedules, our simulation (in \S\ref{se:simu-2}) also establishes the superiority of tail PR averaged estimator over the usual PR averaged ones. 
    \item \textbf{Strong invariance principle.} We establish strong invariance principles with covariance matching for the partial sum processes of Q-learning with both \pdtz\ and polynomially decaying learning schedules. This is accomplished via a novel construction of the coupling Gaussian process, enabling a more refined probabilistic analysis of the stochastic dynamics (Section~\ref{se:strong-inv}). 
\end{itemize}

\subsection{Related literature}
Linearly decaying-to-zero (\ldtz) learning-rate schedules have recently gained substantial traction in applications characterized by highly non-smooth or complex optimization landscapes, including state-space models \citep{touvron2023llama}, large language models \citep{bert19, liu2019roberta, bergsma2025}, and vision transformers \citep{Wu2024-ds}. A number of studies further advocate for the so-called “knee schedule” \citep{howard18, hoffmann22, iyerjmlr2023, defazio2023optimal, alex24, bergsma2025}, which employs an initial large learning rate (a “warm start”) followed by a \ldtz\ phase. Despite their empirical popularity, the asymptotic properties of \ldtz\ schedules remain poorly understood—even in relatively simple convex problems. To the best of our knowledge, \citet{or2025} provides the first theoretical analysis of \ldtz\ schedules in strongly convex stochastic gradient descent; but their results are not directly applicable to Q-learning, and they only establish an $\mathcal{L}_2$ control of the terminal iterates $\bq_{n,n}$. This gap in theory presents a significant obstacle to principled statistical inference and uncertainty quantification, motivating the need for a more systematic analysis.

\begin{figure}[H]
    \centering
    \begin{minipage}{0.48\linewidth}
        \centering
        \includegraphics[width=\linewidth]{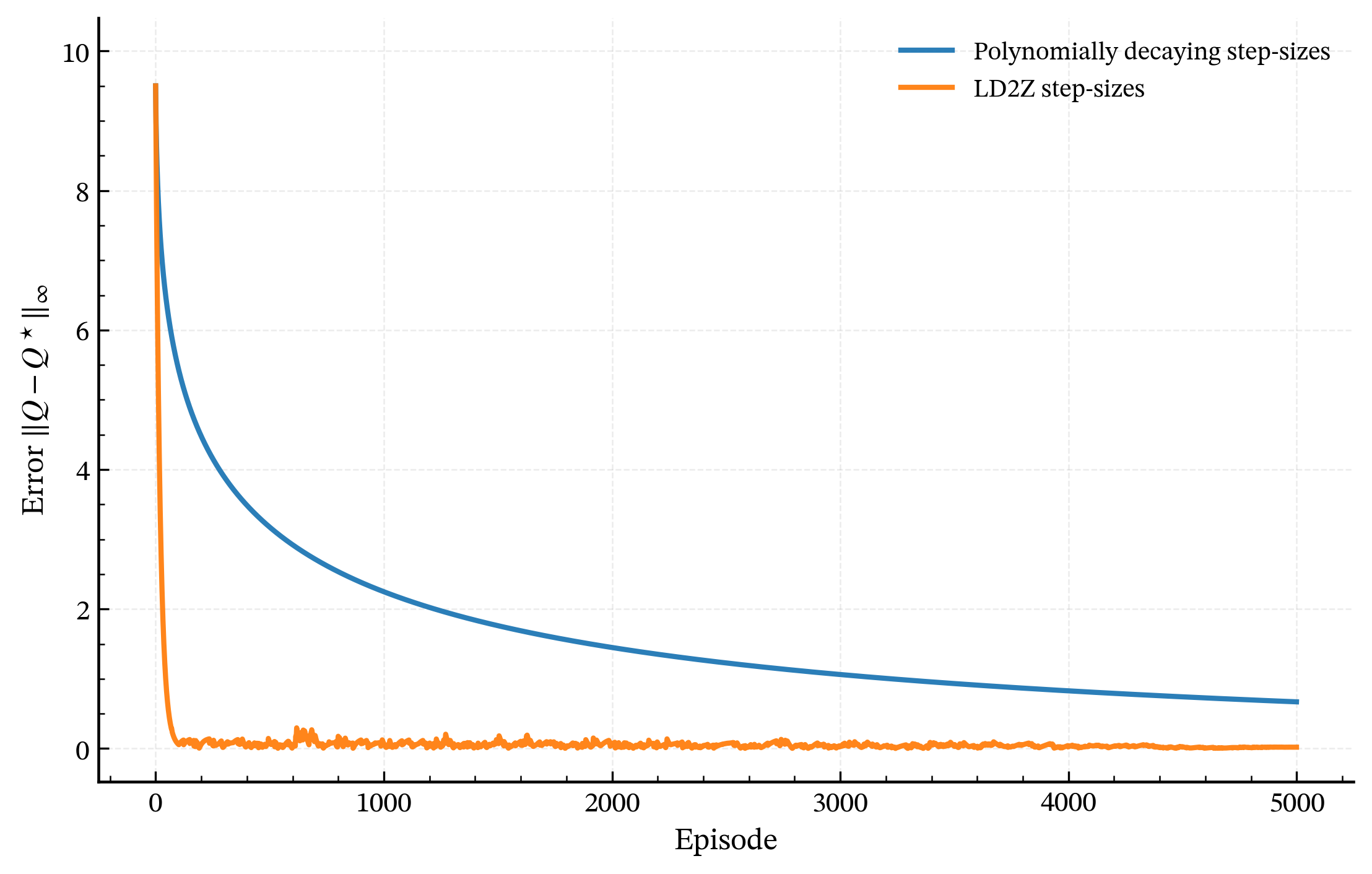}
       % \label{fig:1}
    \end{minipage}\hfill
    \begin{minipage}{0.48\linewidth}
        \centering
        \includegraphics[width=\linewidth]{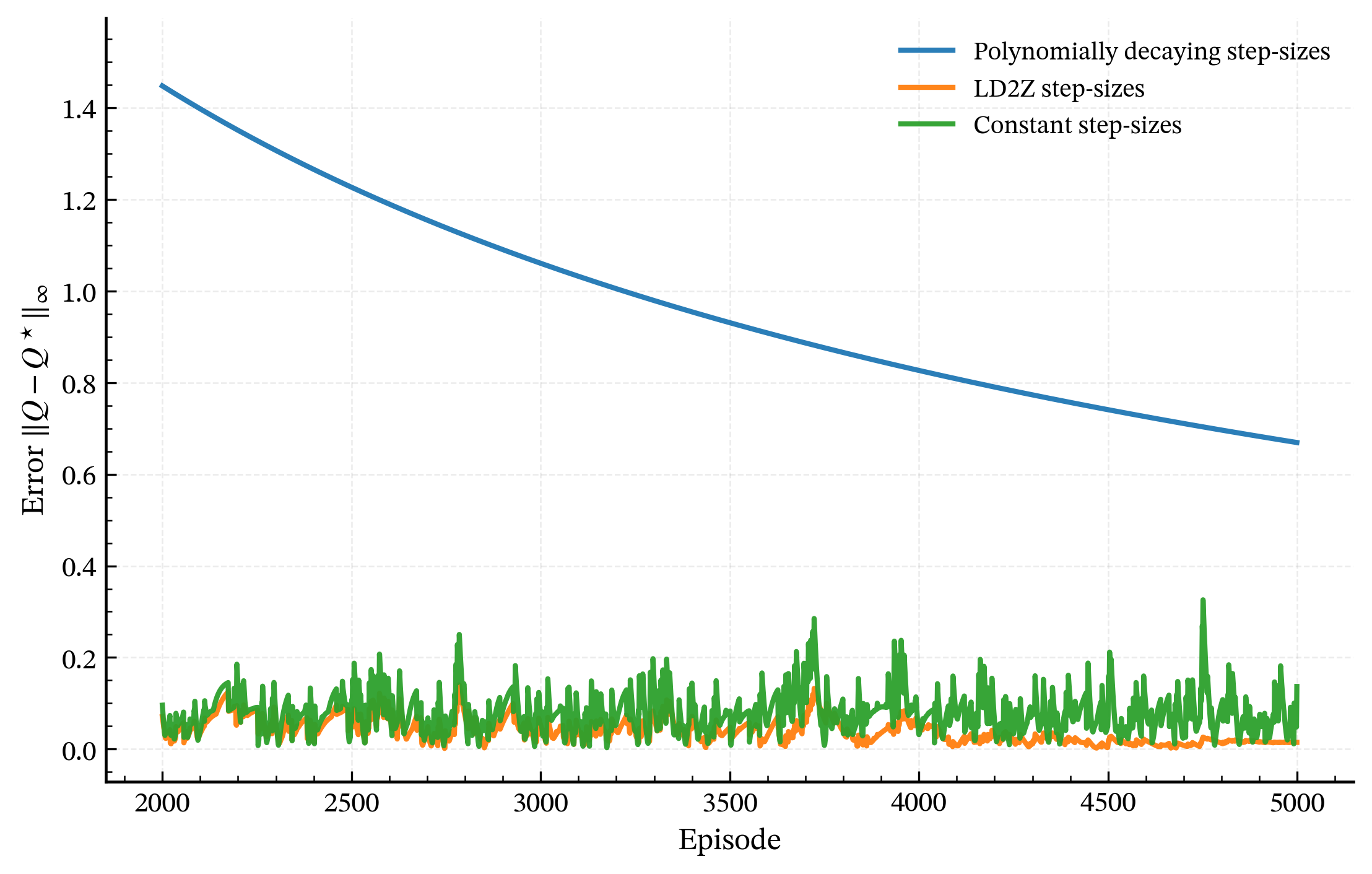}
    \end{minipage}
    \caption{{Comparison between polynomially decaying ($\eta_t=0.05t^{-0.65}$), \ldtz ($\eta_t=0.05(1-t/n)$) and Constant ($\eta_t=0.05$) step-sizes}}
        \label{fig:1}
\end{figure}

\subsection{Notation}
In this paper, we denote the set $\{1, \ldots, n\}$ by $[n]$. The $d$-dimensional Euclidean space is $\R^d$, with $\R^d_{>0}$ the positive orthant. For a vector $a \in \R^d$, $|a|$ denotes its Euclidean norm. The set of $m \times n$ real matrices is denoted by $\R^{m \times n}$, and correspondingly, for $M \in \R^{m\times n}$, $|M|_F$ denotes its Frobenius norm. For a random vector $X \in \R^d$, we denote $\|X\|:=\sqrt{\IE[|X|^2]}$. We also denote in-probability convergence, and stochastic boundedness by $o_{\IP}$ and $O_{\IP}$ respectively. The weak convergence is denoted by $\overset{w}{\to}$. We write $a_n \lesssim b_n$ if $a_n \le C b_n$ for some constant $C > 0$, and $a_n \asymp b_n$ if $C_1 b_n \le a_n \le C_2 b_n$ for some constants $C_1, C_2 > 0$.

\section{Preliminaries of Q-learning}
%\subsection{Markov Decision Process}
Subsequently, we consider a discounted, infinite horizon Markov Decision Process (MDP) $\mathcal{M}=(\mathcal{S}, \mathcal{A}, \gamma, \mathbb{P}, R)$. Here $\mathcal{S} = \{1, \ldots, S\}$ is the \textit{finite} state space, $\mathcal{A}$ is the finite action space, and $\gamma \in$ $(0,1)$ is the discount factor. For simplicity, we define $D=$ $|\mathcal{S} \times \mathcal{A}|$. We use $\mathcal{P}: \mathcal{S} \times \mathcal{A} \rightarrow \Delta(\mathcal{S})$ to represent the probability transition kernel with $\mathcal{P} (s^{\prime} | s, a)$ the probability of transiting to $s^{\prime}$ from a given state-action pair $(s, a) \in$ $\mathcal{S} \times \mathcal{A}$. Let $R: \mathcal{S} \times \mathcal{A} \rightarrow[0, \infty)$ stand for the random reward, i.e., $R(s, a)$ is the immediate reward collected in state $s \in \mathcal{S}$ when action $a \in \mathcal{A}$ is taken. We represent the distribution $\mathbb{P}(s^{\prime} | s, a)$ using quantile transformation: there exists a measurable function $N(s, a, U)$, where $U \sim \operatorname{Uniform}(0,1)$, such that
\begin{align*}
    \mathbb{P}(N(s, a, U) = s^{\prime}) = \mathbb{P} (s^{\prime} | s, a) \text { for all } s, s^{\prime} \in \mathcal{S} \text { and } a \in \mathcal{A}. 
\end{align*}
Similarly, we can write the reward function as $R(s, a, \mathcal{U})$, where $\mathcal{U} \sim \operatorname{Uniform}(0, 1)$. Let $\pi$ be a policy, meaning that for each $s \in \mathcal{S}, \pi(\cdot | s)$ is a probability distribution over actions $a \in \mathcal{A}$. Define the expected long-term reward
\begin{align*}
    \bq^\pi(s, a)=\mathbb{E}^\pi\left\{\sum_{i=0}^{\infty} \gamma^i R\left(s_t, a_t, \mathcal{U}_t\right) \mid s_0=s, a_0=a\right\}. 
\end{align*}
Let $\mathbf{Q}^*=\left(\bq_{s a}^*\right)_{(s, a) \in \mathcal{S} \times \mathcal{A}}$ where $\bq_{s a}^*=\max _\pi \bq^\pi(s, a)$ is the maximizer.

%\subsection{Q-learning}
To estimate $\mathbf{Q}^*$, the $Q$-function vector $\mathbf{Q}_t \in \mathbb{R}^{D}$ is updated by (e.g.,~\citet{watkins1992q})
\begin{equation}
\label{eq:Q-iterate}
    \mathbf{Q}_{t,n} = (1 - \eta_{t,n}) \mathbf{Q}_{t-1,n}+\eta_{t,n} \widehat{B}_t \mathbf{Q}_{t-1, n}, \ \bq_{0,n}=\bq_0,
\end{equation}
where $\widehat{B}_t$ is the empirical Bellman operator given by
\begin{equation}
\label{eq:bellmanop}
    (\widehat{B}_t \mathbf{Q})(s, a) = R (s, a, V_{t,n})+\gamma \max _{a^{\prime} \in \mathcal{A}} \bq (N(s, a, U_t), a^{\prime}), \ \bq \in \R^D.
\end{equation}
Here $U_t, \mathcal{U}_t, t \in \mathbb{Z}$, are i.i.d.~$\operatorname{Uniform}(0, 1)$ random variables. With a slight abuse of notations, define the matrix $\mathcal{P}\in \R^{D \times |\mathcal{S}|}$ with rows $\mathcal{P}_{(s,a), \cdot}=(\mathcal{P}(s'|s,a))_{s'\in \mathcal{S}}^\top$. If $\Pi^\pi \in \mathbb{R}^{S \times D}$ is a projection matrix associated with a given policy $\pi$: 
\begin{align*}
    \mathbf{\Pi}^\pi=\operatorname{diag}\left\{\pi(\cdot | 1)^{\top}, \cdots, \pi(\cdot | S)^{\top}\right\}, 
\end{align*}
then we define the Markov transition kernel $H^{\pi}=\mathcal{P}\Pi^{\pi}\in \R^{D \times D}$.

\section{Q-learning dynamics with \ldtz\ schedule and beyond}
Before introducing our key results on Q-learning with the \ldtz\ schedule and its generalization, it is crucial to state the regularity conditions that guarantee the validity of the theoretical excursion. In particular, we require the following assumptions.
\begin{assumption}
\label{ass:pmoment}
    It holds that $\mathbb{E}|R(s, a)|^{p}<\infty$ for all $(s, a) \in \mathcal{S} \times \mathcal{A}$, for some $p\geq 2$.
\end{assumption}
\begin{assumption}
\label{ass:lipschitz}
    There exist $\pi^* \in \Pi^*$ and a positive constant $L < \infty$ such that for any function estimator $\boldsymbol{Q} \in \mathbb{R}^D$, we have 
    \begin{align*}
        |(H^{\pi_Q}-H^{\pi^*})(\boldsymbol{Q}-\boldsymbol{Q}^*)|_{\infty} \leq L|\boldsymbol{Q}-\boldsymbol{Q}^*|_{\infty}^2, 
    \end{align*}
    where $\pi_Q(s):=\arg \max _{a \in \mathcal{A}} Q(s, a)$ is the greedy policy w.r.t. $Q$.
\end{assumption}
Assumption \ref{ass:pmoment} establishes a uniform control over the $p$-th moment of the reward function. In contrast, often the statistical literature on this topic imposes a severely restrictive condition of a bounded reward, usually constrained in the interval $[0,1]$ or $[-1,1]$ \citep{li2021sample, shi2022pessimistic, pandaonline, li2024q, zhang2024constant, chen2025non}. We also remark that Assumption \ref{ass:pmoment} is objectively weaker than the corresponding bounded fourth moment assumption in \cite{Li2023statistical}. On the other hand, conditions of the type of Assumption \ref{ass:lipschitz} were first introduced in \cite{puterman1979convergence}, and have since been employed in Q-learning literature \citep{Li2023statistical, xia2024instance} as a means to establish a \textit{local attraction basin} around the optimal policy $\pi^\star$. {Interestingly, this can also be derived from a mild margin condition, as is described in Appendix \S\ref{se:margin}}. The corresponding versions of Assumptions \ref{ass:pmoment}-\ref{ass:lipschitz} is pervasive in non-asymptotic analysis of SA algorithms \citep{ruppert1988efficient, polyak1992acceleration, borkar2008stochastic, bottou-nocedal, chen2020statistical, zhu-2023, wei2023weighted}.

\subsection{Non-asymptotic error bound}
\label{Section_Concentration_Inequality}
Before establishing inferential results involving \ldtz\ schedules, it is crucial to ascertain their non-asymptotic convergence properties. On the other hand, it is conceivable to broaden our view to the class of learning schedules $\eta_{t,n}=\eta(1-t/n)^\nu, \nu>0$, of which \ldtz\ is but a special case with $\nu=1$. This perspective raises another pertinent question; due to the lack of previous theoretical justifications, it is somewhat unclear as to why the linear decay-to-zero is less effective, in any sense, compared to some iteration-dependent choice of $\nu$. We address both the questions through our first result. For brevity, we subsequently refer to the schedule 
$\eta_{t,n}=\eta(1-t/n)^\nu$ as the Power-law decay to zero (abbreviated as \pdtz).

Let the Bellman noise be given by
\begin{equation}
\label{eq:bellmannoise}
    Z_t(s, a) = \widehat{B}_t (\bq^*)(s, a)-B (\bq^*)(s, a),
\end{equation}
which, via (\ref{eq:bellmanop})  immediately implies that $Z_t$ are i.i.d.~$D$-dimensional random vectors. Our first theorem is presented below.
\begin{theorem}\label{lemma:mse_triangular}
    Consider the $Q$-learning iterates in (\ref{eq:Q-iterate}). Suppose for some $p\geq 2$, the Bellman noise satisfies $\Theta_p:= \IE[|Z_t|^p]<\infty$. Then, with the \pdtz\ learning schedule with $\eta>0$, $\nu\geq 1/p$ satisfying 
    \[ \eta < \frac{2(1-\gamma)}{(1-\gamma)^2 + 2(p-1)\gamma^2}, \] it holds that 
    \begin{align} \label{eq:mse_triangular}
       \|\bq_{t,n}-\bq^\star\|_p \leq &\exp\big(- c_3 \eta t (1-n^{-1})^\nu\big)|\bq_0 - \bq^\star| \cr 
       &+ 2\sqrt{p-1}\Theta_p^{1/p} \begin{cases}
             \sqrt{C_1(c_3, \nu, 2)}\sqrt{\eta_{t,n}}, & t \leq  n - \frac{2}{(c_3\eta)^{\frac{1}{\nu+1}}} n^{\frac{\nu}{\nu+1}},\\
            \sqrt{C_2(c_3, \nu, 2)}  n^{-\frac{\nu}{2(\nu+1)}}, & t >  n - \frac{2}{(c_3\eta)^{\frac{1}{\nu+1}}} n^{\frac{\nu}{\nu+1}},
       \end{cases}
    \end{align}
    where $c_3 = \frac{\eta c_1 - \eta^2 c_2}{2\eta}$ with $c_1=2(1-\gamma), c_2= (1-\gamma)^2 + 2(p-1)\gamma^2$, and $C_1(c, \nu, p)$, $C_2(c, \nu, p)$ are positive constants given by
    \begin{align*}
      C_1(c, \nu, p)&:=\frac{2^{\nu(p+1)} (1+2^{-p} \Gamma(\nu p +1))}{c}, \text{ and, } \cr
      C_2(c, \nu, p)&:=\eta^p  4^{\nu p}  \exp(\frac{2^{\nu+1}}{\nu+1})(\nu+1)^{(p-1){\frac{\nu}{\nu+1}}}(c \eta)^{-\frac{\nu p+1}{\nu+1}} \Gamma(\frac{\nu p +1}{\nu+1}).
\end{align*}
\end{theorem}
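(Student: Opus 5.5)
We write $\Delta_t:=\bq_{t,n}-\bq^\star$ and use $B\bq^\star=\bq^\star$. The recursion \eqref{eq:Q-iterate} then gives
\[
\Delta_t=(1-\eta_{t,n})\Delta_{t-1}+\eta_{t,n}\big(\widehat B_t\bq_{t-1,n}-\widehat B_t\bq^\star\big)+\eta_{t,n}Z_t .
\]
The middle term is not a martingale difference, because $\widehat B_t$ is random. Our plan is therefore a one-step contraction in $\mathcal L_p$ rather than a linearization. Let $\mathcal F_{t-1}$ be the past. The vector $W_t:=(1-\eta_{t,n})\Delta_{t-1}+\eta_{t,n}(\widehat B_t\bq_{t-1,n}-\widehat B_t\bq^\star)+\eta_{t,n}Z_t$ splits into its conditional mean
\[
M_t=(1-\eta_{t,n})\Delta_{t-1}+\eta_{t,n}(B\bq_{t-1,n}-B\bq^\star)
\]
and a conditionally centered fluctuation $\eta_{t,n}\xi_t$. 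Since $B$ is a $\gamma$-contraction, $|M_t|\le(1-\eta_{t,n}(1-\gamma))|\Delta_{t-1}|$. The max-operator is $1$-Lipschitz, so the fluctuation satisfies $|\xi_t|\le 2\gamma|\Delta_{t-1}|+|Z_t|$ up to centering; more precisely, its $\Delta$-dependent part has conditional second moment at most $\gamma^2|\Delta_{t-1}|^2$ after centering. One could treat the argmax switch through Assumption \ref{ass:lipschitz}, but for this bound the Lipschitz property suffices.

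The key step is the sharp inequality for conditionally centered perturbations,
\[
\|M+X\|_p^2\le\|M\|_p^2+(p-1)\|X\|_p^2 ,
\]
valid when $\mathbb E[X\mid M]=0$ (uniform smoothness of $\mathcal L_p$, Rio's inequality). Applying it with the two noise pieces handled by Minkowski gives
\[
\|\Delta_t\|_p^2\le\Big[(1-\eta_{t,n}(1-\gamma))^2+2(p-1)\gamma^2\eta_{t,n}^2\Big]\|\Delta_{t-1}\|_p^2+2(p-1)\eta_{t,n}^2\Theta_p^{2/p}.
\]
The bracket equals $1-c_1\eta_{t,n}+c_2\eta_{t,n}^2$. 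Because $\eta_{t,n}\le\eta$ and $\eta<c_1/c_2$, it is at most $1-2c_3\eta_{t,n}\le \exp(-2c_3\eta_{t,n})$; this fixes the constraint on $\eta$ and the definition of $c_3$. Unrolling the recursion yields
\[
\|\Delta_t\|_p^2\le e^{-2c_3\sum_{s\le t}\eta_{s,n}}|\Delta_0|^2+2(p-1)\Theta_p^{2/p}\sum_{s\le t}\eta_{s,n}^2e^{-2c_3\sum_{s<r\le t}\eta_{r,n}}.
\]
Taking square roots gives the factor $2\sqrt{p-1}\Theta_p^{1/p}$ up to constants. For the initialization term we use $\sum_{s\le t}\eta_{s,n}\ge \eta t(1-t/n)^\nu$, or a comparable bound. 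Some care is needed here: the stated exponent is $\eta t(1-n^{-1})^\nu$, so we must check whether the intended bound uses $\eta_{1,n}$ or monotonicity in the other direction, and adapt the constant if necessary.

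The main obstacle is the deterministic estimate of
\[
S_t:=\sum_{s\le t}\eta_{s,n}^2\exp\Big(-2c_3\sum_{r=s+1}^t\eta_{r,n}\Big),
\]
with the decaying step $\eta(1-s/n)^\nu$. We will compare the sums with integrals: set $x=1-s/n$, so that $\sum_{r=s+1}^t\eta_{r,n}\approx \eta n\,(x_s^{\nu+1}-x_t^{\nu+1})/(\nu+1)$. In the regime where $n x_t^{\nu+1}$ is large, i.e. $t\le n-2(c_3\eta)^{-1/(\nu+1)}n^{\nu/(\nu+1)}$, we split the sum. Indices with $x_s\le 2x_t$ give a geometric-type sum bounded by $2^{2\nu}\eta_{t,n}/c_3$. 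Indices with $x_s>2x_t$ carry a factor $e^{-c\eta n x_s^{\nu+1}}$, and substituting $u=x_s$ produces a Gamma integral bounded by $\Gamma(\nu p+1)$ times the same scale; this accounts for $C_1$. In the terminal window, $x_t\lesssim n^{-1/(\nu+1)}$, and the whole sum is dominated by the integral
\[
\int_0^\infty \eta^2 u^{2\nu}\exp\big(-c\eta n u^{\nu+1}/(\nu+1)\big)\,n\,du\asymp n^{-\nu/(\nu+1)}\,\Gamma\Big(\tfrac{2\nu+1}{\nu+1}\Big),
\]
together with an $\exp(2^{\nu+1}/(\nu+1))$ loss from the boundary. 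This gives $C_2$. Finally, the condition $\nu\ge1/p$ is used to keep discretization errors in the Riemann-sum comparison controlled.
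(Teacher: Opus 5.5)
Your plan follows the paper's proof. It uses a one-step $\mathcal L_p$ recursion from a Rio/Pinelis-type inequality (the paper's ``Proposition 4''), controls the factor $1-c_1\eta_{t,n}+c_2\eta_{t,n}^2$ through $c_3$, unrolls, and applies Lemma~\ref{lem:bad-integration} in the two regimes. Your conditional-mean/fluctuation split is a cleaner route to the recursion than the paper's one-sided element-wise comparison via $(H^{\pi_{t-1,n}}-H^{\pi^\star})\bq^\star\le 0$. Splitting the sum at $1-s/n=2(1-t/n)$ is a workable substitute for the paper's linearized exponent $\sum_{j=s+1}^t\eta_{j,n}\ge\eta(1-\frac{t+1}{n})^\nu(t-s)$ combined with Lemma~\ref{lem:calc-lem}. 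Two points remain.

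\emph{The initialization factor.} Your suspicion is justified: the inequality as stated is false. The paper obtains it from $\sum_{j\le t}\eta_{j,n}\ge\eta t(1-n^{-1})^\nu$, which runs the wrong way because $\eta_{1,n}$ is the largest step. Take a single state--action pair with deterministic reward. Then $\Delta_t=\prod_{j\le t}(1-(1-\gamma)\eta_{j,n})\Delta_0$, and $\log(1-x)\ge -x-x^2$ on $[0,\frac12]$ gives
\[
|\Delta_n|\ge\exp\Big\{-\tfrac{(1-\gamma)\eta n}{\nu+1}-\tfrac{(1-\gamma)^2\eta^2 n}{2\nu+1}\Big\}|\Delta_0|.
\]
Since $c_3=(1-\gamma)-\eta c_2/2$, for small $\eta$ and large $n$ this exceeds $\exp\{-c_3\eta n(1-n^{-1})^\nu\}|\Delta_0|$. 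With mean-zero reward noise, take $|\Delta_0|$ large at fixed $n$ and use $\|\Delta_n\|_p\ge|\IE\Delta_n|$ to get the same contradiction.

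What your recursion does give is $\exp(-c_3\sum_{j\le t}\eta_{j,n})|\Delta_0|$. Comparing with the integral yields $\sum_{j\le t}\eta_{j,n}\ge\frac{\eta}{\nu+1}(1-n^{-1})^\nu(t\wedge(n-1))$. This is the stated bound with $c_3$ replaced by $c_3/(\nu+1)$, consistent with the $e^{-c_3\eta t/2}$ of Corollary~\ref{corol:2}. Avoid your fallback $\eta t(1-t/n)^\nu$: it vanishes at $t=n$ and loses the exponential forgetting of the final iterate.

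\emph{The one-step inequality (shared with the paper).} Your bound $|M_t|\le(1-\eta_{t,n}(1-\gamma))|\Delta_{t-1}|$ and the bound on the centred term rely on the $\gamma$-contraction of $B$, which holds in $|\cdot|_\infty$. But $\|M+X\|_p^2\le\|M\|_p^2+(p-1)\|X\|_p^2$ with constant $p-1$ is a Hilbert-norm inequality, and in $|\cdot|_2$ that contraction fails. Take one action (so $H^\pi=\mathcal P$) and let every state move to state~$1$. Then $\gamma\mathcal P=\gamma\mathbf 1e_1^\top$ has symmetric part with top eigenvalue $\gamma(1+\sqrt S)/2$. So once $\gamma(1+\sqrt S)>2$, the map $\Delta\mapsto(I-\eta(I-\gamma\mathcal P))\Delta$ expands $|\cdot|_2$ in some direction.

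Separately, for $p>2$ a conditional second-moment bound does not control $\|\cdot\|_p$. The centred term is only bounded by $2\gamma|\Delta_{t-1}|_\infty$, which inflates $c_2$. A rigorous version needs a norm that is both contractive for $B$ (up to constants) and $2$-smooth. Examples are $|\cdot|_q$ with $\gamma D^{1/q}<1$, or a Moreau-type smoothing of $|\cdot|_\infty^2$. Either choice changes $c_2$, $c_3$ and the noise constant by $D$-dependent factors. So as written, neither your argument nor the paper's delivers the explicit constants of the statement.
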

Theorem \ref{lemma:mse_triangular} is proved in Appendix \S\ref{se:app-A}.
{
\begin{remark}[A sample complexity version of Theorem \ref{lemma:mse_triangular}]
Let $N(\epsilon, \gamma, \nu)$ denotes the minimal number of samples required to ensure $\|Q_{n, n} - Q^{*}\|_{q} \leq \epsilon$. In the worst case, $\Theta_{p}^{1/p} \lesssim \frac{1}{1 - \gamma}$. Therefore, from Theorem \ref{lemma:mse_triangular} , we obtain the following~\textit{iteration complexity}:
    \begin{align*}
        N(\epsilon, \gamma, \epsilon) = O\left(\frac{1}{(1 - \gamma)^{2}} \log \left(\frac{|Q_{0} - Q^{*}|}{\epsilon}\right) + \frac{1}{(1 - \gamma)^{4 + 2/\nu} \epsilon^{2(\nu + 1)/\nu}}\right).  
    \end{align*}
    We note that for large $\nu$, the rate approximately matches that derived by \cite{li2024q}. The gap for a finite value of $\nu$ can also be explained by the much weaker assumption that we work with. For example, we do not assume the rewards to be bounded, and therefore, are only constrained to work with finite $p$-th moments of the Bellman noise. In contrast, \cite{li2024q} assumes the rewards $\in [0,1]$, which makes the Bellman noise sequences bounded and allows them to use finer tools from subGaussian theory, such as Freedman's inequality (in contrast to the Burkholder's inequality which is sharp in absence of boundedness). It is conceivable that in presence of stricter assumption, the worst-case sample complexity can be further improved, but that is non-trivial.
\end{remark}}
The non-asymptotic bound in (\ref{eq:mse_triangular}) is convenient since it covers a general class of learning schedules with an explicitly quantified bound. Crucial is also the two distinct regimes with two different rates. We pause for a moment to parse the bound carefully. In the \textit{transient regime} with $t\leq n - C_{\eta, \nu} n^{\frac{\nu}{\nu+1}}$, the $\mathcal{L}_2$ error decays with $\eta_{t,n}$. In particular, for any choice of $\nu>0$, $\eta_{t,n} \asymp 1$ as long as $t\leq nc$ for any fixed constant $c\in (0,1)$. Therefore, in the early regime, the class of \pdtz\ learning schedules behave like a constant learning rate while decaying polynomially. The corresponding $\mathcal{L}_2$ error displays a diminishing bias, but this constant learning rate is a crucial key to its much faster convergence, pushing it towards its \textit{convergence regime} where $t > n - C_{\eta, \nu}  n^{\frac{\nu}{\nu+1}}$. In this regime the Q-learning chain has converged with an error-rate $n^{-\frac{\nu}{2(\nu+1)}},$ enabling an early stopping at any steps in 
$[n - C_{\eta, \nu}  n^{\frac{\nu}{\nu+1}}, n]$. 

The afore-mentioned fast decay, followed by a stabilization in the latter phase, is exemplified empirically in Figure \ref{fig:1}. For a more detailed insight into this early phase decay, it is instrumental to specify one immediate corollary to Theorem \ref{lemma:mse_triangular}.
\begin{corollary}\label{corol:1}
    Under the assumptions of Theorem \ref{lemma:mse_triangular}, it follows that for all $t\in [n]$,
    \begin{align}
        \|\bq_{t,n}-\bq^\star\|_p \leq \exp\big(- c_3 \eta (1-n^{-1})^\nu t\big)|\bq_0 - \bq^\star| + O_{c_3, \nu}(\sqrt{\eta}_{t,n} \vee n^{-\frac{\nu}{2(\nu+1)}}), \nonumber
    \end{align}
    where $O_{c_3, \nu}$ hides constants pertaining to $c_3$ and $\nu$. We note that at $t=n$, the right hand side is minimized at $\nu \asymp \log_2 \log n.$
\end{corollary}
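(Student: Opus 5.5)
The corollary follows directly from Theorem \ref{lemma:mse_triangular}; the plan is to merge its two regimes into a single bound and then treat the final claim about the optimal $\nu$ separately. The first term of (\ref{eq:mse_triangular}), $\exp(-c_3\eta t(1-n^{-1})^\nu)|\bq_0-\bq^\star|$, already matches the corollary. For the stochastic term, in the transient regime $t\le n-2(c_3\eta)^{-1/(\nu+1)}n^{\nu/(\nu+1)}$ the bound is $2\sqrt{p-1}\,\Theta_p^{1/p}\sqrt{C_1(c_3,\nu,2)}\sqrt{\eta_{t,n}}$. In the convergence regime it is $2\sqrt{p-1}\,\Theta_p^{1/p}\sqrt{C_2(c_3,\nu,2)}\,n^{-\nu/(2(\nu+1))}$. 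Both are at most
\[
\max\bigl(\sqrt{C_1(c_3,\nu,2)},\sqrt{C_2(c_3,\nu,2)}\bigr)\,2\sqrt{p-1}\,\Theta_p^{1/p}\,\bigl(\sqrt{\eta_{t,n}}\vee n^{-\frac{\nu}{2(\nu+1)}}\bigr).
\]
This gives the stated $O_{c_3,\nu}$ bound for every $t\in[n]$. Here $p$, $\eta$ and $\Theta_p$ are treated as fixed, and only the dependence on $c_3,\nu$ is tracked.

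For the last sentence, I set $t=n$. Then $\eta_{n,n}=0$, so the bound becomes $\exp(-c_3\eta(1-n^{-1})^\nu n)|\bq_0-\bq^\star| + K(\nu)\,n^{-\nu/(2(\nu+1))}$ with $K(\nu)\asymp\sqrt{C_2(c_3,\nu,2)}$. The bias term is exponentially small in $n$ whenever $\nu\ll n$, because $(1-1/n)^\nu\ge 1-\nu/n$. Hence only the variance term matters.

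The key step is to track how $K(\nu)$ grows in $\nu$. From the definition of $C_2$ with $p=2$, the factors $4^{2\nu}$ and $\exp(2^{\nu+1}/(\nu+1))$ dominate. The powers of $(\nu+1)$ and $(c\eta)$ and the factor $\Gamma((2\nu+1)/(\nu+1))$ stay bounded or grow only polynomially. So $\log K(\nu)\approx 2^{\nu}/(\nu+1)+O(\nu)$. On the other side,
\[
\log n^{-\nu/(2(\nu+1))}=-\tfrac12\log n+\tfrac{\log n}{2(\nu+1)},
\]
so the excess over the ideal rate $n^{-1/2}$ is $\log n/(2(\nu+1))$. The exponent to minimize is therefore $f(\nu)=\frac{2^\nu}{\nu+1}+\frac{\log n}{2(\nu+1)}$, up to lower-order terms. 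Balancing $2^\nu\asymp\log n$ gives $\nu\asymp\log_2\log n$. Checking $f'(\nu)=0$ gives $2^\nu(\nu+1)\ln 2\approx 2^\nu+\tfrac12\log n$, i.e.\ $2^\nu\asymp\log n/\nu$. Since this changes $\nu$ only by an additive $O(\log\log\log n)$, the minimizer is still $\nu\asymp\log_2\log n$. This is the step I expect to be the main obstacle, because it needs care about which factors in $C_2$ genuinely dominate. I would also verify that at this $\nu$ the resulting rate $n^{-1/2}\exp(O(\log n/\log\log n))$ beats any fixed $\nu$. I would also check that the condition $\nu\ge 1/p$ and the smallness of the bias term are consistent with $\nu\asymp\log_2\log n$.
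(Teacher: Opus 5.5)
Your proposal is correct and follows the paper's route. The paper reads the bound off Theorem~\ref{lemma:mse_triangular} by absorbing the two regime-specific constants $\sqrt{C_1(c_3,\nu,2)}$ and $\sqrt{C_2(c_3,\nu,2)}$ into one $O_{c_3,\nu}$ constant, and it justifies $\nu\asymp\log_2\log n$ in its remark after the corollary by balancing the growth of $C_2(c_3,\nu,2)$ (driven by $\exp(2^{\nu+1}/(\nu+1))$) against the decay of $n^{-\frac{\nu}{2(\nu+1)}}$; your derivative computation ($2^\nu\asymp\log n/\nu$, hence still $\nu\asymp\log_2\log n$) just makes that heuristic explicit.
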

Corollary \ref{corol:1} has some interesting connotations, which we will discuss in successive remarks. To initiate our first discussion, it is illuminating to recall the following well-known result for the often-used polynomially decaying learning schedules.
\begin{theorem}[\cite{chen2020finite}, Corollary 4.1.2; \cite{Li2023statistical}, Theorem E.1]\label{thm:poly}
    Consider the Q-learning iterates in (\ref{eq:Q-iterate}) with the polynomially decaying step-size $\eta_{t} \asymp t^{-\alpha}$, $\alpha \in (1/2,1)$. Then,  it follows that for all $t\in[n]$,
    \[ \|\bq_{t}-\bq^\star\|_p \lesssim  \exp(- c t^{1-\alpha})|\bq_0 - \bq^\star| + O(t^{-\alpha/2}). \]
\end{theorem}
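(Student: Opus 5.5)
The plan is to run the same one-step moment recursion that underlies Theorem \ref{lemma:mse_triangular}, now with the polynomial schedule $\eta_t \asymp t^{-\alpha}$ in place of \pdtz.

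\textbf{Error recursion.} Write $\Delta_t := \bq_t - \bq^\star$. Since $B\bq^\star = \bq^\star$, the update (\ref{eq:Q-iterate}) gives
\[
\Delta_t = (1-\eta_t)\Delta_{t-1} + \eta_t\big(\widehat B_t \bq_{t-1} - \widehat B_t \bq^\star\big) + \eta_t Z_t ,
\]
with $Z_t$ the Bellman noise (\ref{eq:bellmannoise}).

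\textbf{One-step bound.} I would split the drift $\widehat B_t \bq_{t-1} - \widehat B_t \bq^\star$ into its conditional mean $B\bq_{t-1} - B\bq^\star$ and a centred remainder. The conditional mean is a $\gamma$-contraction in $|\cdot|_\infty$. The remainder, together with $Z_t$, is a martingale difference given $\mathcal F_{t-1}$, and its size is controlled by $\gamma|\Delta_{t-1}|_\infty$.

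I would then apply the $2$-uniform smoothness of $\mathcal L_p$: if $\IE[Y\mid X]=0$, then
\[
\|X+Y\|_p^2 \le \|X\|_p^2 + (p-1)\|Y\|_p^2 .
\]
This yields a recursion of the form
\[
\|\Delta_t\|_p^2 \le (1 - c_1\eta_t + c_2\eta_t^2)\|\Delta_{t-1}\|_p^2 + C(p-1)\eta_t^2\,\Theta_p^{2/p},
\]
with $c_1,c_2$ as in Theorem \ref{lemma:mse_triangular}. Because $\eta_t\to 0$, there is a $t_0$ beyond which $1-c_1\eta_t+c_2\eta_t^2 \le 1-c\eta_t$. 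The finitely many earlier steps only cost a constant factor.

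\textbf{Norm mismatch.} The contraction holds in $|\cdot|_\infty$, while the statement is in the Euclidean norm. I would either work throughout with $|\cdot|_\infty$ (via a smooth surrogate, the generalized Moreau envelope of \cite{chen2020finite}, which is $2$-smooth and equivalent to $|\cdot|_\infty^2$ up to $\log D$ factors) or use $|\cdot|\le\sqrt D|\cdot|_\infty$. Either route only changes constants, since $D$ is fixed.

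\textbf{Unrolling.} Iterating the recursion gives
\[
\|\Delta_t\|_p^2 \lesssim \prod_{s\le t}(1-c\eta_s)\,|\Delta_0|^2 + \Theta_p^{2/p}\sum_{s\le t}\eta_s^2\prod_{s<r\le t}(1-c\eta_r).
\]
For the first term, $\prod_{s\le t}(1-c\eta_s)\le \exp(-c\sum_{s\le t}\eta_s) = \exp(-c' t^{1-\alpha})$. Taking square roots produces the initialization term.

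The second term is the main obstacle: I need the standard summation lemma
\[
\sum_{s\le t}\eta_s^2 \exp\Big(-c\sum_{s<r\le t}\eta_r\Big) \lesssim \eta_t .
\]
To prove it, split the sum at $s=t/2$.
\begin{itemize}
\item For $s\le t/2$, the exponent is at least $c\,t^{1-\alpha}/2$. That portion is therefore $O(\exp(-c''t^{1-\alpha}))$, which is $o(t^{-\alpha})$.
\item For $s>t/2$, we have $\eta_s\asymp\eta_t$ and $\sum_{s<r\le t}\eta_r\asymp (t-s)\eta_t$. That portion is at most $\eta_t^2\sum_{k\ge 0}e^{-c\eta_t k}\lesssim \eta_t$.
\end{itemize}
Taking square roots gives the $O(t^{-\alpha/2})$ term and completes the argument. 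This only uses $\alpha<1$; the condition $\alpha>1/2$ is not needed for this bound and matters only for the later averaging results.
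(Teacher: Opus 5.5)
The paper does not prove this statement; it imports it from \cite{chen2020finite} and \cite{Li2023statistical}. Your template is the same one the paper uses for Theorem~\ref{lemma:mse_triangular}, and, via the Moreau envelope, the one in the cited sources. It consists of a one-step $\mathcal L_p$ recursion from a martingale 2-smoothness inequality, followed by a summation lemma; here your split at $s=t/2$ replaces Lemma~\ref{lem:bad-integration}. Your unrolling and summation steps are fine.

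The step that needs repair is the norm mismatch, which is where the real content of the proof lies. Your claim that either remedy ``only changes constants'' is false for the second one. Using $|\cdot|\le\sqrt D|\cdot|_\infty$ inside the recursion only gives $|(1-\eta_t)\Delta_{t-1}+\eta_t(B\bq_{t-1}-B\bq^\star)|\le(1-\eta_t(1-\gamma\sqrt D))|\Delta_{t-1}|$, which is not a contraction when $\gamma\sqrt D\ge 1$. This is not slack in the bound. Take one action per state and let every transition go to state $1$. Then $B\bq-B\bq^\star=\gamma H\Delta$ with $H=\mathbf 1e_1^\top$, and the symmetric part of $H$ has top eigenvalue $(1+\sqrt D)/2$. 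Along that eigenvector,
\[
|(1-\eta)\Delta+\eta\gamma H\Delta|^2=|\Delta|^2\{1-2\eta(1-\gamma(1+\sqrt D)/2)+O(\eta^2)\},
\]
which exceeds $|\Delta|^2$ for small $\eta$ once $\gamma(1+\sqrt D)>2$. Converting norms only at the end does not help either. That presupposes a recursion in $|\cdot|_\infty$, which is not 2-smooth, so it is just your first route again.

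So commit to the surrogate. For example, run everything in $|\cdot|_q$ with $q\ge\max\{2,\,2\log D/(1-\gamma)\}$. Then:
\begin{itemize}
\item $\gamma D^{1/q}\le e^{-(1-\gamma)/2}<1$, because $\gamma\le e^{-(1-\gamma)}$;
\item $|(1-\eta)\Delta+\eta(B\bq-B\bq^\star)|_q\le(1-\eta(1-\gamma D^{1/q}))|\Delta|_q$ for $\eta\le1$;
\item the centred part $\widehat B_t\bq_{t-1}-B\bq_{t-1}$ is bounded by $|Z_t|_q+2\gamma D^{1/q}|\Delta_{t-1}|_q$;
\item the $\mathcal L_p(\ell_q)$ version of the martingale smoothness inequality holds with a constant depending only on $p$ and $q$.
\end{itemize}
Your $c_1,c_2$ then change. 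They are not those of Theorem~\ref{lemma:mse_triangular}, whose proof writes the Euclidean factor $(1-\eta_{t,n}(1-\gamma))^2$ without addressing this issue, so you cannot lean on it here. The rest of your argument goes through unchanged, with a final conversion $|\cdot|\le D^{1/2}|\cdot|_q$.

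A minor point: for $\alpha\le 1/2$, the early part of your sum carries the factor $\sum_{s\le t/2}\eta_s^2$. It is therefore $\mathrm{poly}(t)\,e^{-c''t^{1-\alpha}}$ rather than $O(e^{-c''t^{1-\alpha}})$, though still $o(t^{-\alpha})$.
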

In light of Theorem \ref{thm:poly}, Corollary \ref{corol:1} sheds more light on the faster decay of the \ldtz\ and in general \pdtz\ schedules in the transient phase.
\begin{remark}
Assume $\nu>0$ is fixed. Note that, in particular, when $t=n$, i.e. at the final iterate, Q-learning with \pdtz\ schedule instructs that
    \[ \|\bq_{n,n}-\bq^\star\|_p \lesssim  \exp(- 4^{-1}n)|\bq_0 - \bq^\star| + n^{-1/4}.\]
    The dominating decay rate in the \textit{convergence phase} (the second term in the rates on the right) is similar in both \pdtz\ and polynomial decay schedules ($n^{-\frac{\nu}{2(\nu+1)}}$ versus $n^{-\alpha/2}$); however, the effect of initial point is much less pronounced in the former, with an exponential rate $\exp(-ct)$ of \textit{forgetting} the initialization for all $t\in [n]$. This explains the fast initial convergence of this linearly decaying rate to a neighborhood of $\bq^\star$, as also seen in Figure \ref{fig:1}. In contrast, the polynomial step-size only achieves a \textit{forgetfulness} of $\exp(- c t^{1-\alpha})$. This explains the competitive advantage of linearly decaying rate over its polynomial counterpart- an advantage that has also been recently studied in the empirical literature \citep{defazio2023optimal, bergsma2025}. To the best of our knowledge, this is the first such theoretical exposition highlighting the benefits of linear decay rate \ldtz\ and its generalization in the context of Q-learning, while building on the previous works of \cite{or2025} in the more general context of Stochastic Approximation algorithms.
\end{remark}
Next, we explore another interesting assertion from Corollary \ref{corol:1} regarding the optimal choice of $\nu$ in the class of \pdtz\ learning schedules.
\begin{remark}
    The optimal $\nu$ balances the fact that $C_2(c_3,\nu,2)$ increases with $\nu$, while $n^{-\frac{\nu}{2(\nu+1)}}$ decreases with $\nu$ for large $n\in\N$. This trade-off yields the threshold $\nu \asymp \log_2\log n$, which grows extremely slowly with $n$, justifying fixed, iteration-independent choices of $\nu$ in practice. This aligns with the empirical success of $\nu=1$, motivating deeper statistical study under the assumption of constant $\nu$. In particular, to round off our discussion on choices of $\nu$, we state a clean result on Q-learning dynamics with \ldtz\ schedule.
\end{remark}

\begin{corollary}\label{corol:2}
    Under the assumptions of Theorem \ref{lemma:mse_triangular}, for the \ldtz\ learning schedule it follows that for $t\in [n],$
    \begin{align}
        \|\bq_{t,n}-\bq^\star\|_p \leq \exp\big(- c_3 \eta 2^{-1}t\big)|\bq_0 - \bq^\star| +\begin{cases}
            &O(\sqrt{\eta_{t,n}}), \  t \leq n - \frac{2}{\sqrt{c\eta}}\sqrt{n}\\
            & O(n^{-1/4}),  \ t > n - \frac{2}{\sqrt{c\eta}}\sqrt{n},
        \end{cases} 
    \end{align}
    where $O(\cdot)$ hides constants depending on $\gamma$ and $\eta$.
\end{corollary}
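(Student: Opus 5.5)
This corollary is a direct specialization of Theorem \ref{lemma:mse_triangular} to $\nu=1$, so no new analysis is needed. The plan is to substitute $\nu=1$ into \eqref{eq:mse_triangular} and simplify each of its pieces: the exponential forgetting term, the threshold between the two regimes, and the constants in front of $\sqrt{\eta_{t,n}}$ and $n^{-1/4}$. First we check that the hypotheses carry over. For $\nu=1$ the requirement $\nu\ge 1/p$ holds automatically because $p\ge 2$. The step-size restriction on $\eta$ is exactly the one in Theorem \ref{lemma:mse_triangular}, and it guarantees $c_3>0$.

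\textbf{Step 1: the exponential term.} With $\nu=1$ the factor is $\exp(-c_3\eta t(1-n^{-1}))$. For $n\ge 2$ we have $1-n^{-1}\ge 1/2$, so this factor is at most $\exp(-c_3\eta 2^{-1}t)$, which is the stated term. The degenerate case $n=1$ leaves only $t=1$. There $\eta_{1,1}=0$, so $\bq_{1,1}=\bq_0$, and the bound must be absorbed into the $O(n^{-1/4})$ term, since $|\bq_0-\bq^\star|$ is a fixed constant. If one prefers, one can simply assume $n\ge 2$, as in the Remark preceding the corollary.

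\textbf{Step 2: the regime threshold.} Setting $\nu=1$ gives $\frac{\nu}{\nu+1}=\frac12$ and $(c_3\eta)^{\frac{1}{\nu+1}}=\sqrt{c_3\eta}$. The cut-off therefore becomes $n-\frac{2}{\sqrt{c_3\eta}}\sqrt n$, and the $c$ in the statement is understood as $c_3$.

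\textbf{Step 3: the constants.} With $\nu=1$ and exponent $2$ we get
\[
C_1(c_3,1,2)=\frac{2^{3}(1+\Gamma(3)/4)}{c_3}=\frac{12}{c_3},
\qquad
C_2(c_3,1,2)=\eta^2\,4^{2}e^{2}\,2^{1/2}(c_3\eta)^{-3/2}\Gamma(3/2).
\]
Both depend only on $\gamma$ and $\eta$, and $p$ is treated as fixed. The prefactor $2\sqrt{p-1}\,\Theta_p^{1/p}$ is a fixed problem constant. Hence the two branches reduce to $O(\sqrt{\eta_{t,n}})$ and $O(n^{-1/4})$ respectively.

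The only step requiring any care is Step 1, namely replacing $(1-n^{-1})$ by $1/2$ and handling the trivial $n=1$ edge case. Everything else is bookkeeping. In particular, we must make sure the hidden constants do not secretly depend on $n$, and they do not, since $C_1$ and $C_2$ are free of $n$.
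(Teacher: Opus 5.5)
Your proposal is correct and is essentially the paper's own argument: the paper gives no separate proof and treats the corollary as Theorem~\ref{lemma:mse_triangular} at $\nu=1$, with $c$ read as $c_3$, the factor $(1-n^{-1})$ weakened to $1/2$, and the constants $C_1(c_3,1,2)=12/c_3$ and $C_2(c_3,1,2)$ free of $n$. One side remark on your Step 1, which concerns the paper rather than your argument: the theorem's stated factor $(1-n^{-1})^\nu$ rests on the paper's claim $\sum_{j=1}^t\eta_{j,n}\ge \eta t(1-n^{-1})^\nu$, which is reversed because every $\eta_{j,n}\le \eta(1-n^{-1})^\nu$; for $\nu=1$, however, the exact sum $\sum_{j=1}^t\eta_{j,n}=\eta t\bigl(1-\frac{t+1}{2n}\bigr)\ge \eta t/2$ for $t\le n-1$ gives your exponent $c_3\eta t/2$ directly, so the corollary's exponent survives even though the theorem's stated one is too strong (at $t=n$, where $\eta_{n,n}=0$, this route loses only a constant factor $e^{c_3\eta/2}$).
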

Subsequently, we assume that $\nu$ is fixed, and move towards sharper asymptotic result beyond $\mathcal{L}_2$ control.

\subsection{Tail Polyak-Ruppert averages and central limit theory}\label{se:clt}
As a means of variance reduction and faster convergence, Polyak-Ruppert averaging \citep{ruppert1988efficient, polyak1992acceleration} has a relatively long history of application in policy evaluation \citep{bhandari2018finite, khamaru2021}, Q-learning \citep{li2023online, Li2023statistical, li2024q} and Temporal Difference (TD) learning \citep{mou2020linear, samsonov2024gaussian, samsonov2025statistical}. However, our $\mathcal{L}_2$ error-bounds reveal a crucial insight into whether usual Polyak-Ruppert averaging would ensure asymptotic normality with these \ldtz\ and \pdtz\ schedules. Consider $\nu=1$.
Write 
\begin{align}\label{eq:mean-decomp}
    n^{-1}\sum_{t=1}^n\bq_{t,n} = \frac{1}{2} \frac{\sum_{t=1}^{n/2} \bq_{t,n}}{n/2} + \frac{1}{2} \frac{\sum_{t=n/2}^{n-\sqrt{n}} \bq_{t,n}}{n/2} + \frac{1}{2} \frac{\sum_{t=n-\sqrt{n}}^{n} \bq_{t,n}}{n/2}:= A_n+B_n + C_{n}.
\end{align}
Observe that as long as $t \leq n/2$, it holds $\eta_{t,n} \geq \frac{\eta}{2^\nu}$. Therefore, based on the intuition from stochastic approximation literature with constant step-size, we do not expect $A_n$ to even converge to $\bq^\star$, let alone achieve asymptotic Gaussianity. It is not yet clear if $C_n$ may achieve Gaussianity individually;  at the very least, its $\mathcal{L}_p$ convergence to $\bq^\star$ is guaranteed through an argument similar to Theorem \ref{lemma:mse_triangular}. Therefore, unless one shows that the asymptotic distribution of $B_n$ exactly cancels that of $A_n$, it is conceivable that the error of $n^{-1}\sum_{t=1}^n\bq_{t,n}$ is in effect, much larger compared to $\bq^\star$. This theoretical insight can also be empirically validated (Figure \ref{fig:pr}). Therefore, it is arguably more prudent to investigate the inferential properties of the term $C_n$, which we refer to as \textit{Tail Polyak-Ruppert Averages}.

\begin{theorem}\label{thm:CLT}
  For any constant $c>0$ and $\nu\geq 1/p$ with $p\geq 2$ is same as in Assumption \ref{ass:pmoment}, let
  \begin{align*}
      \bar{\bq}_n = \frac{1}{\lfloor cn^{\frac{\nu}{\nu+1}}\rfloor} \sum_{t=n-\lfloor cn^{\frac{\nu}{\nu+1}}\rfloor+1 }^n \bq_{t,n}. 
  \end{align*}
  Grant Assumptions \ref{ass:pmoment} and \ref{ass:lipschitz} for the MDP. Further assume that $\bq_0, \bq^\star \in K$ where $K$ is a compact set. Then with the \pdtz\ learning rate for (\ref{eq:Q-iterate}) with,
    \[ 0< \eta < \frac{2(1-\gamma)}{(1-\gamma)^2 + 2(p-1)\gamma^2}, \]
      there exists a positive definite matrix $\Sigma \succeq 0$ independent of $n$, such that
     \begin{align}
         n^{\frac{\nu}{2(\nu+1)}}(\bar{\bq}_n - \bq^\star) \overset{w}{\to} N(0, \Sigma). \label{eq:CLT}
     \end{align} 
\end{theorem}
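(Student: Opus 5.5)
Write $\Delta_t := \bq_{t,n}-\bq^\star$. Using $B\bq^\star=\bq^\star$ and the definition (\ref{eq:bellmannoise}), the recursion (\ref{eq:Q-iterate}) becomes
\[
\Delta_t = (1-\eta_{t,n})\Delta_{t-1} + \eta_{t,n}\big(\gamma H^{\pi^\star}\Delta_{t-1} + Z_t + \xi_t + r_t\big).
\]
Here $\xi_t := (\widehat B_t\bq_{t-1,n} - \widehat B_t\bq^\star) - (B\bq_{t-1,n}-B\bq^\star)$ is a martingale difference with $|\xi_t|\lesssim |\Delta_{t-1}|$. The remainder $r_t := \gamma(H^{\pi_{Q_{t-1}}}-H^{\pi^\star})\Delta_{t-1}$ satisfies $|r_t|_\infty\le L|\Delta_{t-1}|_\infty^2$ by Assumption \ref{ass:lipschitz}. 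Set $A := I-\gamma H^{\pi^\star}$; it is nonsingular, and its eigenvalues have real part at least $1-\gamma>0$. The plan is a Polyak--Juditsky type linearization. Rearranging gives
\[
A\Delta_{t-1} = \frac{\Delta_{t-1}-\Delta_t}{\eta_{t,n}} + Z_t + \xi_t + r_t .
\]
Summing over the tail window $T_n := \{n-m+1,\dots,n\}$ with $m=\lfloor cn^{\nu/(\nu+1)}\rfloor$, and shifting the index by one (which costs $O_{\IP}(\|\Delta\|/m)$, negligible), yields
\[
m\,A(\bar\bq_n-\bq^\star) \approx \sum_{t\in T_n}\frac{\Delta_{t-1}-\Delta_t}{\eta_{t,n}} + \sum_{t\in T_n}(Z_t+\xi_t+r_t).
\]

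\textbf{The key complication.} Unlike the polynomial schedule, the telescoping term is not negligible, because $\eta_{t,n}\to 0$ at the end of the window. Indeed $\eta_{n,n}=0$ exactly, so in practice I will sum up to $n-1$, or treat the last step separately. On $T_n$ we have $\eta_{t,n}\asymp ((n-t)/n)^\nu\lesssim n^{-\nu/(\nu+1)}\asymp m^{-\nu}\cdot$const. The Abel summation term $\sum_t(\Delta_{t-1}-\Delta_t)/\eta_{t,n}$ is therefore of order $|\Delta|/\eta_{\min}$, which blows up. So I would \textbf{not} use the naive Polyak--Juditsky route. Instead I would unroll the linear recursion directly. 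Write $\Delta_t = \sum_{s\le t}\Phi_{t,s}\eta_{s,n}(Z_s+\xi_s+r_s) + \Phi_{t,0}\Delta_0$, where $\Phi_{t,s}=\prod_{k=s+1}^t(I-\eta_{k,n}A)$. Averaging over $T_n$ then gives
\[
\bar\bq_n-\bq^\star = \frac1m\sum_{s\le n} W_{s,n}(Z_s+\xi_s+r_s) + \frac1m\sum_{t\in T_n}\Phi_{t,0}\Delta_0,
\qquad W_{s,n}:=\sum_{t\in T_n,\,t\ge s}\Phi_{t,s}\eta_{s,n}.
\]

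\textbf{Steps in order.}

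(i) The initialization term is $O(e^{-cn})$, by the contraction estimate behind Theorem \ref{lemma:mse_triangular}.

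(ii) The remainder contributes at most $m^{-1}\sum_s|W_{s,n}|\,\IE|\Delta_{s-1}|^2$. Using Theorem \ref{lemma:mse_triangular} with $p\ge 2$, one has $\IE|\Delta_{s-1}|^2\lesssim \eta_{s,n}\vee n^{-\nu/(\nu+1)}$. I would then verify that this is $o(n^{-\nu/(2(\nu+1))})$ after multiplying by the scaling. This needs the weight bound $|W_{s,n}|\lesssim 1$ for all $s$, which follows from the exponential decay of $\Phi_{t,s}$ in $\sum_{k}\eta_{k,n}$. It also needs $|W_{s,n}|$ to be small for $s\le n-m-Cn^{\nu/(\nu+1)}\log n$; compactness of $K$ keeps the constants uniform.

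(iii) The term $\xi_s$ is a martingale with conditional variance $\lesssim \IE|\Delta_{s-1}|^2\to 0$ on the effective range of the weights. By Burkholder's inequality, its scaled contribution vanishes.

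(iv) The main term $m^{-1}\sum_s W_{s,n}Z_s$ is a weighted sum of i.i.d.\ vectors. I apply the Lindeberg CLT. Assumption \ref{ass:pmoment} with $p\ge2$, together with $\max_s|W_{s,n}|^2/\sum_s|W_{s,n}|^2\to0$, gives Lindeberg's condition. The variance is $m^{-2}\sum_s W_{s,n}\Gamma W_{s,n}^\top$ with $\Gamma=\cov(Z_1)$.

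\textbf{Main obstacle.} The crux is showing that $n^{\nu/(\nu+1)}m^{-2}\sum_s W_{s,n}\Gamma W_{s,n}^\top\to\Sigma$, a fixed limit. Rescale time by $u=(n-s)/n^{1/(\nu+1)}$, so that $\eta_{s,n}= \eta u^\nu n^{-\nu/(\nu+1)}$ and $\sum_{k=s}^{t}\eta_{k,n}\approx \eta\int_{v}^{u}w^\nu\,dw$, which is $O(1)$ on this scale. Then $\Phi_{t,s}\approx\exp(-\eta A(u^{\nu+1}-v^{\nu+1})/(\nu+1))$. Consequently
\[
W_{s,n}\approx n^{-\nu/(\nu+1)}\cdot n^{1/(\nu+1)}\int_0^{c\wedge u}\exp\!\Big(-\tfrac{\eta A}{\nu+1}(u^{\nu+1}-v^{\nu+1})\Big)\eta u^\nu\,dv .
\]
A Riemann-sum argument with dominated convergence then yields
\[
\Sigma = c^{-2}\int_0^\infty G(u)\Gamma G(u)^\top du,
\qquad G(u)=\eta u^\nu\int_0^{c\wedge u}e^{-\frac{\eta A}{\nu+1}(u^{\nu+1}-v^{\nu+1})}\,dv .
\]
The function $G$ decays like $e^{-c'u^{\nu+1}}$, which gives integrability. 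Making these approximations rigorous with uniform error control, including the lower-order terms $\eta_{k,n}^2A^2$ in the product, is the technical heart of the argument. Positive definiteness follows if $\Gamma\succ0$.
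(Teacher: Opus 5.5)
Your proposal is correct in substance and has the same architecture as the paper's proof. Both reduce $\bar\bq_n-\bq^\star$ to the tail average of the linear process $X_{t,n}=(I-\eta_{t,n}A)X_{t-1,n}+\eta_{t,n}Z_t$, where $A=I-\gamma H^{\pi^\star}$ is the paper's $G$. Both then apply Lindeberg--Feller to a deterministically weighted i.i.d.\ sum; your $W_{s,n}$ is exactly the paper's $C_{s,n}=\eta_{s,n}\mathcal{V}_{s,n}$.

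The execution differs in two places.
\begin{itemize}
\item \emph{Removing the perturbations.} The paper passes through a chain of comparison processes: the oracle chain $\bq^\diamond_{t,n}$ started at $\bq^\star$, then the sandwich sequence $\Delta^{(L)}_{t,n}$ without the greedy-policy term, then $X_{t,n}$. It bounds each difference pointwise in $t$ using a one-step recursion together with Theorem~\ref{lemma:mse_triangular} and Lemma~\ref{lem:bad-integration}. You unroll once, remove $\xi_s$ by orthogonality against the deterministic weights, and remove $r_s$ by an $L^1$ bound weighted by $|W_{s,n}|$. Your version is shorter for a single average. The paper's pointwise-in-$t$ bounds, however, are reused for the time-uniform approximation in Theorem~\ref{thm:kmt-q-learn-triangular}.
\item \emph{Convergence of the normalized covariance.} This is the more substantive difference. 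The paper writes $n^{-\nu/(\nu+1)}\sum_s C_{s,n}\Gamma C_{s,n}^\top$ via a one-step recursion in $n$ with remainder $R_n$, and asserts with details omitted that $nR_n$ is Cauchy. Your rescaling identifies the limit as $\Sigma=c^{-2}\int_0^\infty G(u)\Gamma G(u)^\top\,du$, with $G(u)$ your kernel. This is more transparent, and it shows $\Sigma$ is an explicit integral in $H^{\pi^\star},\Gamma,\eta,\nu,c$, contrary to the paper's remark that it is intractable.
\end{itemize}

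Three repairs are needed; none affects the strategy.
\begin{itemize}
\item \textbf{(a) The remainder is misdefined.} Write $\pi_{t-1}$ for the greedy policy of $\bq_{t-1,n}$. Then $B\bq_{t-1,n}-B\bq^\star=\gamma H^{\pi^\star}\Delta_{t-1}+\gamma(H^{\pi_{t-1}}-H^{\pi^\star})\bq_{t-1,n}$. Your definition of $r_t$ therefore drops $\gamma(H^{\pi_{t-1}}-H^{\pi^\star})\bq^\star$, which is generally nonzero. Set $r_t:=\gamma(H^{\pi_{t-1}}-H^{\pi^\star})\bq_{t-1,n}$ instead. Greediness of $\pi_{t-1}$ for $\bq_{t-1,n}$ and of $\pi^\star$ for $\bq^\star$ gives $0\le r_t\le \gamma(H^{\pi_{t-1}}-H^{\pi^\star})\Delta_{t-1}$ elementwise. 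Hence $|r_t|_\infty\le\gamma L|\Delta_{t-1}|_\infty^2$ still holds; this is the paper's sign argument in Step II.
\item \textbf{(b) The time scale is wrong for $\nu\ne 1$.} The correct scale is $n-s=u\,n^{\nu/(\nu+1)}$, not $u\,n^{1/(\nu+1)}$; the two agree only for $\nu=1$. Only with the correct scale is $\eta_{s,n}=\eta u^\nu n^{-\nu/(\nu+1)}$ and does $T_n$ correspond to $u\in[0,c)$. The prefactor in your display for $W_{s,n}$ is then $n^{-\nu/(\nu+1)}\cdot n^{\nu/(\nu+1)}=1$, so $W_{s,n}\approx G(u)=O(1)$. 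Likewise, on $T_n$ one has $\eta_{t,n}\lesssim m^{-1}$, not $m^{-\nu}$. With this scale your formula for $\Sigma$ is correct.
\item \textbf{(c) The initialization term in the moment bound.} Theorem~\ref{lemma:mse_triangular} also contributes $e^{-c's}|\bq_0-\bq^\star|^2$ to $\IE|\Delta_{s-1}|^2$. It is harmless, because $|W_{s,n}|$ is exponentially small for $s\le n/2$, and compactness of $K$ bounds $|\bq_0-\bq^\star|$. It should nonetheless be carried through step (ii).
\end{itemize}
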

Theorem \ref{thm:CLT} is proved in Appendix \S\ref{se:app-A}. We remark that an exact expression for $\Sigma$ is highly intractable, nullifying any direct approach to estimate $\Sigma$. In \S\ref{se:strong-inv} we indicate a direct bootstrap-based approach to perform valid inference.

%\section{Time-uniform inference}
\section{Strong invariance principle} \label{se:strong-inv}
Moving beyond the asymptotic normality of the Q-iterates, the primary goal of this section is to further deepen the understanding of their stochastic dynamics and to better characterize the asymptotic distributional approximation of the associated partial sum process by deriving a powerful probabilistic tool known as the~\emph{strong invariance principle}. Due to space constraints, we include a broad discussion on the relevant literature in \S\ref{se:proof-kmt}. Due to the non-stationary nature of the sequence $(\bq_{t,n})_{t \geq 1}$, its stochastic dynamics cannot be well captured by the standard Brownian process. Motivated by~\citet{bonnerjee2024aos}, we instead propose approximating the partial sum process of $(\bq_{t,n})$ by that of a non-stationary Gaussian process specifically designed for matching the covariance structure. Specifically, let $\aleph_{1}, \ldots, \aleph_{n} \in \mathbb{R}^{D}$ be i.i.d.~centered Gaussian random vectors with covariance matrix $\mathrm{Cov}(\aleph_{t}) = \mathrm{Cov}(Z_{t})$. Then, in light of~(\ref{eq:Q-iterate}) and the linear approximation in~(\ref{eq:linear-process-lb}), we define the Gaussian process $(Y_{t})_{t \geq 1}$ via $Y_{0} = \mathbf{0}$ and  
\begin{align}
\label{eq:ga-q-learning}
    Y_{t} = (I - \eta_{t,n} G) Y_{t - 1} + \eta_{t,n} \aleph_{t}, \quad t \geq 1, 
\end{align}
where $G = I - \gamma H^{\pi_{\star}} \in \mathbb{R}^{D \times D}$. Throughout this section, we focus on the \ldtz\ schedule.

\begin{theorem}
\label{thm:kmt-q-learn-triangular}
    Grant Assumptions~\ref{ass:pmoment} and~\ref{ass:lipschitz} for the MDP. Consider the learning rate \pdtz\ learning rate and grant the assumptions of Theorem \ref{thm:CLT}. Then, for all sufficiently large $n$, there exists a probability space on which one can define random vectors $\bq_{1}^{c}, \ldots, \bq_{n}^{c}$ such that $(\bq_{t,n}^{c})_{t = 1}^{n} \overset{\mathcal{D}}{=} (\bq_{t,n})_{t = 1}^{n}$ and  
    \begin{equation*}
     \max_{k_n\leq t\leq n} \left|\sum_{l=t}^n (\bq_l^{c} -\bq^{\star} -Y_l)\right|_{\infty} =o_{\IP}(n^{\frac{1}{p} + \frac{1}{\nu+1}}),
    \end{equation*}
    where $k_n = n-\lfloor cn^{\frac{\nu}{\nu+1}}\rfloor+1 $, and $c>0$, $\nu>1/p$ are constants.
\end{theorem}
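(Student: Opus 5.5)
We decompose $\bq_{t,n}-\bq^\star$ into a linear part driven by the Bellman noise, plus remainders, and then couple the noise with Gaussians. Write $\Delta_t=\bq_{t,n}-\bq^\star$. Using $B\bq^\star=\bq^\star$ and the definition (\ref{eq:bellmannoise}), the recursion (\ref{eq:Q-iterate}) becomes
\begin{align*}
\Delta_t=(I-\eta_{t,n}G)\Delta_{t-1}+\eta_{t,n}Z_t+\eta_{t,n}\big(R_t+\xi_t\big),
\end{align*}
where $\xi_t=\gamma\big(\widehat B_t\bq_{t-1,n}-\widehat B_t\bq^\star\big)-\gamma\big(B\bq_{t-1,n}-B\bq^\star\big)$ is a martingale difference with $|\xi_t|\lesssim|\Delta_{t-1}|$. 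The term $R_t=\gamma(H^{\pi_{Q_{t-1}}}-H^{\pi^\star})\Delta_{t-1}$ satisfies $|R_t|_\infty\le L|\Delta_{t-1}|_\infty^2$ by Assumption \ref{ass:lipschitz}. We let $\Delta^{(0)}_t$ solve the linear recursion driven by $Z_t$ alone, as in the linear approximation (\ref{eq:linear-process-lb}). We then split $\Delta_t=\Delta^{(0)}_t+\Delta^{(1)}_t$, where $\Delta^{(1)}_t$ collects the initialization, $\xi$ and $R$ contributions.

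\textbf{Step 1 (remainders).} We bound $\sum_{l=t}^n\Delta^{(1)}_l$ uniformly over $t\ge k_n$. The initialization part decays like $\exp(-cn)$ by Theorem \ref{lemma:mse_triangular}, so it is negligible. For the quadratic term $R$, Theorem \ref{lemma:mse_triangular} gives $\|\Delta_{l}\|_{p}^2\lesssim\eta_{l,n}\vee n^{-\nu/(\nu+1)}$. Propagating this through the contraction $\prod(I-\eta_{j,n}G)$ and summing over the window of length $n^{\nu/(\nu+1)}$ gives an $\mathcal L_{p/2}$ bound of order $n^{\nu/(\nu+1)}\cdot n^{-\nu/(\nu+1)}\cdot(\text{summed gains})$. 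We expect this to be $O(n^{1/(\nu+1)})$ or smaller. The martingale term $\xi$ we handle with Burkholder and Doob maximal inequalities, since its conditional variance is $\lesssim|\Delta_{l-1}|^2$. Both remainders should then be $o_\IP(n^{1/p+1/(\nu+1)})$.

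\textbf{Step 2 (summation by parts and coupling).} Both $\Delta^{(0)}$ and $Y$ in (\ref{eq:ga-q-learning}) are linear in their inputs. Hence $\sum_{l=t}^n(\Delta^{(0)}_l-Y_l)=\sum_{j\le n}W_{j,t}\,\eta_{j,n}(Z_j-\aleph_j)$, with deterministic matrix weights
\begin{align*}
W_{j,t}=\sum_{l=\max(j,t)}^n\prod_{i=j+1}^l(I-\eta_{i,n}G).
\end{align*}
We apply a KMT/Sakhanenko/Zaitsev-type multivariate strong approximation for i.i.d.\ vectors with finite $p$-th moment. This requires first a Skorokhod-type embedding on a richer space, which yields $\bq^c$. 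It gives $\max_{k\le n}|\sum_{j\le k}(Z_j-\aleph_j)|=o_\IP(n^{1/p})$ with $\cov(\aleph_j)=\cov(Z_j)$. Writing $\eta_{j,n}W_{j,t}$ as a function of $j$ and using Abel summation, the error is bounded by
\begin{align*}
\Big(\sup_j|\eta_{j,n}W_{j,t}|+\sum_j|\eta_{j+1,n}W_{j+1,t}-\eta_{j,n}W_{j,t}|\Big)\cdot o_\IP(n^{1/p}).
\end{align*}

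\textbf{Main obstacle.} The hardest step is to show that this total variation of $j\mapsto\eta_{j,n}W_{j,t}$ is $O(n^{1/(\nu+1)})$ uniformly in $t\ge k_n$. Since $\eta_{j,n}W_{j,t}\approx G^{-1}$ while $\sum\eta$ over the remaining window is large, the variation concentrates in the region where $\sum_{i>j}\eta_{i,n}=O(1)$. That region has length about $n^{\nu/(\nu+1)}$, and there $\eta_{j,n}\asymp n^{-1/(\nu+1)}$. So the variation should be controlled by the product of the gain and the inverse step size, roughly $W\sim 1/\eta\lesssim n^{1/(\nu+1)}$. We would first verify this with explicit bounds on the matrix products, using that $G$ has eigenvalues with real part at least $1-\gamma$. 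We would then combine Steps 1 and 2 with the triangle inequality, noting that the maximum over $t$ is absorbed because every bound is uniform in $t$.
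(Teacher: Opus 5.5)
Your architecture matches the paper's. You remove the initialization, the greedy-policy term and the martingale term $(\mathcal{P}_t-\mathcal{P})(V_{t-1,n}-V^\star)$ using moment bounds from Theorem~\ref{lemma:mse_triangular} summed over the tail window. You then couple the i.i.d.\ Bellman noise with $\aleph_t$ via a G\"otze--Zaitsev-type theorem, and push the coupling through the tail partial sums by Abel summation against deterministic weights. The step you call the main obstacle does hold, but the heuristic you give for it is wrong when $\nu\neq1$.

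Where $n-j\asymp n^{\nu/(\nu+1)}$, one has $\eta_{j,n}=\eta\{(n-j)/n\}^{\nu}\asymp n^{-\nu/(\nu+1)}$, not $n^{-1/(\nu+1)}$. So bounding the variation by $|W|\sim 1/\eta$ gives $n^{\nu/(\nu+1)}$. For $\nu>1$ this exceeds your target and would only yield $o_{\IP}(n^{1/p+\nu/(\nu+1)})$. What must be controlled is the variation of the \emph{bounded product} $\Lambda_j:=\eta_{j,n}W_{j,t}$, and this can be computed exactly.

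Set $\mathbf{A}^t_j=\prod_{i=j+1}^t(I-\eta_{i,n}G)$. Then $W_{j,t}=\mathbf{A}^t_jW_{t,t}$ for $j<t$, and $W_{j,t}=I+(I-\eta_{j+1,n}G)W_{j+1,t}$ for $j\ge t$. Hence
\begin{align*}
\Lambda_{j+1}-\Lambda_j&=\big[(\eta_{j+1,n}-\eta_{j,n})I+\eta_{j,n}\eta_{j+1,n}G\big]\mathbf{A}^t_{j+1}W_{t,t}, \qquad j<t,\\
\Lambda_{j+1}-\Lambda_j&=(\eta_{j+1,n}-\eta_{j,n})W_{j+1,t}-\eta_{j,n}I+\eta_{j,n}\eta_{j+1,n}GW_{j+1,t}, \qquad j\ge t.
\end{align*}
In the $\ell_\infty$ operator norm, $|I-\eta_{i,n}G|_\infty\le 1-(1-\gamma)\eta_{i,n}$ for $\eta_{i,n}\le1$, so $|W_{j,t}|_\infty\le n-\max(j,t)+1$. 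Write $\mathcal{A}^t_j=\prod_{i=j+1}^t\{1-(1-\gamma)\eta_{i,n}\}$.
\begin{itemize}
\item For $j<t$: $\eta_{j,n}-\eta_{j+1,n}\lesssim \eta_{j,n}/(n-j)$ and $\sum_{j\le t}\eta_{j,n}\mathcal{A}^t_j\le(1-\gamma)^{-1}$ give $\sum_{j<t}(\eta_{j,n}-\eta_{j+1,n})\mathcal{A}^t_{j+1}\lesssim (n-t+1)^{-1}$.
\item Also for $j<t$: Lemma~\ref{lem:bad-integration} with $p=2$ gives $\sum_{j<t}\eta_{j,n}\eta_{j+1,n}\mathcal{A}^t_{j+1}\lesssim n^{-\nu/(\nu+1)}$.
\item Both of these sums get multiplied by $|W_{t,t}|_\infty\le n-t+1\lesssim n^{\nu/(\nu+1)}$, so each product is $O(1)$.
\item For $j\ge t$: the three sums are bounded, up to constants, by $\eta_{t,n}(n-t)$, $\eta_{t,n}(n-t)$ and $\{\eta_{t,n}(n-t)\}^2$. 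For $t\ge k_n$ we have $\eta_{t,n}(n-t)\le \eta c^{\nu+1}$.
\end{itemize}
So the total variation is $O(1)$ uniformly in $t\ge k_n$. Since $\Lambda_n=\eta_{n,n}I=0$, the linear part is coupled to within $o_{\IP}(n^{1/p})$, which is stronger than required. The paper carries out the same computation, more crudely, through its increment identity for $\Gamma^t_{s,n}$ and Lemma~\ref{lem:bad-integration}.

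There are also three smaller points.
\begin{itemize}
\item The remainder in your recursion is $\gamma(H^{\pi_{t-1,n}}-H^{\pi^\star})\bq_{t-1,n}$, where $\pi_{t-1,n}$ is the greedy policy of $\bq_{t-1,n}$, not $R_t=\gamma(H^{\pi_{t-1,n}}-H^{\pi^\star})\Delta_{t-1}$. The two differ by $\gamma(H^{\pi_{t-1,n}}-H^{\pi^\star})\bq^\star\le 0$, which need not vanish. Greediness gives $0\le \gamma(H^{\pi_{t-1,n}}-H^{\pi^\star})\bq_{t-1,n}\le R_t$ entrywise, so your bound $\gamma L|\Delta_{t-1}|_\infty^2$ survives; this is exactly the paper's step \eqref{eq:greedy-ineq}.
\item $\widehat{B}_t$ already contains $\gamma$, so $\xi_t$ should not carry an extra factor $\gamma$.
\item For the pieces $\Delta^{(\xi)}_l$ and $\Delta^{(R)}_l$ of $\Delta^{(1)}_l$ driven by $\xi$ and $R$, Doob's inequality does not apply directly, because the weights $W_{j,t}$ depend on $t$. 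As in the paper's Steps II--III, bound $\max_{t\ge k_n}|\sum_{l=t}^n\cdot|$ by $\sum_{l\ge k_n}|\cdot|$. Then Theorem~\ref{lemma:mse_triangular} and Lemma~\ref{lem:bad-integration} give $\|\Delta^{(\xi)}_l\|_2+\IE|\Delta^{(R)}_l|_\infty\lesssim n^{-\nu/(\nu+1)}$ for $l\ge k_n$. So both remainders are in fact $O_{\IP}(1)$, well below your conservative $n^{1/(\nu+1)}$.
\end{itemize}
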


\begin{remark}
    Theorem~\ref{thm:kmt-q-learn-triangular} provides the first strong Gaussian approximation for the partial sum process of Q-iterates with \pdtz\ schedule. In the context of Q-learning, only functional central limit theorem is established \citet{Li2023statistical} for the polynomially decaying step sizes. A similar time-uniform approximation can also be established for the polynomially decaying learning schedule, which may be of independent interest.
\end{remark}
\begin{theorem}
\label{thm:kmt-q-learn}
    Grant Assumptions \ref{ass:pmoment} and \ref{ass:lipschitz} for the MDP. Consider the learning rate $\tilde{\eta}_t=\eta t^{-\beta}$ in (\ref{eq:Q-iterate}) for $\eta>0$, $\beta \in (1-1/p, 1)$, where $p$ is same as in Assumption \ref{ass:pmoment}. Then, there exists $(\aleph_t)_{t=1}^n \overset{i.i.d.}{\sim} N(0, \Gamma)$ such that, with 
     \begin{align}\label{eq:ga-q-learning}
         \tilde{Y}_t=(I-\tilde{\eta}_t G)\tilde{Y}_{t-1}+\tilde{\eta}_t \aleph_t, Y_0=\mathbf{0}, t\geq 1, \ G=I-\gamma H^{\pi_{\star}},
     \end{align}
    it holds that,
    \begin{equation*}
        \max_{1\leq t\leq n}\left|\sum_{l=1}^t (\bq_l-\bq^{\star} -\tilde{Y}_l)\right|_{\infty} = o_{\IP}(n^{1/p} \log n).
    \end{equation*}
\end{theorem}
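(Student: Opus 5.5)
We prove Theorem~\ref{thm:kmt-q-learn} by splitting $\bq_t-\bq^\star-\tilde Y_t$ into three parts. The first is a nonlinear remainder. The second is the gap between the linearized Q-recursion and an auxiliary linear recursion driven by the true Bellman noise $Z_t$. The third is the gap between that linear recursion and $\tilde Y_t$ driven by Gaussian innovations. Write $\Delta_t=\bq_t-\bq^\star$. From (\ref{eq:Q-iterate}) and the Bellman equation $B\bq^\star=\bq^\star$,
\[
\Delta_t=(I-\tilde\eta_t G)\Delta_{t-1}+\tilde\eta_t Z_t+\tilde\eta_t\big(\xi_t+\zeta_t\big).
\]
Here $\xi_t=(\widehat B_t-\widehat B_t^{\pi^\star})\bq_{t-1}-(\widehat B_t-\widehat B_t^{\pi^\star})\bq^\star$ collects the policy-mismatch term, and $\zeta_t=\gamma(\widehat H_t^{\pi^\star}-H^{\pi^\star})\Delta_{t-1}$ is a martingale difference.

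\textbf{Nonlinear part.} By Assumption~\ref{ass:lipschitz}, $|\IE[\xi_t\mid\mathcal F_{t-1}]|_\infty\le \gamma L|\Delta_{t-1}|_\infty^2$. The centered part of $\xi_t$, together with $\zeta_t$, forms a martingale difference whose conditional $p$-th moment is $O(|\Delta_{t-1}|)$. Define $\tilde X_t=(I-\tilde\eta_tG)\tilde X_{t-1}+\tilde\eta_t Z_t$ with $\tilde X_0=\Delta_0$. Then $R_t=\Delta_t-\tilde X_t$ solves the same linear recursion driven by $\tilde\eta_t(\xi_t+\zeta_t)$.

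We control partial sums of $R_t$ with the standard summation-by-parts identity
\[
\sum_{l\le t}R_l=G^{-1}\Big(\sum_{l\le t}(\xi_l+\zeta_l)-\big(\tilde\eta_{t+1}^{-1}R_{t}+\textstyle\sum_{l\le t}(\tilde\eta_{l+1}^{-1}-\tilde\eta_l^{-1})R_l\big)\Big)
\]
(up to boundary terms), and we use Theorem~\ref{thm:poly} to get $\|\Delta_t\|_p\lesssim t^{-\beta/2}$. These give:
\begin{itemize}
\item the drift part: $\sum_{l\le n}\|\Delta_l\|_p^2\lesssim n^{1-\beta}=o(n^{1/p})$, since $\beta>1-1/p$;
\item the martingale part: by Doob's inequality, $\max_t$ of the partial sums is $O_\IP\big((\sum_l l^{-\beta})^{1/2}\big)=O(n^{(1-\beta)/2})=o(n^{1/p})$;
\item the boundary term: $\tilde\eta_t^{-1}|R_t|\lesssim t^{\beta}\cdot t^{-\beta}\cdot(\text{small})$. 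A crude version is $t^{\beta/2}$, and with $\beta/2<1/p$ this is fine only when $\beta<2/p$. So the boundary term needs the sharper bound $\|R_t\|_p\lesssim t^{-\beta}$, which follows from the drift/martingale orders above.
\end{itemize}
The same summation-by-parts applied to $\tilde X_t$ shows that its partial sums equal $G^{-1}\sum_{l\le t}Z_l$ plus a remainder of order $\max_t t^{\beta/2}$ plus $\sum_l l^{\beta-1}\,l^{-\beta/2}\approx n^{\beta/2}$.

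\textbf{Gaussian coupling.} Apply the multivariate KMT/Sakhanenko-type strong approximation (Einmahl/Zaitsev) to the i.i.d.\ $Z_t$ with $\IE|Z_t|^p<\infty$. This yields $\aleph_t\sim N(0,\Gamma)$, $\Gamma=\cov(Z_1)$, on a common probability space with $\max_t|\sum_{l\le t}(Z_l-\aleph_l)|=o_\IP(n^{1/p})$. Since $\tilde Y_t$ is the same linear filter applied to $\aleph_t$, the identical summation-by-parts gives
\[
\sum_{l\le t}(\tilde X_l-\tilde Y_l)=G^{-1}\sum_{l\le t}(Z_l-\aleph_l)+\text{filter remainders}.
\]
Bounding the filter remainders uniformly requires Abel summation with the deterministic weights $\prod(I-\tilde\eta_jG)$. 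This costs at most a $\log n$ factor, because $\sum_l |\tilde\eta_{l+1}^{-1}-\tilde\eta_l^{-1}|\,\tilde\eta_l\lesssim\sum_l l^{-1}=\log n$. This yields the $o_\IP(n^{1/p}\log n)$ rate.

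\textbf{Main obstacle.} The hardest step is uniform control of the remainders $\tilde\eta_t^{-1}(\tilde X_t-\tilde Y_t)$. The coupling controls only partial sums, not individual increments, so the errors must be transferred through the filter via Abel summation against the coupling error $\max_t|\sum(Z-\aleph)|$. We would first try expressing $\tilde X_t-\tilde Y_t=\sum_{l}\tilde\eta_l\prod_{j>l}(I-\tilde\eta_jG)(Z_l-\aleph_l)$ and summing by parts in $l$, using that the kernel has total variation $O(1)$ in $l$. Then $|\tilde X_t-\tilde Y_t|\lesssim \tilde\eta_t\, o(n^{1/p})$, and summing these bounds with the weights $\tilde\eta_t^{-1}-\tilde\eta_{t-1}^{-1}\asymp t^{\beta-1}$ produces the $\log n$ factor.
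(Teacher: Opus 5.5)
Your proposal is essentially correct and takes a different route from the paper. One step is justified incorrectly, but it can be repaired, and the repair is what the paper does.

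\textbf{The two routes.} The paper works as follows.
\begin{itemize}
\item It replaces $\bq_t$ by an oracle chain started at $\bq^\star$; the initial condition costs $\sum_t \exp(-ct^{1-\beta})=O(1)$.
\item It removes the policy-mismatch drift via the sandwich sequence $\Delta^{(L)}_t$, and the martingale $M_t$ via the linear process $X_t$.
\item One-step moment recursions show each removed piece is $O(\tilde\eta_t)$ per iterate, in $L^1$ and $L^2$ respectively.
\item It then uses only $\max_t|\sum_{s\le t} r_s|\le\sum_{s\le n}|r_s|$ and $\sum_{t\le n}\tilde\eta_t\asymp n^{1-\beta}=o(n^{1/p})$. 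This is exactly where $\beta>1-1/p$ enters, and no summation by parts is needed for the nonlinear part.
\item For the Gaussian step it does Abel summation with weights $\tilde\eta_s\sum_{j\ge s}\mathbf{A}_{s-1}^j$ and imports the $\log n$ from an external proposition.
\end{itemize}
You instead absorb $\Delta_0$ into $\tilde X_0$ and use a Polyak--Juditsky summation-by-parts identity for both the nonlinear remainder and the coupling error.

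For the coupling error your route is a real gain. The bound $|\tilde X_t-\tilde Y_t-\mathbf{A}_0^t\Delta_0|\lesssim\tilde\eta_t\max_{s\le n}|\sum_{j\le s}(Z_j-\aleph_j)|$, followed by $\sum_l l^{\beta-1}\tilde\eta_l\asymp\log n$, is a self-contained derivation of what the paper outsources, and it shows where the $\log n$ comes from. Note that the kernel $s\mapsto\tilde\eta_s\mathbf{A}_s^t$ has total variation $O(\tilde\eta_t)$, not $O(1)$; that is what gives the factor $\tilde\eta_t$.

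\textbf{The flawed step.} For the nonlinear remainder, summation by parts is unnecessary, and it creates the boundary term $\max_{t\le n}\tilde\eta_t^{-1}|R_t|$.
\begin{itemize}
\item Your claim $\|R_t\|_p\lesssim t^{-\beta}$ does not follow under Assumption~\ref{ass:pmoment}. The drift component of $R_t$ is driven by $\IE[\xi_s\mid\mathcal F_{s-1}]$, which is of size $|\Delta_{s-1}|_\infty^2$. Its $L^p$ norm needs $2p$ moments of $\Delta_{s-1}$.
\item What your drift and martingale estimates actually yield is $\IE|R_t|\lesssim t^{-\beta}$, which is the content of the paper's Steps II--III.
\item Pointwise $L^1$ or $L^2$ bounds cannot control a maximum over $n$ indices at scale $n^{1/p}$.
\end{itemize}

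\textbf{Two ways to fix it.}
\begin{itemize}
\item Keep the boundary term and bound it directly. For the drift part, use the pathwise estimate $\max_{t\ge s}t^{\beta}|\mathbf{A}_s^t|\lesssim s^{\beta}$. This gives $\max_t t^{\beta}|\cdot|\lesssim\sum_s|\Delta_{s-1}|_\infty^2=O_{\IP}(n^{1-\beta})$. For the martingale part, apply Burkholder in $L^p$, using increments bounded by $C|\Delta_{s-1}|$. This gives $O(1)$ in $L^p$ for each $t$, hence $O_{\IP}(n^{1/p})$ for the maximum, which suffices for the $\log n$-inflated rate.
\item Simpler, and what the paper does: once $\IE|R_t|\lesssim t^{-\beta}$ is known, $\max_t|\sum_{l\le t}R_l|\le\sum_{l\le n}|R_l|=O_{\IP}(n^{1-\beta})=o_{\IP}(n^{1/p})$ directly.
\end{itemize}

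As a minor point, in your identity the correction sum $\sum_l(\tilde\eta_{l+1}^{-1}-\tilde\eta_l^{-1})R_l$ enters with a plus sign, and the boundary term involves $R_{t+1}$. This does not affect any bound.
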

The key difference between the results of Theorems \ref{thm:kmt-q-learn-triangular} and \ref{thm:kmt-q-learn} is in the way partial sums are uniformly approximated. It is well-known that the polynomially decaying step-sizes offer attractive asymptotic properties; the optimality of Theorem \ref{thm:kmt-q-learn}, despite being new in the literature, is therefore not surprising. The strong approximation result is also classical in its expression, strongly echoing results such as \cite{Komlos1976approximation}. In fact, it can be argued that the approximation in Theorem \ref{thm:kmt-q-learn} is much sharper than a functional CLT approximation \cite{Li2023statistical}. As a toy example, consider the vanilla SGD setting, and suppose $K=1$. Suppose $F(\theta)=(\theta-\mu)^2/2$, and $\nabla f(\theta, \xi):= \theta - \mu+\xi$. In this setting, the Gaussian approximation analogous to (\ref{eq:ga-q-learning}) is
\begin{align}\label{eq:Y_t^G}
    Y_{t,n}^G= (I-\eta_{t,n}A)Y_{t-1,n}^G + \eta_{t,n} Z_t, \ Z_t \sim N(\mathbf{0}, \var(\xi)), \ Y_{0,n}^G=\boldsymbol{0}.
\end{align}
Here $A=\nabla_2F(\mu)=I$. On the other hand, the vanilla SGD iterates can also be seen as
 $ Y_{t,n} -\mu= (I-\eta_{t,n}A)(Y_{t-1,n} -\mu) + \eta_{t,n} \xi_t.$ 
Therefore, it can be seen that $Y_{t,n} - \mu$ and $Y_{t,n}^G$ have exactly the same covariance structure, i.e. $\cov(Y_{s,n}^G, Y_{t,n}^G)=\cov(Y_{s,n}, Y_{t,n})$; on the other hand, even in such a simplified setting, an approximation by Brownian motion, such as that by functional CLT, captures the covariance structure of the iterates $\{Y_t- \mu\}_{t\geq 1}$ only in an asymptotic sense. The Gaussian approximation $Y_t^G$ in (\ref{eq:Y_t^G}) is a particular example of covariance-matching approximations, introduced by \cite{bonnerjee2024aos}- but generalized to account for the particular non-stationarity imposed by Q-learning iterates.

On the other hand, a strong approximation result for \pdtz\ schedule works on the \textit{tail} partial sums, much akin to the tail PR-averaged central limit theory. Moreover, the range of the approximation is also limited between $k_n$ and $n$, which may mean $n-\lfloor \sqrt{n} \rfloor$ to $n$ for the particular case of \ldtz\ schedule. Noticeably, despite the much faster decay from the initialization, for larger values of $\nu$, \pdtz\ can also maintain a time-uniform strong approximation for almost the entire range of its steps. Moreover, in polynomially decaying step-sizes, in aiming for the optimality of strong invariance principles, the choice of $\beta \approx 1$ implies that the decay of $\bq_t$ from the initialization $\bq_0$ is $O(1)$; i.e. there is practically or very slow decay, which results in extremely slow convergence to the asymptotic regime. In contrast, even when uniform Gaussian approximation is assured, the inherent properties of the \pdtz\ schedules do not affect convergence. Finally, no functional central limit theory is even known for these learning schedules.

Finally, we remark that as an immediate result of Theorem~\ref{thm:kmt-q-learn-triangular}, for $p > 2$, 
    \begin{align}\label{eq:berry-esseen}
        \sup_{z \geq 0} \left|\mathbb{P}\left(\max_{k_n\leq t\leq n} \left|\sum_{l = t}^{n} (\bq_l^{c} -\bq^{\star})\right|_{\infty} \leq z\right) - \mathbb{P}\left(\max_{k_n\leq t\leq n} \left|\sum_{l = t}^{n} Y_{l}\right|_{\infty} \leq z\right)\right| \to 0. 
    \end{align}
Beyond theoretical interest,  (\ref{eq:berry-esseen}) hints at practical, bootstrap-based algorithms for time-uniform inference. In particular, the estimation of covariance matrix of $\bar{\bq}_n$, especially for the \pdtz\ learning schedule, may be significantly non-trivial. However, estimation of $\Gamma$ and $H^{\pi^\star}$ can be essentially done using (\ref{eq:bellmanop}) and the fact that $B\bq^\star = \bq^\star$. This hints at an easily implementable Gaussian bootstrap procedure by running multiple independent chains of $Y_t$ parallelly. Similar inferential procedures have been proposed in a time-series context in \cite{Wu2007inference}, and also more recently in \cite{bonnerjee2025sharp} in a local SGD setting.

\section{Simulation Results}
In this section, we present some numerical experiments that empirically explore our theoretical results. In \S\ref{se:simu-1}, we compare the performance of \ldtz\ schedule with the polynomially decaying and the constant learning rates, as well as the \pdtz\ learning rates with $\nu=2, 3$. Moving on, In \S\ref{se:simu-3} we investigate the accuracy of our time-uniform approximations. We also provide some additional simulation studies involving the central limit theorem in Appendix \S\ref{se:simu-2}.

\subsection{Set-up}
For each of the experiments, we consider a $4\times 4$ gridworld with the slippery mechanism in Frozen-Lake \citep{zhang2024constant}, and four actions (left/up/right/down). The discount factor is taken as $\gamma=0.1$. There are two special states, $A$ and $B$, from which the agent can only intend to move to $A'$ and $B'$, respectively. Once an action is chosen according to the behavior policy, the agent moves in the intended direction with probability $0.9$, and with probability $0.05$ each, it instead moves in one of the two perpendicular directions. If the agent attempts to move outside the grid, it remains in the same state and receives a reward of $-1$. Otherwise, the reward depends on the current state, with $r(A)=10$, $r(B)=5$, and $r(s)=0$ for all $s \neq A,B$.

\subsection{Comparative performance between learning rates.}\label{se:simu-1}
In these experiments, we consider Q-learning with initialization at $\boldsymbol{0}$; since it's clearly evident in Figure \ref{fig:1} that \ldtz\ massively outperforms the polynomially decaying step size, we focus on \ldtz\, \pdtz\ and constant learning schedules. For the experiments in Figure \ref{fig:2} (Left), we fix $n=5000$, and run $B=1000$ many Monte-Carlo Q-learning chains. Subsequently, for each learning schedules considered, we plot the mean error $|\bq_{t,n} - \bq^\star|_{\infty}$ for $1000\leq t \leq n$ along with corresponding shaded bands indicating one standard deviation. On the other hand, for Figure \ref{fig:2} (Right), we run $B=1000$ many independent Q-learning chains for each of $n\in \{500, 100, 1500, 2000, 2500\}$, and plot the mean error $|\bq_{n,n} - \bq^\star|_{\infty}$ against $n$, along with corresponding shaded bands.

Clearly the \pdtz\ learning schedules outperforms the constant learning rate, which maintains a consistent bias having converged to a stationary distribution. On the other hand, increasing $\nu$ seems to have a small effect at reducing the error $|\bq_{t,n} - \bq^\star|_{\infty}$ when $t<n$. However, if we focus only on the final iterate error $|\bq_{n,n} - \bq^\star|_{\infty}$, the performance is similar across $\nu\in \{1,2,3\}$. This hints at a surprising stability across the \pdtz\ class, justifying the widespread use of \ldtz\ schedule. 
\begin{figure}[H]
    \centering
    % First figure
    \begin{minipage}{0.48\linewidth}
        \centering
        \includegraphics[width=\linewidth]{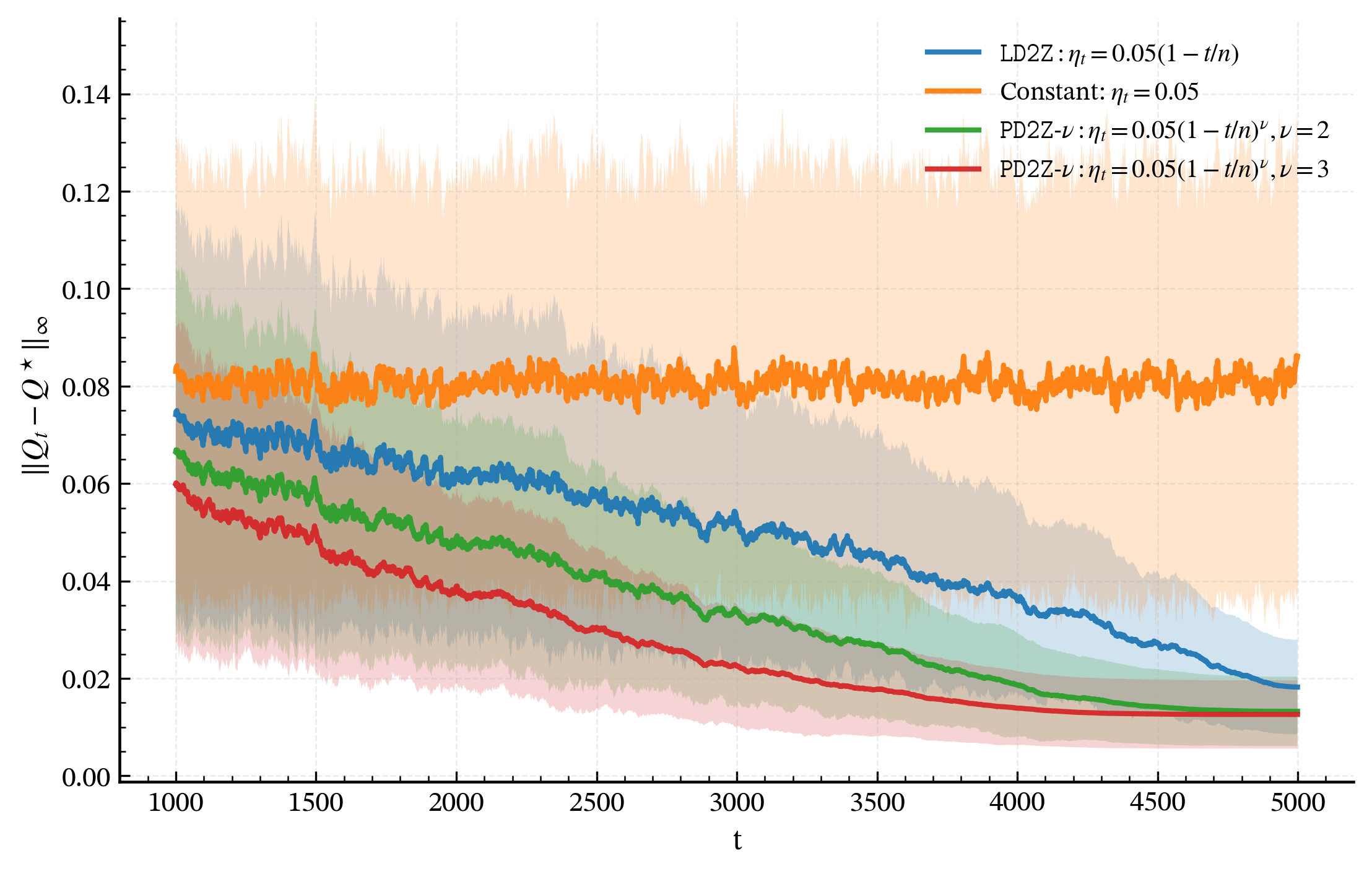}
    \end{minipage}\hfill
    % Second figure
    \begin{minipage}{0.48\linewidth}
        \centering
        \includegraphics[width=\linewidth]{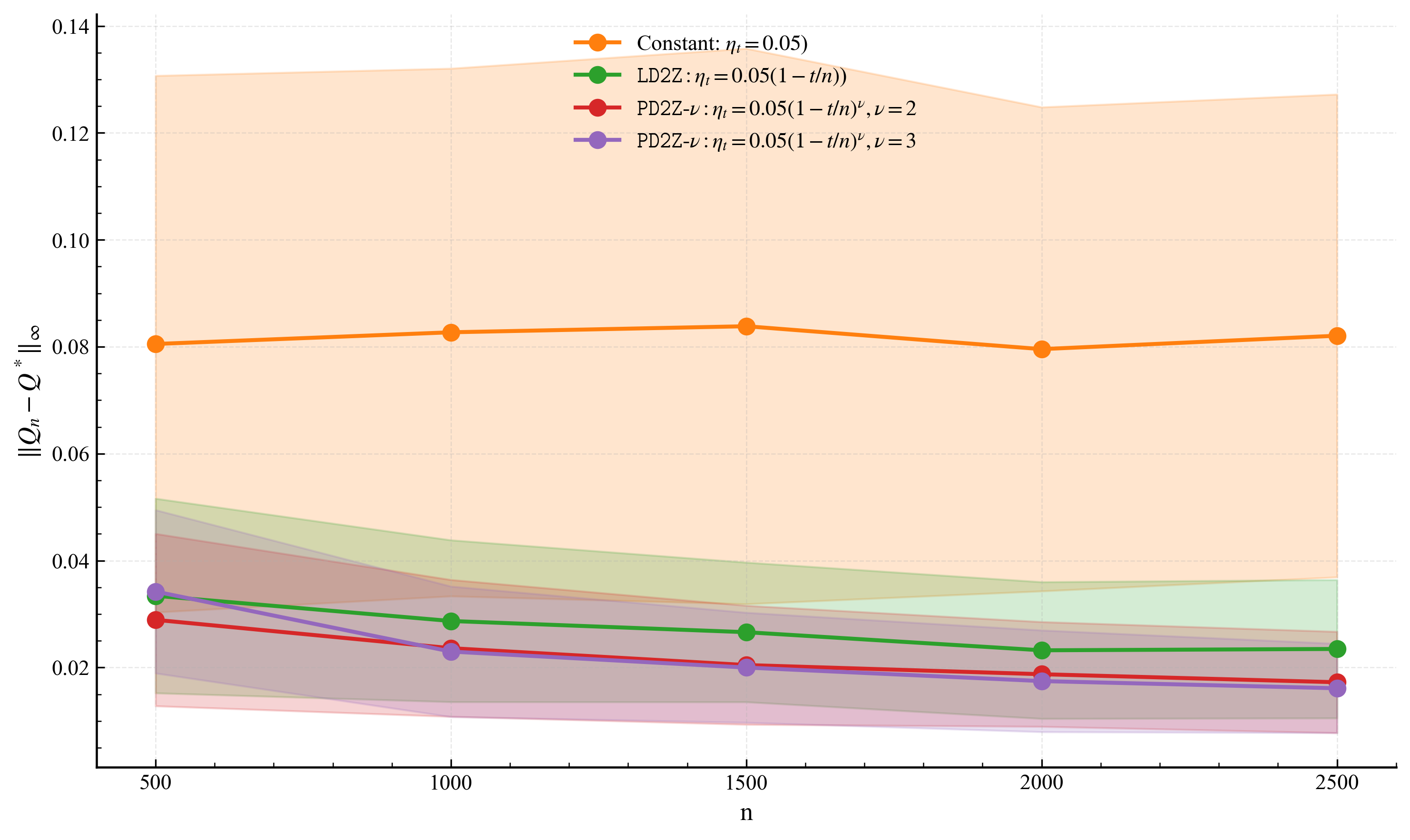}
    \end{minipage}
    \caption{Performance comparison between \ldtz, \pdtz\ with $\nu=2,3$ and constant learning schedules.}\label{fig:2}
\end{figure}

\subsection{Experiments on time-uniform approximations.}\label{se:simu-3}
In this section, we empirically investigate the time-uniform strong approximation results in Theorems \ref{thm:kmt-q-learn-triangular} and \ref{thm:kmt-q-learn}. Working with the same $4\times 4$ gridworld setting with number of iterations $n=5000$ as in the previous section,  in Figure \ref{fig:qqplots} (Left), we consider the quantiles of $ \max_{k_n\leq t\leq n} |\sum_{l=t}^n (\bq_l^{c} -\bq^{\star})|_{\infty}$, {for the \ldtz\ step-size $\eta_t=0.05(1-t/n)$} and compare them with the corresponding quantiles of $ \max_{k_n\leq t\leq n} |\sum_{l=t}^n Y_l|_{\infty}$. All the quantiles are empirically calculated based on $B=500$ Monte Carlo repetitions. Similarly, Figure \ref{fig:qqplots} (Right) corresponds to the Gaussian approximation in Theorem \ref{thm:kmt-q-learn} for the {polynomially decaying learning rate $\eta_t=0.05 t^{-0.65}$}. In particular, Figure \ref{fig:qqplots} (Right) also contains the corresponding quantiles of the Brownian motion based approximation (Theorem 3.1, \cite{Li2023statistical}). Despite the ubiquity of functional central limit theory, the sub-optimality of such approximation in terms of uniform approximation is evident. Together, these experiments establish the accuracy of the time-uniform approximations in \S\ref{se:strong-inv}, calling for their increased use in bootstrap procedures.

\begin{figure}[H]
    \centering
    % --- left figure ---
    \begin{minipage}[t]{0.3\linewidth}
        \centering
        \includegraphics[width=\linewidth]{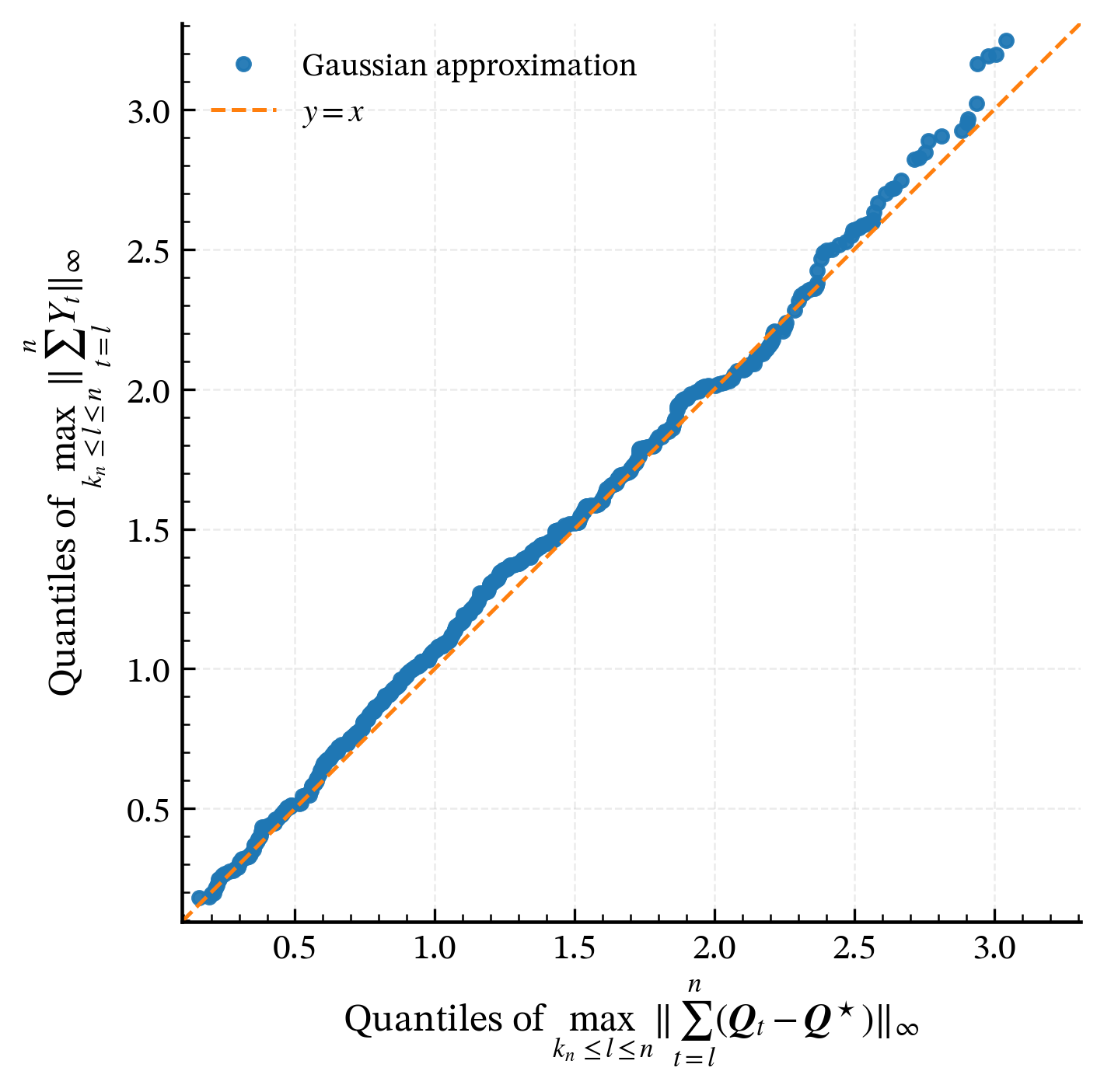}
    \end{minipage}
    \hspace{0.05\linewidth} % small horizontal space
    % --- right figure ---
    \begin{minipage}[t]{0.3\linewidth}
        \centering
        \includegraphics[width=\linewidth]{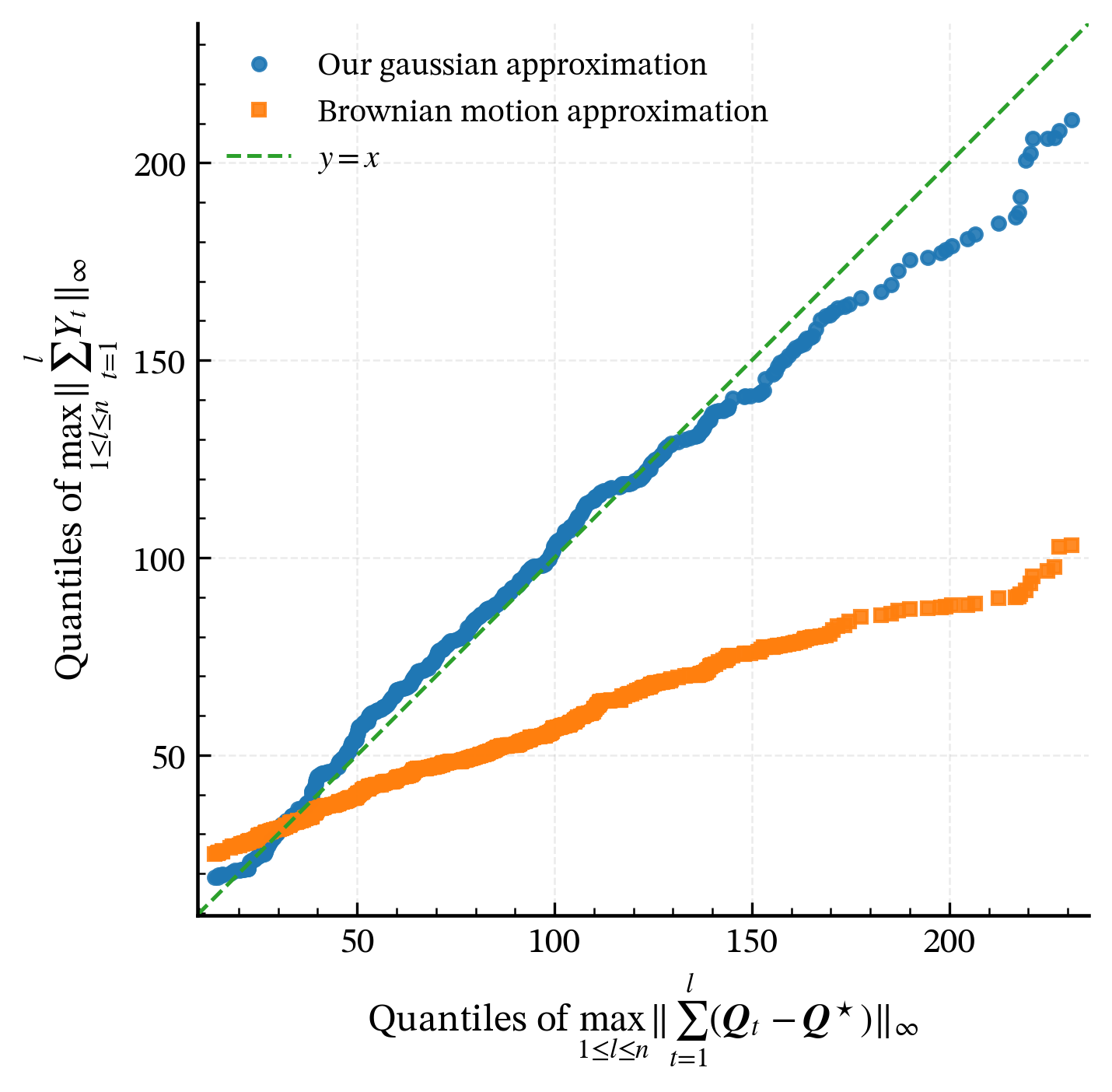}
    \end{minipage}
    \caption{Q–Q plots of sup-norm distributions.}
    \label{fig:qqplots}
\end{figure}

\subsection{Central limit theory in practice.}\label{se:simu-2}
{
\begin{figure}
    \centering
    \includegraphics[width=0.5\linewidth]{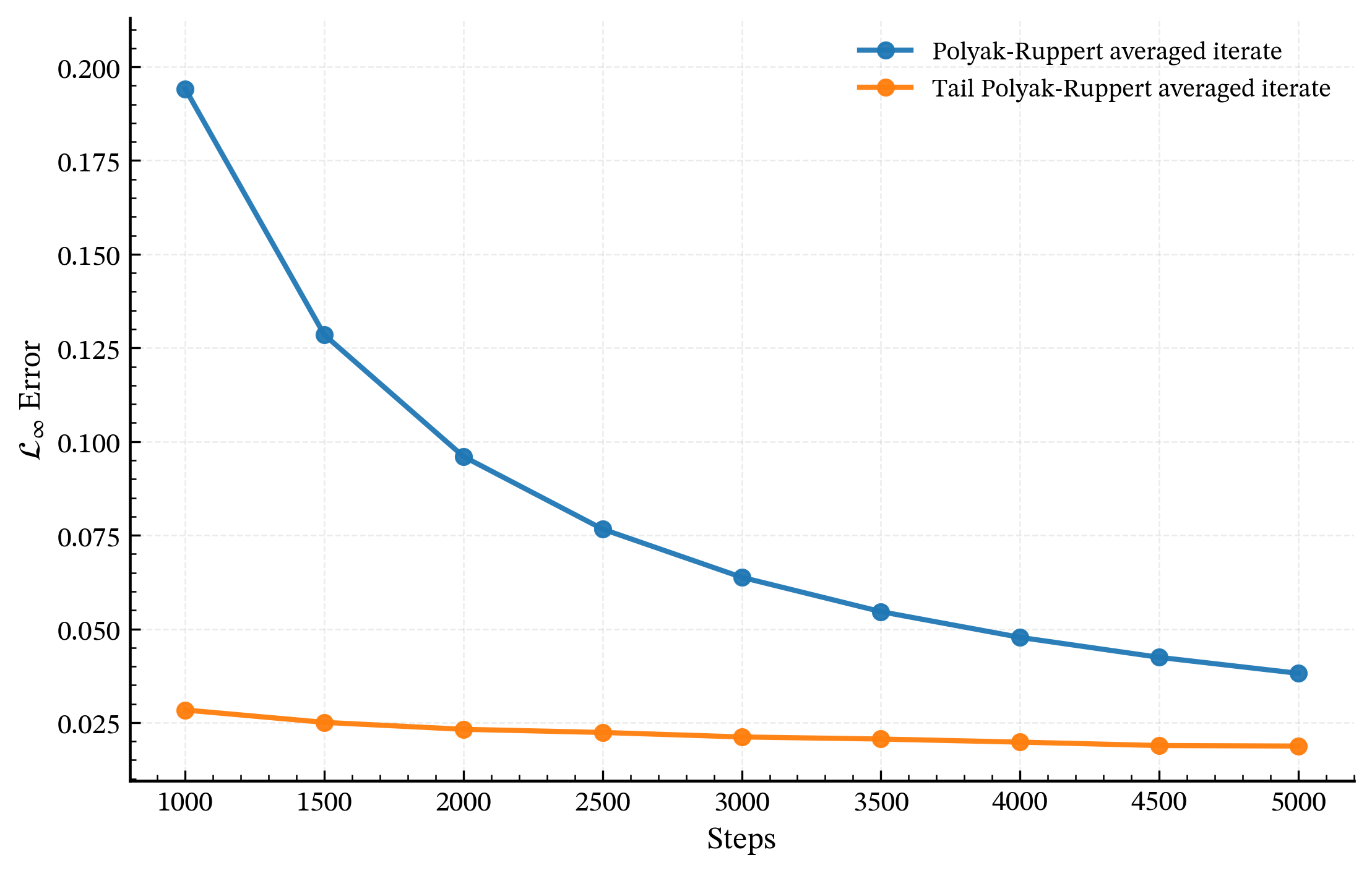}
    \caption{$\mathcal{L}_{\infty}$ error comparison of PR-averaged and tail PR-averaged iterates.}
    \label{fig:pr}
\end{figure}
This section is devoted to empirically validating the central limit theory established in \S\ref{se:clt}. To that end, we first establish the efficacy of the \textit{tail} Polyak-Ruppert averaged iterates $(\bar{\bq}_n)$ over the usual PR-averaged versions (denote by $\tilde{\bq}_n)$ for \ldtz\ learning schedule. For $n\in\
\{1000, 1500, \ldots, 5000\}$, we estimate $\IE[|\bar{\bq}_n - \bq^\star|_{\infty}]$ and $\IE[|\tilde{\bq}_n - \bq^\star|_{\infty}]$ over $B=1000$ Monte-Carlo repetitions. From the corresponding illustration in Figure \ref{fig:pr}, the superiority of $\bar{\bq}_n$ over $\tilde{\bq}_n$ is clear. 
}

\section{Discussion \& Limitations}

In this article, we develop asymptotic theory for the Q-learning with \ldtz\ and the more general \pdtz\ learning schedules. Despite their increasing use in generative models, these learning schedules are yet to be thoroughly explored in the theoretical literature of stochastic approximation algorithms. To the best of our knowledge, this work constitutes the first one to include a systematic treatment of this step-size for Q-learning. Future extensions include the theory for the potential bootstrap algorithm and Berry-Esseen bounds to properly quantify the central limit theory. 

{Moreover, as pointed out by a reviewer, \ldtz\ step-size schedule is applicable primarily in offline reinforcement learning settings with pre-collected datasets, where the total sample size $n$ is known in advance. We acknowledge this as the main limitation of the \ldtz\ schedule when applied to Q-learning. However, our methods allow for the case where $n$ is mis-specified. Let $n_{0} \leq n$ denote the true sample size, while $n$ is used in the \ldtz\ step-size schedule. Then, as long as the mis-specification satisfies $n - n_{0} \leq \alpha \sqrt{n}$ for some constant $\alpha \in (0, 1)$, our asymptotic results remain valid. Generalizing \ldtz\ and \pdtz\ to online RL set-up constitutes an interesting direction, and warrants further research. }

% \section*{Ethics statement}
% The research follows all ethical guidelines. No human data or ethically sensitive content is involved. All potential limitations and justifications are adequately addressed. We do not anticipate any negative impacts, and as such the paper does not include a dedicated speculative discussion of broader societal impacts.
\section{Reproducibility statement}
All the relevant reproducible codes and figures can be found in the anonymous \href{https://github.com/tukeystats/Q-learning}{Github repository}. All the theoretical results and assumptions are rigorously proved and validated in \S\ref{se:app-A} and \S\ref{se:proof-kmt}.

\bibliography{references}
\bibliographystyle{abbrvnat}

\newpage
\section{Appendix A}\label{se:app-A}
In this section we collect the proofs of Theorems \ref{lemma:mse_triangular} and \ref{thm:CLT}.
\begin{proof}[Proof of Theorem \ref{lemma:mse_triangular}]
    Denote $\Delta_{t,n} := \bq_{t,n} - \bq^\star$. Then, it is immediate that
    \begin{align}
    \Delta_{t,n} &= (1-\eta_{t,n})(\bq_{t-1,n} - \bq^{\star}) + \eta_{t,n}(\hat{B}_t \bq_{t-1,n} - B(\bq^{\star})) \nonumber\\
    &=A_t \Delta_{t-1,n} + \eta_{t,n} Z_t + \gamma\eta_{t,n}(M_{t,n} + (H^{\pi_{t-1,n}} - H^{\pi^{\star}})\bq_{t-1,n}),\label{eq:new-q-learn_triangular}
\end{align}
where $A_t=I- \eta_{t,n} G$, and $M_{t,n} = (\mathcal{P}_t - \mathcal{P})(V_{t-1,n}- V^{\star})$. From the definition of greedy policy, it follows that $(H^{\pi_{t-1,n}} - H^{\pi^\star})\bq^\star\leq 0$, where $\leq$ and $\geq$ are interpreted element-wise. Therefore, clearly 
\begin{align*}
    \Delta_{t,n} \leq & \big(I- \eta_{t,n} (I-\gamma H^{\pi_{t-1,n}})\big) \Delta_{t-1,n} + \eta_{t,n} (Z_t + \gamma M_{t,n}) ,
\end{align*}
which directly yields, via Proposition 4 of \cite{} that
\begin{align}
    \|\Delta_{t,n}\|_p^2\leq & (1 - \eta_{t,n}(1-\gamma))^2\|\Delta_{t-1,n}\|_p^2 + 2(p-1) \eta_{t,n}^2 (\|Z_t\|_p^2 + \gamma^2 \| M_{t,n}\|_p^2)  \nonumber\\
    \leq & \big((1 - \eta_{t,n}(1-\gamma))^2 +2 (p-1)\eta_{t,n}^2\gamma^2\big)\IE[|\Delta_{t-1,n}|^2] + \eta_{t,n}^2 c_p \nonumber,
\end{align}
with $c_p= 2(p-1)\Theta_p^{2/p}$. Recursively, it holds that
\begin{align}
    \|\Delta_{t,n}\|_p^2 \leq \tilde{\mathcal{A}}_0^t |\Delta_0|^2 + c_p\sum_{s=1}^t \eta_{s,n}^2 \tilde{\mathcal{A}}_s^t, \label{eq:recursion-triangular}
\end{align}
where $\tilde{\mathcal{A}}_s^t = \prod_{j=s+1}^t (1- \eta_{j,n} c_1 + \eta_{j,n}^2 c_2),$ where $c_1=2(1-\gamma), c_2= (1-\gamma)^2 + 2(p-1)\gamma^2$. From the choice of $\eta$ satisfying $\eta c_1 - \eta^2 c_2 >0$, we can derive 
\[ \tilde{\mathcal{A}}_s^t \leq \mathcal{A}_s^t:=\prod_{j=s+1}^t (1- \eta_{j,n} c_3), \]
for some small constant $c_3 \in (0,1)$. In light of $\sum_{j=1}^t \eta_{j,n} \geq \eta  t (1-n^{-1})^\nu$, we have $\mathcal{A}_0^t \leq \exp(-c_3\eta(1 - n^{-1})^{\nu} t)$. Therefore, applying Lemma \ref{lem:bad-integration} the proof is completed. 
\end{proof}

\begin{proof}[Proof of Theorem \ref{thm:CLT}]
We consider deriving the Gaussian approximation through a series of steps. In particular, our proof strategy is to linearize the Q-learning iterates before applying suitable, off-the-shelf central limit theory. The steps till linearization are not straightforward, especially in light of the complications arising out of \pdtz\ learning rates. In particular, the non-linearity of the Bellman operator requires careful tempering. We provide the formal proof in the following. Throughout the proof, we let $k_n= n-\lfloor cn^{\frac{\nu}{\nu+1}}\rfloor$.
\subsection{Step I}
    Let $\bq_0^\diamond= \bq^\star$, and define the oracle Q-learning iterates
    \begin{equation}\label{eq:stationary-approx}
        \bq_{t,n}^{\diamond}=(1-\eta_{t,n})\bq_{t-1,n}^{\diamond} + \eta_{t,n} \hat{B}_t \bq_{t-1,n}^\diamond, \ t\geq 1.
    \end{equation}
   Note that 
   \begin{align}
       |\bq_{t,n} - \bq_{t,n}^{\diamond}|_{\infty} &\leq (1-\eta_{t,n}) |\bq_{t-1,n} - \bq_{t-1,n}^{\diamond}|_{\infty} + \eta_{t,n} |\hat{B}_t\bq_{t,n} - \hat{B}_t\bq_{t,n}^{\diamond} |_{\infty} \nonumber\\
       &\leq (1 -\eta_{t,n}(1-\gamma))|\bq_{t-1,n} - \bq_{t-1,n}^{\diamond}|_{\infty} \nonumber\\
       & \vdots \nonumber\\
       & \leq Y_0^t(1-\gamma)\  |\bq_0 - \bq^{\star}|_{\infty}, \label{eq:step-1-1}
   \end{align}
   where for $c>0$, $Y_{i}^t(c)=\prod_{j=i+1}^t(1-\eta_{j,n}c)$, and the second inequality in (\ref{eq:step-1-1}) follows from the contraction of Bellman operators (\ref{eq:bellmanop}). Elementary calculations show that $Y_0^t(1-\gamma)\lesssim_\gamma \exp(-c_{\nu, \gamma, \eta}t)$ for some $c>0$, which implies, via (\ref{eq:step-1-1}), that
\begin{align}
    n^{\frac{\nu}{2(\nu+1)}}|\Bar{\bq}_n - \Bar{\bq}^\diamond_n| &\leq n^{\frac{\nu}{2(\nu+1)}} (n-k_n)^{-1} \sum_{t=k_n}^n |\bq_{t,n} - \bq_{t,n}^{\diamond}|_{\infty} \cr
    &\lesssim n^{-\frac{\nu}{2(\nu+1)}} \int_1^n \exp(-ct) \ \text{d}t \nonumber\\ &= O(n^{-\frac{\nu}{2(\nu+1)}}) \text{ almost surely}. \label{eq:exp-clt}
\end{align}
Therefore, Step I enables us to investigate the asymptotic properties of $\Bar{\bq}^\diamond$.
\subsection{Step II}
Define the empirical version of $\mathcal{P}$ as
\begin{align} \label{eq:empirical-transition}
    \mathcal{P}_t((s,a), \cdot)= (\mathbf{1}_{s_t=s', s_{t-1}=s, a_{t-1}=a})_{s' \in S}.
\end{align}
In other words, $\mathcal{P}_t\in \R^{D \times |S|}$ is a matrix with one-hot-coded rows. Moreover, let 
\begin{align}\label{eq:defn of V_{t,n}}
V_{t,n}(s)= \max_{a'\in \mathcal{A}}\bq_{t,n}(s,a), \text{ and } V^{\star}(s)=\max_{a'\in \mathcal{A}}\bq^{\star}(s,a),    
\end{align}
with $V_{t,n} = (V_{t,n}(s))_{s \in \mathcal{S}} \in \R^{|\mathcal{S}|}$, and $V^{\star}$ likewise defined. Note that, 
\[ \mathcal{P}_t V_{t-1,n}= \max_{a^{\prime} \in \mathcal{A}} \bq_{t-1,n}\left(N\left(s, a, U_t\right), a^{\prime}\right),  \]
and $\mathcal{P}V_{t-1,n}= \IE[\mathcal{P}_t V_{t-1,n}| \mathcal{F}_{t-1}]$ where $\mathcal{F}_{t-1}$ is the $\sigma$-field induced by the random variables $(U_s, V_s)_{s\leq t}$. Clearly, $\mathcal{P}V^{\star}=\IE[\max_{a^{\prime} \in \mathcal{A}} \bq^{\star}\left(N\left(s, a, U\right), a^{\prime}\right)]$, $U\sim U[0,1]$. Observe that 
\begin{align}
    \hat{B}_t \bq_{t-1,n}^\diamond - B \bq^{\star} &= \hat{B}_t \bq_{t-1,n}^\diamond - \hat{B}_t \bq^{\star} + Z_t \label{eq:defn Z}\\
    &= \gamma \mathcal{P}_t(V_{t-1,n} - V^{\star}) + Z_t \label{eq:applying bellman op defn}\\
    &= \gamma \bigg(M_{t,n} + (H^{\pi_{t-1,n}^\diamond} - H^{\pi^\star})\bq_{t-1,n}^\diamond + \gamma H^{\pi^\star}(\bq_{t-1,n}^\diamond - \bq^\star)\bigg) + Z_t \label{eq: definition of M_{t,n}},
\end{align}
where (\ref{eq:defn Z}) follows from $Z_t = \hat{B}_t\bq^\star - B\bq^\star$; (\ref{eq:applying bellman op defn}) is implied by (\ref{eq:bellmanop}), and (\ref{eq: definition of M_{t,n}}) is obtained after defining $M_{t,n} = (\mathcal{P}_t - \mathcal{P})(V_{t-1,n}- V^{\star})$. Note that, in particular $Z_t$ are mean-zero i.i.d. random variables, and $(M_{t,n})_{t\geq 1}$ is a martingale difference sequence. Now, using $B(\bq^{\star})=\bq^{\star}$ and (\ref{eq:defn Z})-(\ref{eq: definition of M_{t,n}}), rewrite (\ref{eq:stationary-approx}) as
\begin{align}
    \Delta_{t,n}:= \bq_{t,n}^{\diamond} - \bq^{\star} &= (1-\eta_{t,n})(\bq_{t-1,n}^{\diamond} - \bq^{\star}) + \eta_{t,n}(\hat{B}_t \bq_{t-1,n}^\diamond - B(\bq^{\star})) \nonumber\\
    &=A_t \Delta_{t-1,n} + \eta_{t,n} Z_t + \gamma\eta_{t,n}(M_{t,n} + (H^{\pi_{t-1,n}^\diamond} - H^{\pi^{\star}})\bq_{t-1,n}^\diamond),\label{eq:new-q-learn}
\end{align}
where $A_{t,n}=I- \eta_{t,n} G$, $G=I-\gamma H^{\pi^{\star}}$, and $\Delta_0=\mathbf{0}$. Define another ``sandwich" sequence as follows:
\begin{align}
    \Delta_{t,n}^{(L)} &= A_t \Delta_{t-1,n}^{(L)} + \eta_{t,n} Z_t + \gamma\eta_{t,n}M_{t,n}^{} , \ \Delta_0^{(2)}=\mathbf{0} \label{eq:lb}.
\end{align}
Following the property of optimal policy, it is immediate that $(H^{\pi_{t}^\diamond}- H^{\pi^\star})\bq_{t-1,n} \geq 0,$ and hence, 
\begin{align}
    \Delta_{t,n}^{(L)} \leq \Delta_{t,n}\label{eq:sandwich}.
\end{align}
Moreover, it follows that
\begin{align}
    \IE[|\Delta_{t,n}^{}- \Delta_{t,n}^{(L)}|_{\infty}]\leq &(1-\eta_{t,n}(1-\gamma))\IE[|\Delta_{t-1,n}^{}- \Delta_{t-1,n}^{(L)}|_{\infty}] + \IE[|(H^{\pi_{t-1,n}^\diamond} - H^{\pi^{\star}})\bq_{t-1,n}^\diamond|_{\infty}] \nonumber\\
    \leq &(1-\eta_{t,n}(1-\gamma))\IE[|\Delta_{t-1,n}^{}- \Delta_{t-1,n}^{(L)}|_{\infty}] + \gamma \eta_{t,n}\IE[|(H^{\pi_{t-1,n}^\diamond} - H^{\pi^{\star}})\Delta_{t-1,n}|_{\infty}] \label{eq:greedy-ineq}\\
    \leq &(1-\eta_{t,n}(1-\gamma))\IE[|\Delta_{t-1,n}^{}- \Delta_{t-1,n}^{(L)}|_{\infty}] + \gamma L \eta_{t,n} \IE[|\Delta_{t-1,n}|_{\infty}^2] \label{eq:ass-2-application}\\
    = & \gamma L \sum_{s=0}^t \eta_{s,n}\mathcal{A}_s^t \IE[|\bq_{s,n} -\bq^\star|_{\infty}^2] \nonumber \\
    \lesssim & \sum_{s=0}^{k_n} \eta_{s,n}^{2} \mathcal{A}_s^t + n^{-\frac{\nu}{\nu+1}} \sum_{s=k_n+1}^t \eta_{s,n} \mathcal{A}_s^t \lesssim n^{-\frac{\nu}{\nu+1}} . \label{eq:lemma g.2}
\end{align}
where (\ref{eq:greedy-ineq}) follows from noting that $(H^{\pi_{t-1,n}^\diamond} - H^{\pi^{\star}})\bq^\star\leq 0$; (\ref{eq:ass-2-application}) follows from Assumption \ref{ass:lipschitz}, and (\ref{eq:lemma g.2}) involves an application of Theorem \ref{lemma:mse_triangular} and Lemma \ref{lem:bad-integration}. Clearly, (\ref{eq:lemma g.2}) produces
\begin{align*}
    n^{\frac{\nu}{2(\nu+1)}}\IE[|\Bar{\Delta}_n - \Bar{\Delta}^{(L)}_n|_{\infty}]= O(n^{-\frac{\nu}{2(\nu+1)}}) 
\end{align*}
which implies that
\begin{align}
    n^{\frac{\nu}{2(\nu+1)}}\big(\Bar{\Delta}_n - \Bar{\Delta}^{(L)}_n\big) \overset{\IP}{\to} 0 \label{eq:X-X approx clt}.
\end{align}

\subsection{Step III}
 In this step, we will show that both $\Delta_{t,n}^{(L)}$ is well-approximated by a linear process. To that end, further define 
\begin{align}
    X_{t,n} &= A_t X_{t-1,n} + \eta_{t,n} Z_t, \ X_0=\mathbf{0}\label{eq:linear-process-lb}.
\end{align}
With this definition established, we can proceed to approximate $\Delta_{t,n}^{(L)}$ by $X_{t,n}$. Indeed, with $\Delta_{t,n}^{(L)}:= \Delta_{t,n}^{(L)}- X_{t,n} \in \R^{D}$.
\begin{align}
    \IE[|\Delta_{t,n}^{(L)}|_{\infty}^2] \lesssim_D \IE[|\Delta_{t,n}^{(L)}|_{2}^2]
    &= \IE[|(I- \eta_{t,n}(I-\gamma H^{\pi^\star}))\Delta_{t-1,n}^{(L)}|_{2}^2] + \gamma\eta_{t,n}^2 \IE[|M_{t,n}^{}|_{2}^2]\nonumber \nonumber\\
    &\leq  (1- \eta_{t,n}(1-\gamma))\IE[|\Delta_{t-1,n}^{(L)}|_{2}^2] + \gamma \eta_{t,n}^2 2\IE[|V_{t-1,n}^{} -V^{\star}|_{2}^2]\nonumber\\
    &\lesssim \sum_{s=1}^t \eta_{s,n}^2 \mathcal{A}_s^t \IE[|V_{s-1}- V^\star|^2] \nonumber\\
    & \lesssim \sum_{s=0}^{k_n} \eta_{s,n}^{3} \mathcal{A}_s^t + n^{-\frac{\nu}{\nu+1}}\sum_{s=k_n+1}^t \eta_{s,n}^2 \mathcal{A}_s^t \lesssim n^{-2\frac{\nu}{\nu+1}} \label{eq:delt1-X1}
\end{align}
where the second equality uses the fact that $M_{t,n}^{}$ are martingale differences; the inequality in the third assertion involves (i) using that $H^{\pi_{t-1,n}^\diamond}$ is a stochastic matrix to deduce $|I- \eta_{t,n}(I-\gamma H^{\pi^\star})|_{\infty} =1-\eta_{t,n}(1-\gamma)$, and (ii) using that both $\mathcal{P}_t$ and $\mathcal{P}$ are stochastic matrices to obtain $|P_t - P|_{\infty}\leq 2$ ; the final assertion invokes Theorem \ref{lemma:mse_triangular} and Lemma \ref{lem:bad-integration}. Equation \ref{eq:delt1-X1} immediately results in
\begin{align*}
     n^{\frac{\nu}{2(\nu+1)}}\IE[|\Bar{\Delta}_n^{(L)} - \Bar{X}_n|_{\infty}]= n^{-\frac{\nu}{2(\nu+1)}}\sum_{t=k_n}^n \sqrt{\IE[|\Delta_{t,n}^{(L)}|_{\infty}^2]}  = O(n^{-\frac{\nu}{2(\nu+1)}}),
\end{align*}
which, similar to (\ref{eq:X-X approx clt}) implies that 
\begin{align}
    n^{\frac{\nu}{2(\nu+1)}}\big(\Bar{\Delta}_n^{(L)} - \Bar{X}_n\big) \overset{\IP}{\to} 0 \label{eq:twin-linear clt}.
\end{align}
\subsection{Step IV}
In light of (\ref{eq:exp-clt}), (\ref{eq:X-X approx clt}) and (\ref{eq:twin-linear-approx}), the proof is complete if one derives a central limit theory of $\Bar{X}_n=(n-k_n)^{-1}\sum_{t=k_n}^n X_{t,n}$. To that end, re-write
\begin{align*}
    \sum_{t=k_n}^n X_{t,n} = \sum_{s=1}^{n} \eta_{s,n} \mathcal{V}_{s,n} Z_s, \ \mathcal{V}_{s,n} = \sum_{t=s\vee k_n}^n \mathbf{A}_{s, n}^t
\end{align*}
where $\mathbf{A}_{s, n}^t = \prod_{j=s+1}^t A_{j,n}$. We proceed step-by-step. Let $L_{s,n} = s\vee k_n$. Firstly, note that
\begin{align} 
    \sum_{s=1}^n\eta_{s,n}^2 |\mathcal{V}_{s,n}|_F^2 \lesssim n^{\frac{\nu}{\nu+1}}  \sum_{s=1}^n\sum_{t=\thr+1}^n \eta_{s,n}^2 |\Abf_{s,n}^t|_F^2  \leq n^{\frac{\nu}{\nu+1}} \sum_{t=k_n}^n\sum_{s=1}^{t}\eta_{s,n}^2 |\Abf_{s,n}^t|_F^2  &= O\Big(n^{\frac{\nu}{\nu+1}} \frac{\sum_{t=k_n}^n (n-t)^\nu }{n^\nu} \Big) \nonumber\\&= O(n^{\frac{\nu}{\nu+1}}), \label{eq:lind}
\end{align}
which establishes the Lindeberg condition that $ n^{-\frac{\nu}{2(\nu+1)}}\max_s \eta_{s,n}|\mathcal{V}_{s,n}| = O(1)$. Now we shift focus to showing that 
\[ W_n  := n^{-\frac{\nu}{\nu+1}} \sum_{s=1}^n\eta_{s,n}^2 \mathcal{V}_{s,n}\Gamma\mathcal{V}_{s,n}^\top  \to \Sigma\]
for some $\Sigma \succ0$. Write
\[ W_n = (1-1/n)^{\frac{\nu}{\nu+1}} W_{n-1} + R_n, \]
where \begin{align}\label{eq:R_n}
R_n : = n^{-\frac{\nu}{\nu+1}} \sum_{s=1}^{n-1}\left[\left(C_{s, n}-C_{s, n-1}\right) \Gamma C_{s, n-1}^{\top}+C_{s, n} \Gamma\left(C_{s, n}-C_{s, n-1}\right)^{\top}\right],\ C_{s,n} = \eta_{s,n} \mathcal{V}_{s,n}.
\end{align}
The proof follows by showing that $nR_n$ is a Cauchy sequence in $\R^{d\times d}$ through an argument mimicking Lemma \ref{lem:bad-integration}, and we omit the details for brevity. Finally, our conclusion follows from \eqref{eq:lind} via Lindeberg-Feller central limit theory.
\end{proof}

\section{Appendix B: Discussion on Strong approximation of Q-learning iterates} \label{se:proof-kmt}
\textbf{Related Literature.} The method of~\emph{invariance principle} was introduced by~\citet{Erdos1946certain} and has since been extensively studied, serving as a powerful tool for analyzing distributional properties in a wide range of statistical inference problems~\citep{csorgo1984komlos, Csorgo2014strong}. Applications include nonparametric simultaneous inference~\citep{Liu2010simultaneous, Karmakar2022simultaneous}, change-point detection and inference~\citep{Wu2007inference}, online statistical inference~\citep{Lee2022fast, Zhu2024arxiv, Li2023statistical}, and construction of time-uniform confidence sequences~\citep{Waudby2024time, Xie2024asymptotic}.

For independent and identically distributed (i.i.d.) random variables, \citet{Strassen1964invariance} initialed the study of almost sure approximation for the partial sums by Wiener process, and was later refined by~\citet{CsorgHo1975I} and~\citet{csorgHo1975II}. The optimal strong approximation in this setting was established in the celebrated work~\citep{Komlos1975approximation, Komlos1976approximation}. Specifically, let $\xi_{1}, \ldots, \xi_{n} \in \mathbb{R}$ be i.i.d.~centered random variables with $\mathrm{Var}(\xi_{1}) = \sigma^{2}$ and $\mathbb{E} |\xi_{1}|^{p} < \infty$ for some constant $p > 2$. Then, for the sequence of partial sums $\{S_{t}\}_{t = 1}^{n}$, where $S_{t} = \sum_{j = 1}^{t} \xi_{j}$, there exists a probability space on which one can define random variables $\xi_{1}^{c}, \ldots, \xi_{n}^{c}$ with the partial sum process $S_{t}^{c} = \sum_{j = 1}^{t} \xi_{j}^{c}$, $t \geq 1$, and a Brownian motion $\mathbb{B}(\cdot)$ such that $\{S_{t}^{c}\}_{t = 1}^{n} \overset{\mathcal{D}}{=} \{S_{t}\}_{t = 1}^{n}$ and 
\begin{align*}
    \max_{1\leq t\leq n} |S_{t}^{c} - \sigma \mathbb{B}(t)| = o_{a.s.} (n^{1/p}). 
\end{align*} 
Extensions of this result to multidimensional independent (but not necessarily identically distributed) random vectors has been developed by~\citet{Einmahl1987useful}, \citet{Shao1995strong}, \citet{gotzezaitsev}, among others. Another line of research, more relevant to the online learning where the outputs may exhibit temporal dependence, has focused on generalizing the above strong approximation to dependent data; see, for example, \citet{Heyde1973invariance}, \citet{Lu1987strong}, \citet{Wu2007strong}, \citet{Liu2009strong}, \citet{Dedecker2012rates}, \citet{Merlevede2012strong}, among others. A notable contribution in this direction was made by~\citet{Berkes2014komlos}, who established the optimal strong approximation for a broad class of causal stationary sequence $\{\xi_{t}\}_{t \geq 1}$. Under mild regularity conditions, they proved that 
\begin{align}
\label{eq_GA_stationary}
    \max_{1\leq t\leq n} |S_{t}^{c} - \sigma_{\infty} \mathbb{B}(t)| = o_{a.s.} (n^{1/p}), 
\end{align}
where $\sigma_{\infty}^{2} = \sum_{t \in \mathbb{Z}} \mathrm{Cov}(\xi_{0}, \xi_{t}) = \lim_{n \to \infty} \mathrm{Var}(S_{n})/n$ stands for the long-run variance. This result implies that the process $\{\sigma_{\infty} \mathbb{B}(t)\}_{t = 1}^{n}$ can preserve the second-order properties of $\{S_{t}\}_{t \geq 1}$ asymptotically.

However, in the context of Q-learning with time-varying step sizes, these results do not apply due to the nonstationary nature of the iterates $\{\bq_{t,n}\}_{t \geq 1}$ defined in~(\ref{eq:Q-iterate}). Unfortunately, strong approximations for non-stationary data remain relatively underexplored. Some contributions include~\citet{Wu2011gaussian}, \citet{Karmakar2020optimal} and~\citet{Mies2023sequential}, which lead to the following result: there exists a Gaussian process $\{\mathcal{G}_{t}\}_{t \geq 1}$ such that $\mathrm{Cov}(\mathcal{G}_{t}, \mathcal{G}_{s}) \approx \mathrm{Cov} (S_{t}, S_{s})$ and 
\begin{align}
\label{eq_GA_non-stationary}
    \max_{1\leq t\leq n} |S_{t}^{c} - \mathcal{G}_{t}| = o_{\mathbb{P}} (\tau_{n}). 
\end{align} 
Compared to $\{\sigma_{\infty} \mathbb{B}(t)\}$ in~(\ref{eq_GA_stationary}), this more general $\{\mathcal{G}_{t}\}$ can better capture the dependence structure of $\{S_{t}\}$, as it allows potentially non-stationary increments $\{\mathcal{G}_{t} - \mathcal{G}_{t - 1}\}_{t \geq 1}$. However, until the recent work of~\citet{bonnerjee2024aos}, it remained unclear how to explicitly construct such a process with optimal convergence rate. They provided an optimal Gaussian approximation of the form~(\ref{eq_GA_non-stationary}) with optimal $\tau_{n} = n^{1/p}$ and an explicit construction of the coupling Gaussian process $\{\mathcal{G}_{t}\}$. Motivated by this, one of the main objectives of this paper is to derive an optimal Gaussian approximation for $Q$-learning, including an explicit construction of the coupling Gaussian process. It is important to note that the dependence structure of $\{\bq_{t,n}\}_{t \geq 1}$ is significantly more complex than that considered in~\citet{bonnerjee2024aos}, and thus their results are not directly applicable.      

Now we proceed to the proofs of the results in \S\ref{se:strong-inv}.
\begin{proof}[Proof of Theorem \ref{thm:kmt-q-learn-triangular}]
   From equations (\ref{eq:step-1-1}), (\ref{eq:lemma g.2}) and (\ref{eq:delt1-X1}) it also follows that
\begin{align} \label{eq:twin-linear-approx}
      \max_{k_n\leq t\leq n}|\sum_{s=t}^n(\bq_{s,n} - \bq^\star - X_{s,n}^{})|= O_{\IP}(1).
\end{align}
Note that (\ref{eq:linear-process-lb}) can be cast into the following form:
\begin{align}
    X_{t,n} = \sum_{s=1}^t \eta_{s} \mathbf{A}_{s-1,n}^t Z_s, \label{eq:linear-process}
\end{align}
where $\mathbf{A}_{s, n}^t=\prod_{j=s+1}^tA_{j,n}$, $s, t\geq 0$, and $\mathbf{A}_{t}^t:=I$ for $t\geq 1$. Moreover, using Theorem 4 of  \cite{gotzezaitsev}, on a possibly enriched probability space, there exists $\aleph_t \overset{i.i.d.}{\sim} N(0, \Gamma)$, such that 
\begin{align}\label{eq:kmt}
    \max_{1\leq t \leq n}|\sum_{s=1}^t(Z_s -\aleph_s)|_{\infty}=o_{\IP}(n^{1/p}).
\end{align}
If one defines $Y_t$ as
\[ Y_t =(I- \eta_{t,n}(I- \gamma H^{\pi^{\star}}))Y_{t-1} + \eta_{t,n} \aleph_t ,\]
then, for $t \geq k_n$, 
\begin{align}
    \sum_{l=t}^n (X_l - Y_l) =& \sum_{l=t}^n \sum_{s=1}^l \eta_{s,n} \mathbf{A}_{s-1,n}^l (Z_s- \aleph_s) \nonumber\\
    = & \sum_{s=1}^n \sum_{l=s\vee t}^n \eta_{s,n} \mathbf{A}_{s-1,n}^l (Z_s- \aleph_s) \nonumber\\
    =& \sum_{s=1}^t \sum_{l=t}^n \eta_{s,n} \mathbf{A}_{s-1,n}^l (Z_s- \aleph_s) + \sum_{s=t+1}^n \sum_{l=s}^n \eta_{s,n} \mathbf{A}_{s-1,n}^l (Z_s- \aleph_s). \label{eq:triangular-decomp}
\end{align}
Let us tackle the terms in (\ref{eq:triangular-decomp}) one-by-one. In particular, a similar treatment as Lemma \ref{lem:bad-integration} provides that for all $s \in [n]$
\[ \max_{k_n\leq t \leq n} \max_{1\leq s\leq t}\eta_{s,n}\sum_{l=t}^n |\mathbf{A}_{s-1,n}^l|_F = O(1). \]
Denote $\Gamma_{s,n}^t = \eta_{s,n} \sum_{l=t}^n A_{s-1,n}^l$. Therefore, for the first term in (\ref{eq:triangular-decomp}), one obtains 
\begin{align}
    &\max_{k_n \leq t \leq n} |\sum_{s=1}^t \sum_{l=t}^n \eta_{s,n} \mathbf{A}_{s-1,n}^l (Z_s- \aleph_s)  |_{\infty}\nonumber\\  \leq & \max_{k_n\leq t \leq n} \bigg(|\Gamma_{1,n}^t|_F + \sum_{s=2}^t |\Gamma_{s,n}^t - \Gamma_{s-1,n}^t|_F \bigg) \max_{k_n \leq t \leq n} |\sum_{s=1}^t(Z_s- \aleph_s)|_{\infty} \nonumber\\ = & o_{\IP}(n^{\frac{1}{p} + \frac{1}{\nu+1}}) \label{eq:first-term},
\end{align}
where the $o_{\IP}$ assertion follows from (\ref{eq:kmt}), and the $n^{\frac{1}{\nu+1}}$ rate appears due to 
\[\Gamma_{s,n}^t - \Gamma_{s-1,n}^t = \bigg(\frac{\eta_{s,n} - \eta_{s-1,n}}{\eta_{s,n}} I - \frac{\eta_{s-1,n}^2}{\eta_{s,n}} A \bigg) \Gamma_{s-1,n}^t, \]
upon which we invoke Lemma \ref{lem:bad-integration} along with $t \geq k_n$, and the elementary bound $\max_{t}\sum_{s=1}^t \frac{\eta_{s,n} - \eta_{s-1,n}}{\eta_{s,n}} \lesssim \log n$.
 The assertion for the second term follows from noting 
\[ \max_{k_n \leq t \leq n} \max_{t \leq s \leq n}\eta_{s,n} \sum_{l=s}^n|\mathbf{A}_{s-1,n}^l|_F \leq  \max_{k_n\leq t \leq n} \max_{1\leq s\leq t}\eta_{s,n}\sum_{l=t}^n |\mathbf{A}_{s-1,n}^l|_F, \]
after which we follow a similar argument as above.
This completes the proof.
\end{proof}

\begin{proof}[Proof of Theorem \ref{thm:kmt-q-learn}]
    We follow a proof similar to that of Theorem \ref{thm:kmt-q-learn-triangular}. Since the learning rates no longer depend on the number of iterations $n$, we omit the $n$ from the subscript.
    \subsection{Step I}
   Similar to Step I in Theorem \ref{thm:kmt-q-learn}, elementary calculations show that $Y_0^t(1-\gamma)\lesssim_\gamma \exp(-ct^{1-\beta})$ for some $c>0$, which implies, via (\ref{eq:step-1-1}), that
   \begin{align}
       \max_{1\leq t \leq n}|\sum_{s=1}^t(\bq_s- \bq_s^\diamond) |_{\infty} \leq \sum_{t=1}^n |\bq_t - \bq_t^{\diamond}|_{\infty} \lesssim \int_1^n \exp(-ct^{1-\beta})= O(1) \text{ almost surely}.\label{eq:polyexp-approx}
   \end{align}

\subsection{Step II}
In this case, it follows that
\begin{align}
    \IE[|\Delta_t^{}- \Delta_t^{(L)}|_{\infty}]\leq &(1-\eta_t(1-\gamma))\IE[|\Delta_{t-1}^{}- \Delta_{t-1}^{(L)}|_{\infty}] + \IE[|(H^{\pi_{t-1}^\diamond} - H^{\pi^{\star}})\bq_{t-1}^\diamond|_{\infty}] \nonumber\\
    \leq &(1-\eta_t(1-\gamma))\IE[|\Delta_{t-1}^{}- \Delta_{t-1}^{(L)}|_{\infty}] + \gamma \eta_t\IE[|(H^{\pi_{t-1}^\diamond} - H^{\pi^{\star}})\Delta_{t-1}|_{\infty}] \nonumber\\
    \leq &(1-\eta_t(1-\gamma))\IE[|\Delta_{t-1}^{}- \Delta_{t-1}^{(L)}|_{\infty}] + \gamma L \eta_t \IE[|\Delta_{t-1}|_{\infty}^2] \nonumber\\
    \leq & (1-\eta_t(1-\gamma))\IE[|\Delta_{t-1}^{}- \Delta_{t-1}^{(L)}|_{\infty}] + L^2 C\eta_{t}^2\label{eq:polylemma g.2},
\end{align}
where (\ref{eq:polylemma g.2}) involves an application of Theorem E.2 of \cite{Li2023statistical}. Clearly, in lieu of $\beta>1-1/p$, (\ref{eq:polylemma g.2}) entails
\[ \IE[|\Delta_t-\Delta_t^{(L)}|_{\infty}]=O(\eta_t), \]
which produces
\begin{align} \label{eq:polyX-X approx final}
    \max_{1\leq t\leq n}|\sum_{s=1}^t(\Delta_t^{} - \Delta_t^{(L)})|_{\infty} = o_{\IP}(n^{1/p}).
\end{align}
\subsection{Step III}
In this step, we have,
\begin{align}
    \IE[|\delta_t^{(L)}|_{\infty}^2] \lesssim_D \IE[|\delta_t^{(L)}|_{2}^2]
    &= \IE[|(I- \eta_t(I-\gamma H^{\pi^\star}))\delta_{t-1}^{(L)}|_{2}^2] + \gamma\eta_t^2 \IE[|M_t^{}|_{2}^2]\nonumber \nonumber\\
    &\leq  (1- \eta_t(1-\gamma))\IE[|\delta_{t-1}^{(L)}|_{2}^2] + \gamma \eta_t^2 2\IE[|V_{t-1}^{} -V^{\star}|_{2}^2]\nonumber\\
    &\leq (1- \eta_t(1-\gamma))\IE[|\delta_{t-1}^{(L)}|_{2}^2] + O(\eta_t^3), \label{eq:polydelt1-X1}
\end{align}
whereupon one invokes Theorem E.2 of \cite{Li2023statistical} to conclude $\IE[|\Delta_{t-1}|_{\infty}^2] = O(\eta_t)$. Equation (\ref{eq:polydelt1-X1}) immediately results in
\begin{align} \label{eq:polytwin-linear-approx}
     \max_{1\leq t\leq n}|\sum_{s=1}^t(\Delta_t^{(L)} - X_t^{})|= O_{\IP}(n^{1-\beta})= o_{\IP}(n^{1/p}),
\end{align}
similar to (\ref{eq:polyX-X approx final}).
\subsection{Step IV}
This step also follows similar to that of Theorem \ref{thm:kmt-q-learn-triangular} by denoting $B_{s,n} =\eta_s \sum_{j=s}^n \mathbf{A}_{s-1}^j$ and observing
\begin{align}\label{eq:polyfinal-approx}
    \max_{1\leq t\leq n}|\sum_{s=1}^t(X_t - Y_t)|_{\infty} \leq \max_{1\leq t \leq n}\Omega_t |\sum_{s=1}^t(Z_s -\aleph_s)|_{\infty} = o_{\IP}(n^{1/p}\log n),
\end{align}
where
\[ \Omega_t = |B_1,t|_{\infty}+ \sum_{s=2}^t |B_{s,t} - B_{s-1,t}|_{\infty}, \]
and the $\log n$ comes from Proposition 2 of \cite{bonnerjee2025sharp}.
 Note that by construction, $(X_t^c)_{t\geq 1}\overset{d}{=} (X_t)_{t\geq 1}$. The proof is concluded by combining (\ref{eq:polyexp-approx}), (\ref{eq:polyX-X approx final}), (\ref{eq:polytwin-linear-approx}) and (\ref{eq:polyfinal-approx}). 
\end{proof}

\section{Appendix C: Derivation of Assumption \ref{ass:lipschitz}}\label{se:margin}
{
The key insight behind the Assumption 3.2 is ensuring that the optimal policy, and the optimal quality function is unique. In that regard, we consider the following simple margin condition, that can be more illuminating in the Q-learning context.
\begin{assumption}\label{ass:margin}
The greedy policy $\pi^\star(s) = \arg\max_a Q^\star(s,a)$ is unique for every state $s$, and satisfies
\[ \Delta = \min_s (Q^\star(s,\pi^\star(s)) - \max_{a \ne \pi^\star(s)} Q^\star(s,a )) > 0. \]
\end{assumption}
Under Assumption \ref{ass:margin}, we derive Assumption \ref{ass:lipschitz}. To that end, suppose $|\boldsymbol{Q} - \boldsymbol{Q}^\star|_{\infty} \leq \Delta/2$. Then, by definition of the greedy policy, $H^{\pi_Q} = H_{\pi^\star}$, and hence, Assumption \ref{ass:lipschitz} is trivially satisfied. On the other hand, if $|\boldsymbol{Q} - \boldsymbol{Q}^\star|_{\infty} > \Delta/2$, then from $|H^{\pi_Q} - H^{\pi^\star}|\leq 2$ it follows 
\[  |(H^{\pi_Q}-H^{\pi^\star})(\boldsymbol{Q}-\boldsymbol{Q}^\star)|_{\infty} \leq 2 |\boldsymbol{Q}-\boldsymbol{Q}^\star|_{\infty} \leq \frac{4}{\Delta} |\boldsymbol{Q}-\boldsymbol{Q}^\star|_{\infty}^2. \]
}
\section{Additional Experiments}
{
In this section, we work with a large discount factor $\gamma=0.99$, and consider the step-size choices polynomially decaying, \ldtz\ and \pdtz\ with $\nu=2,3$. Firstly, we consider $n=20000$ number of Q-learning iterations, and look at a special case of polynomially decaying step-size, \textit{viz.} the linearly decaying step size $\eta_t = 0.25/t$. Based on $B=500$ Monte Carlo repetitions, we plot empirical estimates of $\IE[|\bq_t - \bq^\star|_{\infty}]$ against $t\in [n]$ for the \ldtz step-size $\eta_t=0.25(1-t/n)$ and \pdtz\ step-size choices $\eta_t=0.25(1-t/n)^\nu$ with $\nu=2,3$, and compare it with empirical estimates of $\IE[|\bar{\bq}_t - \bq^\star|_{\infty}]$ for the linearly decaying step-size, where $\bar{\bq}_t=t^{-1}\sum_{s=1}^t \bq_t$ denotes the running Polyak-Ruppert average. 
\begin{figure}[h]
    \centering
    \includegraphics[width=0.5\linewidth]{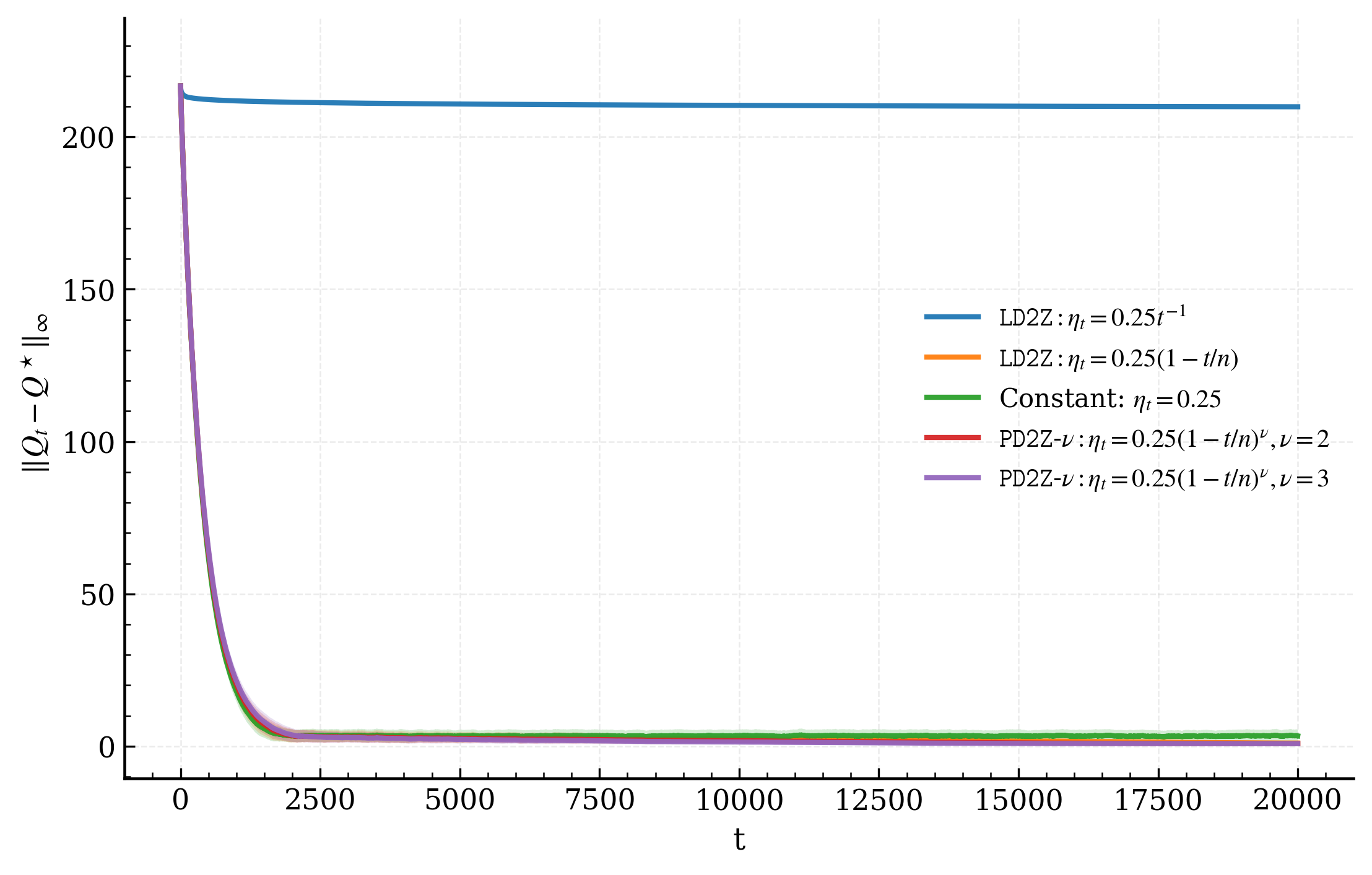}
    \caption{Comparison between different step-size choices.}
    \label{fig:placeholder}
\end{figure}
It can be seen in Figure \ref{fig:placeholder} that as per our intuition and previous results, neither the end-term nor the PR-avergaed iterates have converged even after $20000$ iterations for linearly decaying step-sizes; they will eventually converge, and will eventually obtain better asymptotic approximation error compared to LDTZ or PDTZ stepsize choices, but this asymptotic regime kicks in much, much later than is often realistically possible in many scenarios.
}
{
We can also replicate corresponding versions of Figure 2 for $\gamma=0.99$ with this particular setting, which we report below.
\begin{figure}[H]
    \centering
    % First figure
    \begin{minipage}{0.48\linewidth}
        \centering
        \includegraphics[width=\linewidth]{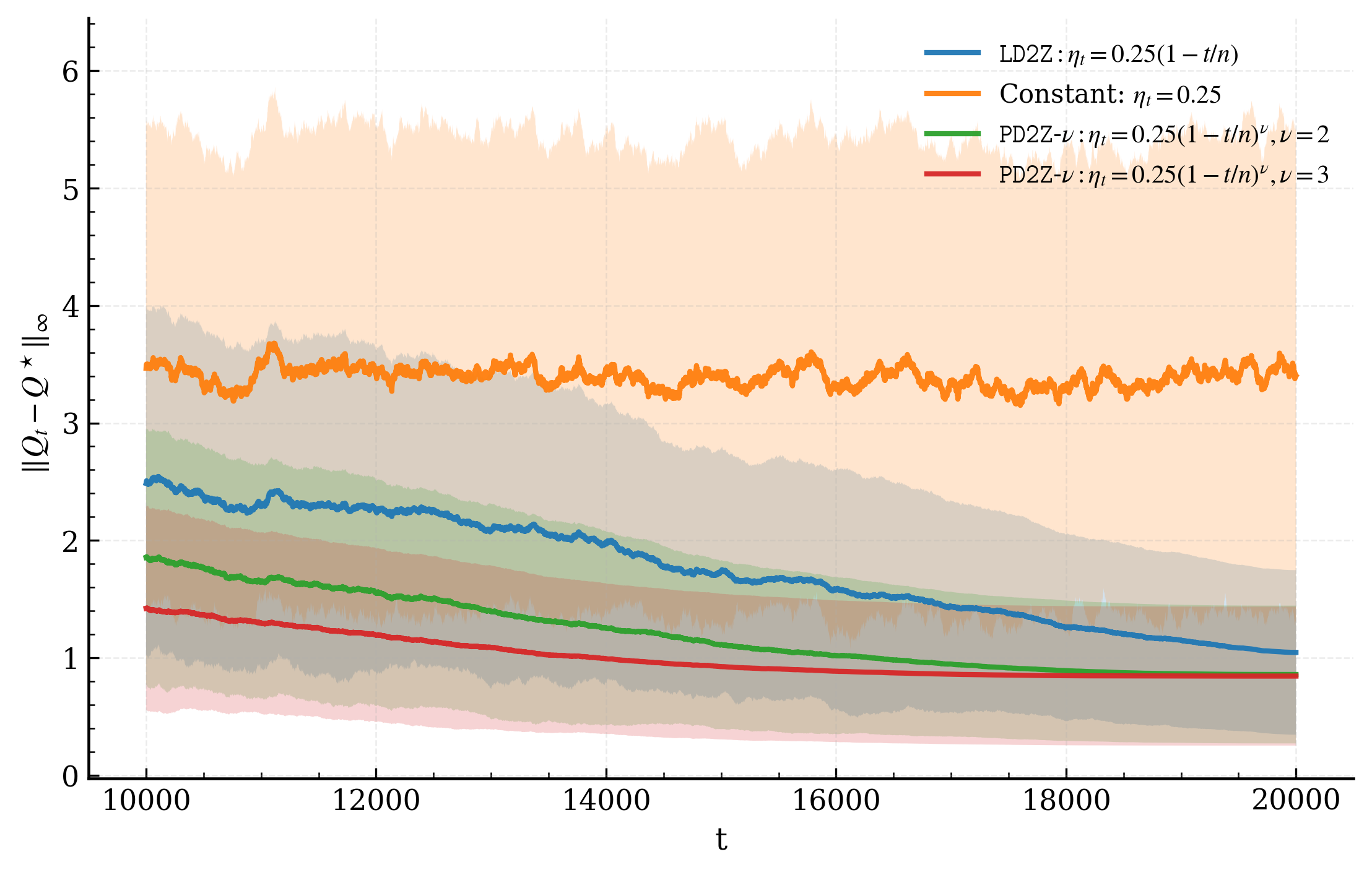}
    \end{minipage}\hfill
    % Second figure
    \begin{minipage}{0.48\linewidth}
        \centering
        \includegraphics[width=\linewidth]{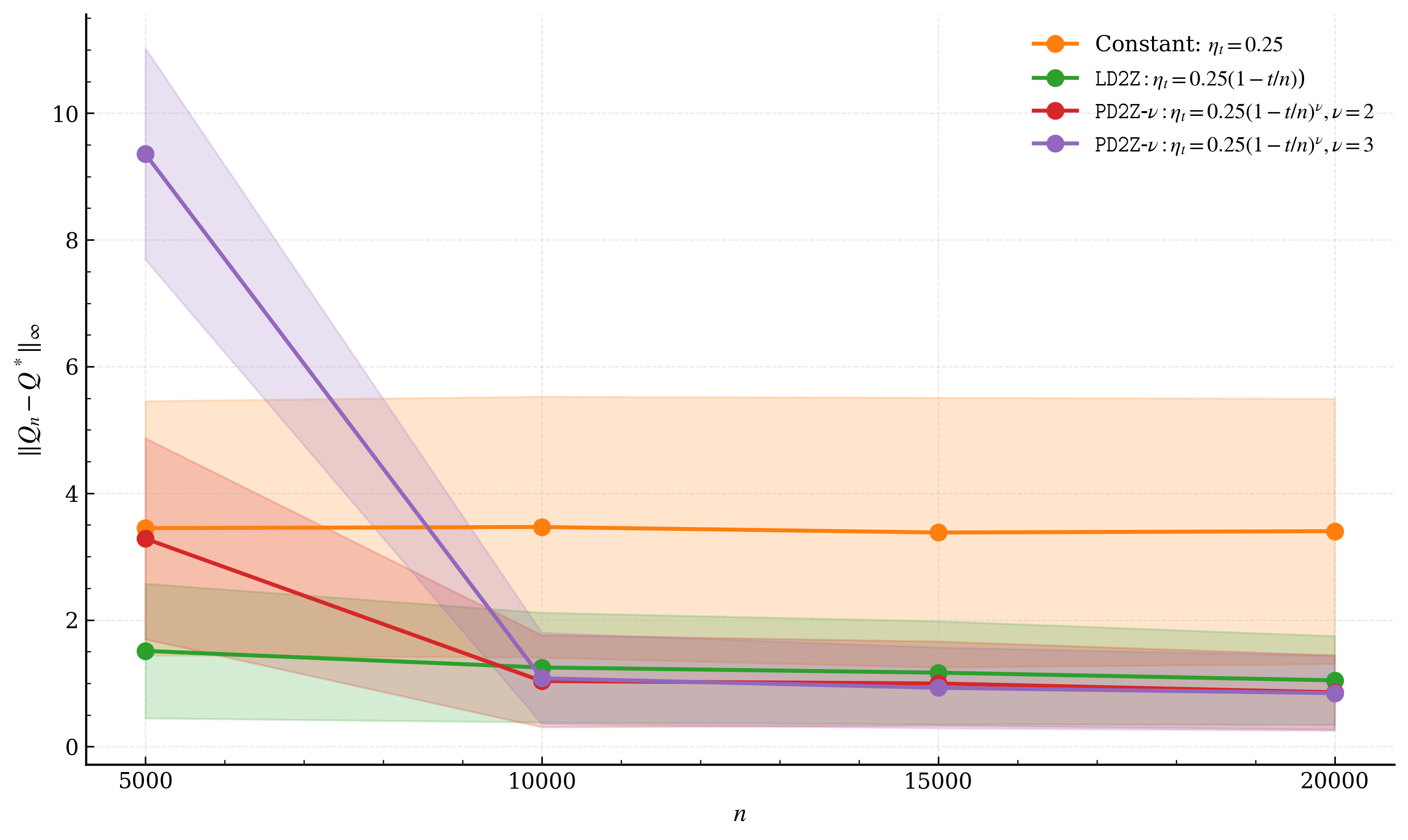}
    \end{minipage}
    \caption{Performance comparison between \ldtz, \pdtz\ with $\nu=2,3$ and constant learning schedules.}\label{fig:6}
\end{figure}
}
\subsection{Affect of learning rate constant}
{
To further validate the efficacy of our learning rate schedules, we consider the effect of leading constant $\eta$ in the performance of the Q-iterates. In the following, we consider our $4\times 4$ gridworld with discount $\gamma=0.99$, and the following step-sizes: polynomially decaying : $\eta_t=\eta t^{-0.55}$; constant: $\eta_t=\eta$; \ldtz: $\eta_t= \eta(1-t/n)$; \pdtz-2: $\eta_t= \eta(1-t/n)^2$; and \pdtz-3: $\eta_t= \eta(1-t/n)^3$. We vary $\eta \in \{0.1,\ldots, 0.9\}$. For each choice of $\eta$ and learning-rate, we run the Q-learning iterates for $T=20,000$ episodes, and report the sum of rewards per epsiodes averaged over $100$ initial episodes (for the \textit{initial phase}), and $1000$ final episodes (for the \textit{asymptotic phase}). The averaged total rewards are further averaged over $500$ Monte Carlo runs for stability. 
}
\begin{figure}[h]
    \centering
    \includegraphics[width=0.8\linewidth]{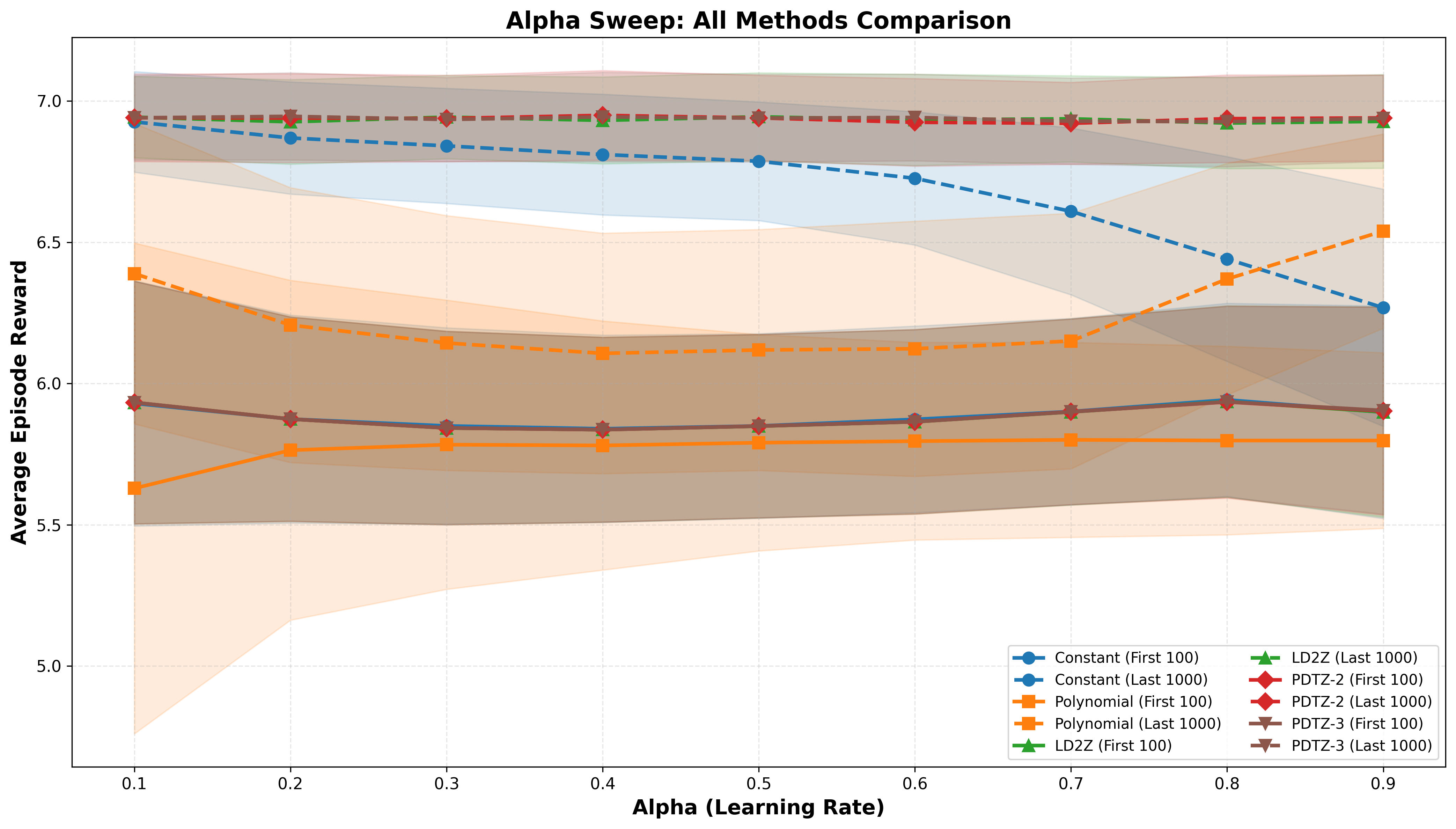}
    \caption{Total sum of rewards on an average reward for the initial phase and asymptotic phase for different learning rates and different $\eta$'s}
    \label{fig:sutton-barto}
\end{figure}

{
In Figure \ref{fig:sutton-barto}, the solid lines correspond to the initial phase, and the dashed lines correspond to the asymptotic phase. It is clear that the polynomially decaying step-size is least performing in the initial stage. Moreover, even after $50000$ episodes, its asymptotic phase hasn't kicked in. On the other hand, the fact that $Q$-learning constant learning rate does not converge, is also evident, as larger learning rate constant constant results in increasing bias. In comparison, both the \ldtz and \pdtz learning rates maintain a performance comparable to the constant learning rate in the initial phase, while providing convergence in the asymptotic stage.
}

\subsection{Additional simulations on central limit theory}
We investigate the asymptotic normality of $\bar{\bq}_n$. For $n=5000$ and $10,000$, we compute $\bq_{n,n} - \bq^\star$, and project them along $6$ randomly chosen directions $u \in \mathbb{S}^{d-1}$. For each random direction $u$, the empirical quantiles of $n^{1/4}u^\top(\bar{\bq}_n - \bq^\star)$ - generated based on $B=1000$ Monte-Carlo repetitions - are visualized in a QQ-plot against the corresponding quantiles from a standard normal distribution. The asymptotic normality is apparent from the QQ-plot being on a straight line. The accuracy of the scaling $n^{1/4}$ is also evident from the two QQ -plots, corresponding to $n=5000$ and $n=10,000$, being virtually identical. 

\begin{figure}[H]
    \centering
    \includegraphics[width=0.6\linewidth]{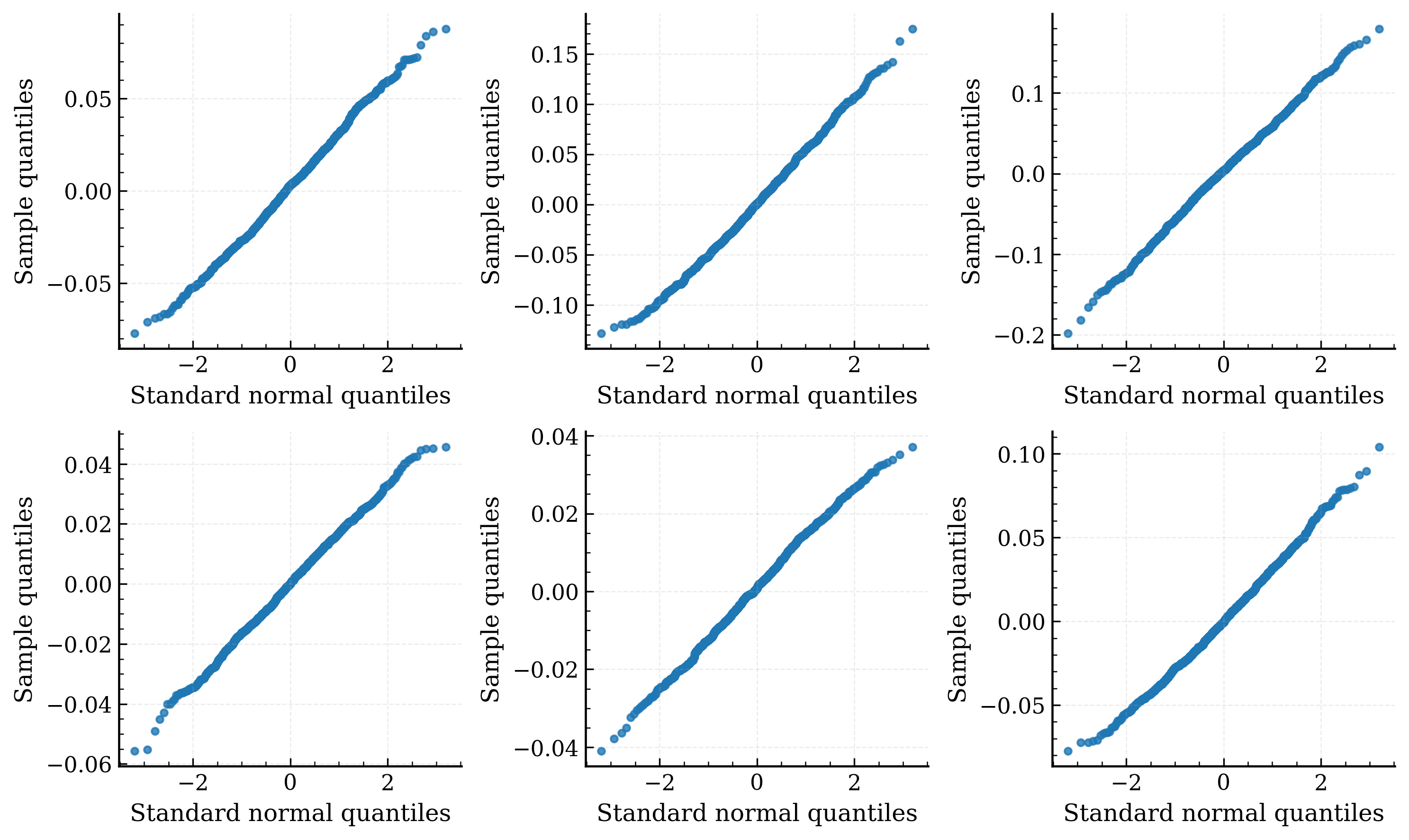}
    \caption{QQ-plots of $n^{1/4}u^\top(\Bar{\bq}_n - \bq^\star)$ for randomly generated unit vectors $u$ and $n=5000$.}
    \label{fig:clt0}
\end{figure}

\begin{figure}[H]
    \centering
    \includegraphics[width=0.8\linewidth]{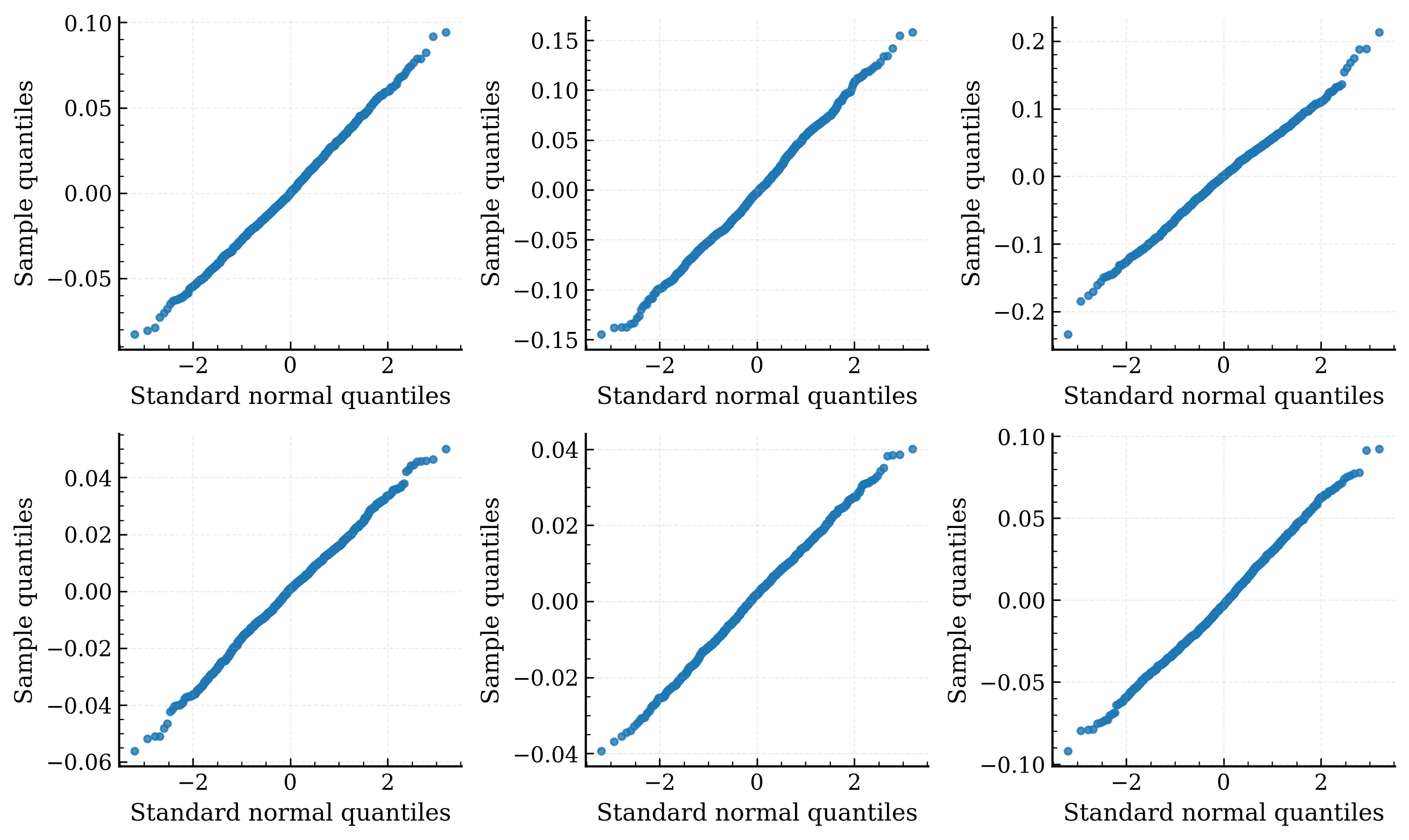}
    \caption{QQ-plots of $n^{1/4}u^\top(\Bar{\bq}_n - \bq^\star)$ for randomly generated unit vectors $u$ and $n=10000$.}
    \label{fig:clt}
\end{figure}

\section{Auxiliary Results}
\label{Section_Auxiliary}
In this section, we collect some key mathematical arguments that we have repeatedly used throughout our proofs. 
\begin{lemma} \label{lem:bad-integration}
    Let $\mathcal{A}_s^t = \prod_{j=s+1}^t (1-\eta_{j,n} c)$ for some small $c\in (0,1)$, with $\eta_{s,n}=\eta(1-\frac{s}{n})^\nu$, $\eta>0$, $\eta c<1$ and $\nu\geq 1$. Then for all $p \geq 1$, $t\in[n]$, it holds that 
    \begin{align*} 
        \sum_{s=1}^t \eta_{s,n}^p {\mathcal{A}}_s^t \leq \begin{cases}
             C_1(c, \nu, p) \eta_{t,n}^{p-1}, & t \leq  n - \frac{2}{(c\eta)^{\frac{1}{\nu+1}}} n^{\frac{\nu}{\nu+1}},\\
              C_2(c, \nu, p) n^{-\frac{\nu}{\nu+1} (p-1)}, & t >  n - \frac{2}{(c\eta)^{\frac{1}{\nu+1}}} n^{\frac{\nu}{\nu+1}},
        \end{cases} 
    \end{align*}
    where $C_1(c, \nu, p)$ and $C_2(c, \nu, p)$ are defined as in Theorem \ref{lemma:mse_triangular}.
\end{lemma}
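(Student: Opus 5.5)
We bound $\mathcal{A}_s^t\le \exp(-c\sum_{j=s+1}^t\eta_{j,n})$ and compare the sum with an integral. Work in the variable $u=n-s$ (distance to the end), so $\eta_{s,n}=\eta(u/n)^\nu$. Since $\eta_{j,n}$ is decreasing in $j$, we have $\sum_{j=s+1}^t\eta_{j,n}\ge \int_{s+1}^{t+1}\eta(1-x/n)^\nu\,dx$. With $F(x):=\frac{\eta n}{\nu+1}(1-x/n)^{\nu+1}$, this integral equals $F(s+1)-F(t+1)$. Thus
\[
\sum_{s=1}^t\eta_{s,n}^p\mathcal{A}_s^t\le \sum_{s=1}^t\eta_{s,n}^p\exp\big(-c(F(s+1)-F(t+1))\big).
\]
Monotonicity handles the shift $s\mapsto s+1$ at the cost of a factor $e^{c\eta}\le e$. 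This sum is in turn dominated by $\int \eta^p(1-x/n)^{\nu p}e^{-c(F(x)-F(t))}dx$ up to constants, since the summand is unimodal.

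\textbf{Transient regime} ($t\le n-\frac{2}{(c\eta)^{1/(\nu+1)}}n^{\nu/(\nu+1)}$). Set $w=n-t$, so $w\ge 2(c\eta)^{-1/(\nu+1)}n^{\nu/(\nu+1)}$. Split the range of $s$ into $n-s\le 2w$ and $n-s>2w$.

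On the near part, $\eta_{s,n}\le 2^\nu\eta_{t,n}$. Also, by the mean value theorem, $F(s)-F(t)\ge \eta_{t,n}(t-s)$. Hence the near part is at most
\[
2^{\nu p}\eta_{t,n}^p\sum_{k\ge0}e^{-c\eta_{t,n}k}\lesssim 2^{\nu p}\eta_{t,n}^{p-1}/c.
\]

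On the far part, the exponent is at least $c\frac{\eta n}{\nu+1}(u^{\nu+1}-w^{\nu+1})/n^{\nu+1}$ with $u=n-s\ge 2w$. Substituting $v=u/w$ turns this part into a Gamma-type integral $\int v^{\nu p}e^{-\kappa v^{\nu+1}}dv$. The condition on $w$ guarantees $\kappa=c\eta w^{\nu+1}/((\nu+1)n^\nu)\gtrsim 1$. The result is a contribution $\lesssim \eta_{t,n}^{p-1}\,2^{-p}\Gamma(\nu p+1)/c$, matching the shape of $C_1(c,\nu,p)$. The $2^{\nu(p+1)}$ prefactor comes from the doubling $\eta_{s,n}\le2^\nu\eta_{t,n}$ together with the $e^{c\eta}$ shift.

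\textbf{Convergence regime} ($w< 2(c\eta)^{-1/(\nu+1)}n^{\nu/(\nu+1)}$). Here $\eta_{t,n}$ is too small for the geometric-sum argument, so we bound everything by the full integral over $u\in[w,n]$. Write $e^{-c(F(s)-F(t))}\le e^{cF(t)}e^{-cF(s)}$. Since $w$ is below the threshold, $cF(t)=\frac{c\eta w^{\nu+1}}{(\nu+1)n^\nu}\le\frac{2^{\nu+1}}{\nu+1}$, which produces the factor $\exp(2^{\nu+1}/(\nu+1))$ in $C_2$. The remaining integral
\[
\int_0^\infty \eta^p(u/n)^{\nu p}\exp\Big(-\frac{c\eta u^{\nu+1}}{(\nu+1)n^\nu}\Big)du
\]
is evaluated via $r=\frac{c\eta u^{\nu+1}}{(\nu+1)n^\nu}$. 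It equals
\[
\eta^p n^{-\nu p}\big((\nu+1)n^\nu/(c\eta)\big)^{\frac{\nu p+1}{\nu+1}}\frac{\Gamma(\frac{\nu p+1}{\nu+1})}{\nu+1},
\]
which scales as $n^{-\nu p+\nu(\nu p+1)/(\nu+1)}=n^{-\frac{\nu}{\nu+1}(p-1)}$. This gives the claimed rate, with constants consistent with $C_2(c,\nu,p)$; the $4^{\nu p}$ absorbs the index shifts.

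\textbf{Main obstacle.} The hard part is the transient far-part estimate and the sum-to-integral comparison. The integrand $\eta_{s,n}^pe^{-cF(s)}$ is not monotone, so one must check unimodality, or bound a single maximal term separately by $\eta^p\lesssim\eta_{t,n}^{p-1}\cdot\eta_{t,n}$. One must also make sure the threshold constant $2$ in the regime boundary is exactly what makes $\kappa\ge 2^{\nu+1}/(\nu+1)$. That is what lets the far-part Gamma integral be bounded by $\eta_{t,n}^{p-1}$ with no extra $n$-dependence. I would first try the crude bound $v^{\nu+1}-1\ge v^{\nu+1}/2$ for $v\ge2$, and tune constants afterward.
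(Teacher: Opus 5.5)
Your argument is correct in substance. It coincides with the paper in one regime and takes a different route in the other.

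\textbf{Convergence regime.} This is the paper's Case~1. You use the integral lower bound on $\sum_{j=s+1}^t\eta_{j,n}$, the factor $e^{cF(t)}\le\exp(2^{\nu+1}/(\nu+1))$ supplied by the threshold, and the substitution producing $\Gamma(\frac{\nu p+1}{\nu+1})\,n^{-\frac{\nu}{\nu+1}(p-1)}$. Your unimodality remark justifies the sum-to-integral step more carefully than the paper does.

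\textbf{Transient regime.} The paper does not split the sum.
\begin{itemize}
\item It uses only the linear bound $\mathcal{A}_s^t\le\exp(-c\eta_{t+1,n}(t-s))$ for every $s$, and writes $\eta_{s,n}^p=\eta^p(\mathcal{J}(t)+k/n)^{\nu p}$ with $\mathcal{J}(u)=1-u/n$ and $k=t-s$.
\item It extends to all $k\ge0$ and compares with an integral via Lemma~\ref{lem:calc-lem}. This is a nondecreasing function against geometric weights, so no unimodality issue arises.
\item After $(x+y)^q\le 2^{q-1}(x^q+y^q)$ it obtains $\eta^p\,2^{\nu p-1}(1-e^{-c\eta_{t+1,n}})^{-1}\big(\mathcal{J}(t)^{\nu p}+\Gamma(\nu p+1)(nc\eta_{t+1,n})^{-\nu p}\big)$.
\item The threshold says exactly that $nc\eta_{t+1,n}\ge 2/\mathcal{J}(t)$. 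Equivalently, the decay length $1/(c\eta_{t+1,n})$ of the geometric weights is at most $(n-t)/2$.
\item Hence the Gamma term is at most $2^{-\nu p}\Gamma(\nu p+1)\mathcal{J}(t)^{\nu p}$. This is where the factor $2^{-p}\Gamma(\nu p+1)$ in $C_1$ comes from.
\end{itemize}
So the far part you single out as the main obstacle is not one. The linear bound already controls it, and the convexity of $F$ is never needed in this regime.

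\textbf{What each route buys.} Your split gives a sharper, superexponentially small far-part estimate. It costs a change of variables and a unimodal sum-to-integral comparison, and it yields different constants: $\Gamma(\frac{\nu p+1}{\nu+1})$-type rather than $\Gamma(\nu p+1)$, plus the $e^{c\eta}$ shift. Your claim that the far part ``matches the shape of $C_1$'' is therefore asserted, not derived. You prove the bound with some constant $C(\nu,p)/c$. If you want the literal $C_1$, the paper's route produces it, although the paper is itself loose with constants (its Case~1 ends with $2C_2$).

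\textbf{Two details in your far part need fixing.}
\begin{enumerate}
\item Make the Jacobian explicit. Let $g(v)=v^{\nu p}e^{-\kappa(v^{\nu+1}-1)}$. The far part is at most $e^{c\eta}\eta_{t,n}^p\big(w\int_2^\infty g(v)\,dv+2\max_{v\ge2}g(v)\big)$. The factor $w$ is absorbed through $w\,\eta_{t,n}=\eta w^{\nu+1}/n^{\nu}=(\nu+1)\kappa/c$. What you then need is $\sup_{\kappa\ge 2^{\nu+1}/(\nu+1)}\kappa\int_2^\infty g(v)\,dv<\infty$. This holds because $v^{\nu+1}-1\ge\tfrac34 v^{\nu+1}$ on $[2,\infty)$.
\item Bound the maximal term by $\eta_{t,n}^p\max_{v\ge2}g(v)\le C_{\nu,p}\,\eta\,\eta_{t,n}^{p-1}\le C_{\nu,p}\,\eta_{t,n}^{p-1}/c$, not by $\eta^p$ as you wrote. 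Near the threshold $\eta_{t,n}\asymp n^{-\nu/(\nu+1)}$, so $\eta^p$ is not $O(\eta_{t,n}^{p-1})$ when $p>1$.
\end{enumerate}
Also, the $e^{c\eta}$ shift is unnecessary in the near part, since $\sum_{j=s+1}^t\eta_{j,n}\ge(t-s)\eta_{t,n}$ directly.
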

\begin{proof}[Proof of Lemma \ref{lem:bad-integration}]
Our proof proceeds through a series of steps by first establishing a uniform bound on $\mathcal{A}_s^t$, and then carefully establishing control on $\sum_{s=1}^t \eta_{s,n}^p \mathcal{A}_s^t$ on a case-by-case basis. To that end, let $\mathcal{J}(u)=(1-u/n)$, $u\in [0,n]$. Observe that $u \mapsto \jcal(u)^\nu$ is a non-increasing function for any $\nu\geq 1$. Therefore, for any $s<t \in [n]$, it follows
\begin{align}
    \sum_{j=s+1}^t\eta_{j,n} \geq \eta\int_{s+1}^{t+1} \jcal(u)^\nu \ \text{d}u = \frac{\eta n}{\nu+1}(\jcal(s+1)^{\nu+1} - \jcal(t+1)^{\nu+1}) \geq \eta \jcal(t+1)^{\nu} (t-s), \label{eq:l-b}
\end{align}
where the final inequality in (\ref{eq:l-b}) follows from the non-increasing property of $\jcal$. Consequently, one can use (\ref{eq:l-b}) to derive that
\begin{align}
    \mathcal{A}_s^t \leq \exp(-c_3 \sum_{j=s+1}^t \eta_{j,n}) \leq \exp(-c_3 \eta \jcal(t+1)^\nu (t-s)). \label{eq: bound-on-Acal}
\end{align}
This completes the first step of our argument. Moving on, we use (\ref{eq: bound-on-Acal}) to derive sharp upper bounds on $\sum_{s=1}^t \eta_{s,n}^p \mathcal{A}_s^t$. This can be approached as follows.

\textbf{Case 1. $t >  n - \frac{2}{(c_3\eta)^{\frac{1}{\nu+1}}} n^{\frac{\nu}{\nu+1}}$.}
In this case, we proceed:
\begin{align}
    \sum_{s=1}^t \eta_{s,n}^p \mathcal{A}_s^t & \leq  \eta^p \sum_{s=1}^t \jcal(s)^{\nu p } \exp(-c_3\frac{\eta n}{\nu+1}(\jcal(s+1)^{\nu+1} - \jcal(t+1)^{\nu+1})) \nonumber\\
    &= \eta^p n^{-\nu p} \sum_{s=1}^t (n-s)^{\nu p} \exp\Big(-c_3 \eta \frac{n^{-\nu}}{\nu+1} ((n-s-1)^{\nu+1} - (n-t-1)^{\nu+1}) \Big) \nonumber\\
    & = \eta^p n^{-\nu p}\sum_{k=n-t}^{n-1} k^{\nu p} \exp\big( -c_3 \eta \frac{n^{-\nu}}{\nu+1}((k-1)^{\nu+1} - (n-t-1)^{\nu+1}) \big) \nonumber\\
    &\leq \eta^p n^{-\nu p} \int_{n-t-1}^{\infty} (u+1)^{\nu p}\exp\big( -c_3 \eta \frac{n^{-\nu}}{\nu+1}(u^{\nu+1} - (n-t-1)^{\nu+1}) \big) \ \text{d}u \nonumber\\ 
    &\leq \eta^p  4^{\nu p} n^{-\nu p} \exp(\frac{c_3\eta}{\nu+1} \frac{(n-t-1)^{\nu+1}}{n^\nu}) \int_0^\infty (u^{\nu p}+1)\exp(-c_3\eta \frac{n^{-\nu}}{\nu+1} u^{\nu+1}) \ \text{d}u \nonumber\\
    &\leq 2\eta^p  4^{\nu p} n^{-\nu p} \exp(\frac{2^{\nu+1}}{\nu+1})(\nu+1)^{(p-1){\frac{\nu}{\nu+1}}}(c_3 \eta)^{-\frac{\nu p+1}{\nu+1}} \Gamma(\frac{\nu p +1}{\nu+1}) n^{\frac{\nu}{\nu+1}(\nu p+1)} \label{eq:use-of-bound}\\
    &\leq 2\eta^p  4^{\nu p}  \exp(\frac{2^{\nu+1}}{\nu+1})(\nu+1)^{(p-1){\frac{\nu}{\nu+1}}}(c_3 \eta)^{-\frac{\nu p+1}{\nu+1}} \Gamma(\frac{\nu p +1}{\nu+1})  n^{-\frac{\nu}{\nu+1} (p-1)}, \label{eq:case-1}
\end{align}
where in (\ref{eq:use-of-bound}) we have invoked $n-t<\frac{2}{(c_3\eta)^{\frac{1}{\nu+1}}} n^{\frac{\nu}{\nu+1}}$. 

\textbf{Case 2: $t \leq  n - \frac{2}{(c_3\eta)^{\frac{1}{\nu+1}}} n^{\frac{\nu}{\nu+1}}$.}

First observe that,
\allowdisplaybreaks
\begin{align}
    \sum_{s=1}^t \eta_{s,n}^p \mathcal{A}_s^t & \leq  \eta^p \sum_{s=1}^t \jcal(s)^{\nu p } \exp(-c_3 \eta \jcal(t+1)^{\nu} (t-s)) \nonumber\\
    &\leq \eta^p \sum_{k=0}^{t-s} \jcal(t-k)^{\nu p } \exp(-c_3 \eta \jcal(t+1)^\nu k) \nonumber\\
    & \leq  \eta^p \sum_{k=0}^{\infty} \big(\jcal(t)+ \frac{k}{n}\big)^{\nu p }\exp(-c_3 \eta \jcal(t+1)^\nu k) \label{eq:jcal-expr} \\
    &\leq \eta^p \frac{c_3 \eta \jcal(t+1)^\nu}{1- \exp(-c_3 \eta \jcal(t+1)^\nu)} \int_0^{\infty} \big(\jcal(t)+ \frac{u}{n}\big)^{\nu p } \exp(-c_3 \eta \jcal(t+1)^\nu u )\ \text{d}u \label{eq:apploflemma} \\
    &\leq \eta^p \frac{2^{\nu p-1}}{1- \exp(-c_3 \eta \jcal(t+1)^\nu)}\Big
    (\jcal(t)^{\nu p} + \int_0^\infty \frac{v^{\nu p}}{ (c_3 \eta n \jcal(t+1)^\nu)^{p} } \exp(-v) \ \text{d}v \Big) \label{eq:2^p-ineq}\\
    &\leq \eta^p \frac{2^{\nu p-1}}{1- \exp(-c_3 \eta \jcal(t+1)^\nu)}\Big
    (\jcal(t)^{\nu p} + (c_3 \eta n \jcal(t+1)^\nu)^{- p}\Gamma(\nu p+1) \Big) \label{eq:final-behemoth},
\end{align}
where (\ref{eq:jcal-expr}) follows from noting $\jcal(t-k) = \jcal(t)+ \frac{k}{n}$; (\ref{eq:apploflemma}) derives from an application of Lemma \ref{lem:calc-lem}; (\ref{eq:2^p-ineq}) is obtained by the elementary inequality $(x+y)^q \leq 2^{q-1}(x^q+ y^q)$ for $q \geq 1$. Finally, in (\ref{eq:final-behemoth}), $\Gamma(\cdot)$ denotes the Gamma function. The two terms in (\ref{eq:final-behemoth}) following the leading constants are particularly interesting; the first term increases with $t$, and the second term decays with $t$. The interplay between these two terms will naturally lead to two regions on which the rates will be controlled case-by-case.

Now, recall that in this particular regime, it is immediate that $c_3 \eta n\jcal(t)^{\nu} \geq \frac{2^{\nu+1}}{\jcal(t)}$. Moreover, since $n$ is sufficiently large such that $\frac{2}{(c_3\eta)^{\frac{1}{\nu+1}}} n^{\frac{\nu}{nu+1}} > 2$, it follows that in this regime, $\jcal(t+1) \geq \jcal(t)/2$. Therefore, 
\[ \jcal(t)^{\nu p} + (c_3 \eta n \jcal(t+1)^\nu)^{- p}\Gamma(\nu p+1) \leq \jcal(t)^{\nu p} (1+ 2^{-p}\Gamma(\nu p+1)),\]
which, when plugged in (\ref{eq:final-behemoth}), implies that
\begin{align}
     \sum_{s=1}^t \eta_{s,n}^p \mathcal{A}_s^t &\leq  \eta^p \frac{2^{\nu p-1}}{1- \exp(-c_3 \eta \jcal(t+1)^\nu)} \jcal(t)^{\nu p} (1+ 2^{-p}\Gamma(\nu p+1)) \nonumber\\
     &\leq \frac{2^{\nu(p+1)} (1+2^{-p} \Gamma(\nu p +1))}{c_3} \eta_{t,n}^{p-1}, \label{eq:case-2}
\end{align}
where in the final inequality we have used $c_3\eta<1$ to deduce 
\[1- \exp(-c_3 \eta \jcal(t+1)^\nu) \geq \frac{c_3 \eta \jcal(t+1)^\nu}{2} \geq \frac{c_3 \eta \jcal(t)^\nu}{2^\nu}. \]

Finally, (\ref{eq:case-1}) and (\ref{eq:case-2}) completes the proof.
\end{proof}

\begin{lemma}\label{lem:calc-lem}
    Let $f:\R\to \R_{+}$ be a non-decreasing function and let $\kappa>0$ be a  constant such that $\sum_{n=0}^{\infty} f(n)\exp(-\kappa n)<\infty$. Then
    \[ \sum_{n=0}^{\infty} f(n)\exp(-\kappa n) \leq \frac{\kappa}{1-\exp(-\kappa)} \int_0^{\infty} f(u)\exp(-\kappa u)\ \text{d} u.  \]
\end{lemma}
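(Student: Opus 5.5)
The statement to prove is Lemma~\ref{lem:calc-lem}. The plan is a term-by-term comparison of the sum with the integral over unit intervals, exploiting the monotonicity of $f$. For each $k\ge 0$ consider the interval $[k,k+1]$. The natural first attempt is to bound $f(k)$ by $f(u)$ for $u\ge k$ and $e^{-\kappa k}$ by a multiple of $e^{-\kappa u}$. This gives
\[
\int_k^{k+1} f(u)e^{-\kappa u}\,\text{d}u \;\ge\; f(k)\int_k^{k+1} e^{-\kappa u}\,\text{d}u \;=\; f(k)e^{-\kappa k}\,\frac{1-e^{-\kappa}}{\kappa}.
\]
Rearranging yields
\[
f(k)e^{-\kappa k} \le \frac{\kappa}{1-e^{-\kappa}}\int_k^{k+1} f(u)e^{-\kappa u}\,\text{d}u.
\]

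Summing over $k\ge 0$ and using that the intervals $[k,k+1]$ tile $[0,\infty)$ with overlaps only at endpoints (measure zero), we obtain exactly the claimed inequality. The summability hypothesis only ensures the left side is finite. If the integral diverges, the inequality is trivial. Nonnegativity of $f$ justifies the termwise summation by monotone convergence, so no interchange issues arise.

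The only point to double-check is the direction of monotonicity. Because $f$ is non-decreasing, $f(u)\ge f(k)$ on $[k,k+1]$, which is what the lower bound for the integral requires. No use of the exponential's decay beyond the explicit integral $\int_k^{k+1}e^{-\kappa u}\,\text{d}u=e^{-\kappa k}(1-e^{-\kappa})/\kappa$ is needed. There is no real obstacle; the argument is a one-line Riemann-sum comparison.
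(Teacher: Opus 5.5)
Your proof is correct and follows the paper's argument: the identity $e^{-\kappa k}=\frac{\kappa}{1-e^{-\kappa}}\int_k^{k+1}e^{-\kappa u}\,\text{d}u$ combined with $f(u)\ge f(k)$ on $[k,k+1]$. Your write-up is in fact more complete than the paper's. The paper's displayed step bounds each individual term by the integral over all of $[0,\infty)$ and never sums. You instead keep the integral over $[k,k+1]$ and sum over $k$ using monotone convergence, which is the step the lemma actually needs.
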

\begin{proof}
    Since $f$ is non-decreasing, hence for every $n\in \N$,
    \[ f(n) \exp(-\kappa n) =\frac{\kappa}{1-\exp(-\kappa)} f(n) \int_{n}^{n+1} \exp(-\kappa u) \ \text{d} u \leq \frac{\kappa}{1-\exp(-\kappa)} \int_0^{\infty} f(u)\exp(-\kappa u)\ \text{d} u.   \]
\end{proof}

\end{document}